\documentclass[11pt,letterpaper]{amsart}
\usepackage[utf8]{inputenc}
\usepackage{graphicx} 
\usepackage{amsmath,amssymb,amsthm, mathtools, tikz-cd, verbatim}
\usepackage{relsize}
\usepackage{biblatex}
\usepackage{geometry}
\usepackage{listings}
\usepackage{xcolor}
\usepackage{subcaption}

\definecolor{codegreen}{rgb}{0,0.6,0}
\definecolor{codegray}{rgb}{0.5,0.5,0.5}
\definecolor{codepurple}{rgb}{0.58,0,0.82}
\definecolor{backcolour}{rgb}{0.95,0.95,0.92}

\lstdefinestyle{mystyle}{
  backgroundcolor=\color{backcolour}, commentstyle=\color{codegreen},
  keywordstyle=\color{magenta},
  numberstyle=\tiny\color{codegray},
  stringstyle=\color{codepurple},
  basicstyle=\ttfamily\footnotesize,
  breakatwhitespace=false,         
  breaklines=true,                 
  captionpos=b,                    
  keepspaces=true,                 
  numbers=left,                    
  numbersep=5pt,                  
  showspaces=false,                
  showstringspaces=false,
  showtabs=false,                  
  tabsize=2
}

\title{A Complete Symmetry Classification of Shallow ReLU Networks}
\author{Pranavkrishnan Ramakrishnan}
\date{April 2026}

\begin{document}

\maketitle

\newcommand{\Null}{\textup{Null }}
\newcommand{\Leftnull}{\textup{Leftnull }}
\newcommand{\Col}{\textup{Col }}
\newcommand{\Row}{\textup{Row }}
\newcommand{\rank}{\textup{rank }}
\newcommand{\im}{\textup{im }}
\newcommand{\Span}{\textup{Span }}

\newcommand{\GL}{\textup{GL}}
\newcommand{\Cl}{\textup{Cl}}
\newcommand{\Stab}{\textup{Stab}}
\newcommand{\Fix}{\textup{Fix}}
\newcommand{\id}{\textup{id}}
\newcommand{\Orb}{\textup{Orb}}
\newcommand{\Inn}{\textup{Inn}}
\newcommand{\Aut}{\textup{Aut}}
\newcommand{\lcm}{\textup{lcm}}
\newcommand{\Syl}{\textup{Syl}}
\newcommand{\Hom}{\textup{Hom}}
\newcommand{\Sym}{\textup{Sym}}
\newcommand{\comb}[2]{\begin{pmatrix} #1 \\ #2 \end{pmatrix}}
\newcommand{\vtr}[2]{(\begin{smallmatrix} #1 \\ #2 \end{smallmatrix})}
\newcommand{\Vtr}[2]{\begin{bmatrix} #1 \\ #2 \end{bmatrix}}
\newcommand{\tor}{\textup{tor}}
\newcommand{\block}[1]{
  \underbrace{0 \; \cdots\; 0}_{#1}
}
\newcommand{\Gal}{\textup{Gal}}
\newcommand{\paramspace}[2]{\Omega_{(#1, \ldots, #2)}}
\newcommand{\diaggroup}[2]{\mathcal{H}_{(#1, \ldots, #2)}}
\newcommand{\realizationmap}[2]{\rho_{(#1, \ldots, #2)}}

\newtheorem{theorem}{Theorem}
\newtheorem{lemma}{Lemma}
\newtheorem{corollary}[lemma]{Corollary}
\newtheorem{proposition}[lemma]{Proposition}
\newtheorem{defin}{Definition}
\newtheorem{example}{Example}
\newtheorem{definlem}[defin]{Definition-Lemma}
\newtheorem{remark}{Remark}
\begin{abstract}
    Parameter space is not function space for neural network architectures. This fact, investigated as early as the 1990s under terms such as ``reverse engineering," or ``parameter identifiability", has led to the natural question of parameter space symmetries\textemdash the study of distinct parameters in neural architectures which realize the same function. Indeed, the quotient space obtained by identifying parameters giving rise to the same function, called the \textit{neuromanifold}, has been shown in some cases to have rich geometric properties, impacting optimization dynamics. Thus far, techniques towards complete classifications have required the analyticity of the activation function, notably excising the important case of ReLU. Here, in contrast, we exploit the non-differentiability of the ReLU activation to provide a complete classification of the symmetries in the shallow case.
\end{abstract}
\section{Introduction}
\par Let $H_n\leq \GL_n(\mathbb{R})$ denote the subgroup generated by permutations and diagonal matrices with positive coefficients. This subgroup constitutes the well understood symmetries on the parameter space of a shallow feed forward ReLU neural network, cf. \cite{zhao2023symmetriesflatminimaconserved, godfrey2023symmetriesdeeplearningmodels}. That is to say, given parameter space 
$$
\Omega_{(m,n,k)} = M_{k\times n}(\mathbb{R})\times M_{n\times m}(\mathbb{R})\times \mathbb{R}^{n+k}
$$
of architecture $(m,n,k)$, we may define the \textit{realization map} as follows\footnote{The notation used for the parameters in $\Omega_{(m,n,1)}$ is, of course, non-standard. A more typical notation in the literature would be taking a parameter $\theta$ as $(A_2, b_2, A_1, b_1)$, where $\rho_{(m,n,k)}(\theta)=A_2\sigma(A_1x+b_1)+b_2$. Though this notation is expedient in view of deeper neural networks, since we are considering just the shallow case we opt for the non-standard notation which better visually aligns with realized function.}
\begin{center}
    \begin{tikzcd}
\rho_{(m,n,k)}:\Omega_{(m,n,k)} \arrow[r]                         & {C^0(\mathbb{R}^{m}, \mathbb{R}^{k})}                     \\
{(M, A, b, c)} \arrow[r, maps to] & M\sigma (Ax+b)+c
\end{tikzcd}
\end{center}
where $\sigma$ denotes the ReLU activation function, which is defined as 
$$
\sigma\left(\begin{bmatrix} x_1 \\ \vdots \\ x_{n}\end{bmatrix}\right) = \begin{bmatrix} \max\{0, x_1\} \\ \vdots \\ \max\{0, x_{n}\}\end{bmatrix}.
$$
We see that $H_n$ acts on $\Omega_{(m,n,k)}$ by
$$
h\cdot(M, A, b, c) = (Mh^{-1}, hA, hb, c)
$$
such that $\rho_{(m, n,k)}$ is a $H_n$-invariant map. Here by symmetry, we mean not only a group action which preserves the value of a given loss function as in \cite[Def. 3.1]{zhao2023symmetriesflatminimaconserved}, but any equivalence of parameters which is invariant under the realization map; this is notably a stronger condition than the one just mentioned, and to distinguish the terms functional loss symmetry \cite[Def. 2.6]{zhao2025symmetryneuralnetworkparameter} and functional neural symmetry \cite[Def. 2.4]{zhao2025symmetryneuralnetworkparameter} may be used.\footnote{With the caveat that symmetry to us does not mean that the equivalence derives from a group action. In the language of \cite{godfrey2023symmetriesdeeplearningmodels}, such symmetries which derive from a group action as called symmetry via intertwiner group.} In particular, this means that for any $\theta\in \Omega_{(m,n,k)}$, the fibre $\rho_{(m,n,k)}^{-1}(\rho_{(m,n,k)}(\theta))$, which by abuse of notation we shall now denote $\rho^{-1}_{(m,n,k)}(\theta)$, admits a map from $H_n$ by $h\mapsto h\cdot \theta$. We may naturally ask if $\rho_{(m,n,k)}^{-1}(\theta)$ is ``larger" than $H_n$. In fact, as shown in Lemmas \ref{lem:weakcounterexample} and \ref{lem:strongcounterexample}, the fibre $\rho_{(m,n,k)}^{-1}(\theta)$ is indeed not merely $H_n\cdot \theta$ for certain choices of $\theta$, but contains additional parameters not in the orbit. Called ``hidden symmetries" in \cite[Def. F.4]{pmlr-v202-grigsby23a}, the study of such additional symmetries has been a topic of interest in the mathematics of machine learning. Even as early as 1988, \cite{Chua1988CanonicalPR} investigates the idea of "canonical representations" of piecewise linear functions, and in the 1993, the neural network symmetry question was investigated in \cite{NIPS1993_e49b8b40_Fefferman_Market} and \cite{ALBERTINI1993975}, though with sigmoidal activations in mind. Likewise, a number of works in the past years study the case of polynomial activations, as in \cite{Finkel_2025, kileel2019expressivepowerdeeppolynomial}. As for the case of ReLU activations, there has been progress in restricting the picture to a neighbourhood to a given parameter \cite{Elisenda_Grigsby_2025, Stock_2022, bonapellissier2022localidentifiabilitydeeprelu} or in considering certain types of symmetries \cite{PetzkaHenning2Layer, zhao2023symmetriesflatminimaconserved}. There has also been particular progress in the case of deep ReLU neural networks, as in \cite{pmlr-v202-grigsby23a, phuong2020functional, bonapellissier2026geometryinducedregularizationdeeprelu}. Perhaps the most general statements thus far are in \cite{pmlr-v119-rolnick20a, zhang2026completeidentificationdeeprelu}, which provides general statements for parameters of deep neural networks, though there is also the very recent work investigating shallow cases in \cite{grillo2026relunetworksadmitidentifiable, gegenfurtner2026symmetriesthreelayerrelunetworks}.
\par The condition of high depth or polynomial activation belies a certain reliance of probabilistic or geometric techniques in the methodology, even epistemology, of the work, with a notable exception of \cite{zhang2026completeidentificationdeeprelu}, which uses techniques from Łukasiewicz logic. Here, however, we take an ostensibly algebraic approach to the case of shallow ReLU neural networks, which proves itself fruitful in the following main result:
\begin{theorem}\label{thm:maintheorem}
    The set of parameters $\theta\in \Omega_{(m,n,k)}$ where the fibre $ \rho_{(m,n,k)}^{-1}(\theta)$ is diffeomorphic to the Lie group $H_{n}$ is dense. 
\end{theorem}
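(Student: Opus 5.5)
We identify an explicit open dense ``generic'' subset $U\subset \Omega_{(m,n,k)}$ and prove that for $\theta\in U$ the fibre $\rho^{-1}_{(m,n,k)}(\theta)$ equals the orbit $H_n\cdot\theta$, with $h\mapsto h\cdot\theta$ injective. The candidate $U$ consists of $\theta=(M,A,b,c)$ such that:
\begin{itemize}
\item[(i)] every row $(a_i,b_i)$ of $[A\mid b]$ has $a_i\neq 0$;
\item[(ii)] the affine hyperplanes $\{a_i\cdot x+b_i=0\}\subset\mathbb{R}^m$ are pairwise distinct, i.e.\ the rows $(a_i,b_i)$ are pairwise non-proportional;
\item[(iii)] every column $M_{\cdot i}$ of $M$ is nonzero;
\item[(iv)] a linear-independence condition prevents purely linear contributions from cancelling: the vectors $(a_i,b_i)$ together with $(0,1)$, or at least a suitable subfamily, are linearly independent modulo sign.
\end{itemize}
Each of (i)--(iii) is the complement of a proper algebraic subset, so it is open dense. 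I would first check whether (iv) is needed at all, since for $n>m+1$ it cannot hold literally. I expect a weaker condition to suffice, namely that no two rows are negatively proportional. Density of $U$ then gives the theorem.

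\emph{Step 1: recovering the hyperplanes.} The realized map $f=M\sigma(Ax+b)+c$ is piecewise affine. Across the hyperplane $P_i=\{a_i\cdot x+b_i=0\}$ its gradient jumps by $M_{\cdot i}\,a_i^{T}$, up to sign depending on the side. By (ii) and (iii), a generic point of $P_i$ lies on no other $P_j$, and there the jump is the nonzero rank-one matrix $M_{\cdot i}a_i^T$. So the non-differentiability locus of $f$ is exactly $\bigcup_i P_i$; this is where the non-smoothness of ReLU is exploited.

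Now let $\theta'=(M',A',b',c')$ be any parameter in the fibre. Its non-differentiability locus must equal the same union of $n$ distinct hyperplanes. Since $\theta'$ also has exactly $n$ neurons, each $P_i$ must be accounted for by $\theta'$ neurons. The difficulty is that $\theta'$ is arbitrary: it may have degenerate neurons (zero column $M'_{\cdot j}$, zero $a'_j$) or several neurons sharing a hyperplane whose jumps cancel. A counting argument handles this. Each of the $n$ hyperplanes needs at least one neuron of $\theta'$ with that hyperplane and a nonzero contribution, and there are only $n$ neurons. Hence there is a bijection, i.e.\ a permutation $\pi$, with $(a'_{\pi(i)},b'_{\pi(i)})=\lambda_i(a_i,b_i)$ for some $\lambda_i\neq0$. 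Comparing jumps then gives $|\lambda_i|\,M'_{\cdot\pi(i)}=M_{\cdot i}$.

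\emph{Step 2: excluding negative scalars.} If $\lambda_i<0$, use the identity $\sigma(-t)=\sigma(t)-t$. The function $f$ then differs from the one obtained with positive scalars by an affine term $\sum_{i:\lambda_i<0} M_{\cdot i}(a_i\cdot x+b_i)$, which has to be absorbed by $c'$. Absorption requires the linear part $\sum_{i:\lambda_i<0} M_{\cdot i}a_i^T$ to vanish. This is the main obstacle. For generic $\theta$ the matrices $M_{\cdot i}a_i^T$ satisfy no relation $\sum_{i\in S}M_{\cdot i}a_i^T=0$ over nonempty $S$. Each such relation is a proper algebraic condition, and there are finitely many subsets $S$, so $U$ stays open dense after adding this condition in place of (iv). It remains to check that for each nonempty $S$ the relation really is proper; it fails already for a single $i$ because $M_{\cdot i}\neq0\neq a_i$. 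With this, all $\lambda_i>0$, so $\theta'=h\cdot\theta$ with $h$ the positive diagonal matrix composed with $\pi$. Finally $c'=c$, by evaluating $f$ at a point.

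\emph{Step 3: identifying the fibre with $H_n$.} The orbit map $H_n\to\rho^{-1}(\theta)$ is injective for $\theta\in U$, because distinct rows force $h=\id$ whenever $h\cdot\theta=\theta$. It is surjective by Steps 1--2. As an injective immersion of the Lie group $H_n$, a finite disjoint union of copies of $\mathbb{R}_{>0}^n$, onto the fibre, with inverse read off continuously from the rows of $A$, it is a diffeomorphism onto the fibre with its subspace structure.
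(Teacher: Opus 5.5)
Your proof is correct, and it reaches the theorem by a genuinely different, more direct route than the paper. The paper first reduces to $k=1$ using the fibre-product identity \eqref{eqn:fibreequality}, Lemma \ref{lem:cattheorystatement} and Corollary \ref{cor:StabilizerAbAndFibre}. For $k=1$ it builds the minimal form and the $0$-factor reduction, proves Proposition \ref{thm:nontrivialzeroparam} by restricting to lines and detecting kinks, and applies it to $(\theta\ominus\theta')_r$. This shows the only possible discrepancy is a signed sum $\sum_l\gamma(l)(a_{i_l}^Tx+b_{i_l})$, which the hypothesis $\sum_i\beta(i)v_ia_i\neq 0$ for $\beta\in\{0,\pm1\}^n\setminus\{0\}$ in Proposition \ref{prop:mn1genericity} excludes. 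The paper then invokes Lemma \ref{lemma:LeeLemmaGSpace} to get the diffeomorphism.

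You instead handle all $k$ at once:
\begin{itemize}
\item the kink locus in $\mathbb{R}^m$ recovers the hyperplanes;
\item counting forces a bijection of neurons, so $\theta'$ has no degenerate neurons;
\item the rank-one jumps give $M'_{\cdot\pi(i)}=M_{\cdot i}/|\lambda_i|$;
\item the identity $\sigma(-t)=\sigma(t)-t$ shows that flipping signs on a set $S$ changes $f$ by exactly $-\sum_{i\in S}M_{\cdot i}(a_i\cdot x+b_i)$, which your subset-sum condition rules out.
\end{itemize}
This gives a short, self-contained proof with a larger explicit generic set. You need only subsets rather than $\{0,\pm1\}$-combinations, and only nonzero columns of $M$; the paper's pullback along every $\pi_i$ forces every entry of $M$ to be nonzero. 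Your Step 3 embedding argument is also more explicit than the paper's one-line appeal to the homogeneous-space theorem. In exchange, the paper's machinery classifies \emph{all} fibres (Theorems \ref{thm:symmetryclassificationk=1} and \ref{thm:symmetryclassification}), while your argument says nothing about parameters off the generic set.

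Two small clean-ups. First, your tentative guess that forbidding negatively proportional rows could replace (iv) is false. The parameter $\theta_1$ in the introduction has pairwise non-proportional rows, yet it shares its fibre with $\theta_2$ because $\sum_i v_ia_i=0$. Your Step 2 already substitutes the right condition, so this does not affect the final argument. Second, your single-index remark does not show that $\{\sum_{i\in S}M_{\cdot i}a_i^T=0\}$ is proper for $|S|\ge 2$. It suffices to exhibit one point off this set, e.g.\ take all $M_{\cdot i}a_i^T$ equal to the same nonzero matrix.
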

\par Our method is to look at the case when $k=1$ and build up to the general case in a natural way. The proof in the $k=1$ requires the definition of a certain \textit{minimal form} (Definition \ref{def:minimalform}) associated to certain equivalence classes of our parameters, through which we are given sufficient data about the symmetries of our parameters to determine the behaviour of their fibres, and which is also computable via a program as given in Section \ref{section:minimalformcode} (it also bears mention that the related \textit{reduced form} bears some similarity to the reduced ReLU representation as defined in \cite{fu2025toricgeometryreluneural}.) Indeed, this formation of a minimal form gives us not only information on a dense subset of $\Omega_{(m,n,1)}$ (namely the one prescribed in Theorem \ref{thm:maintheorem},) but also gives us its full symmetry classification, by which we also get the full symmetry classification of $\Omega_{(m,n,k)}$ \textemdash this is the content of Theorems \ref{thm:symmetryclassificationk=1} and \ref{thm:symmetryclassification}. The heart of this classification is Proposition \ref{thm:nontrivialzeroparam}, which aims to provide conditions for when a certain type of parameter, minimal parameters with no 0-factors (Definition \ref{def:0factor}), realize a smooth function, that is to say an affine linear function, with an eye towards the 0 function. The proof of this proposition, which relies on checking everywhere differentiability on the domain, speaks to the utility of considering ReLU, which contains a point of non-differentiability at 0. We get as a result that beyond any of the immediate symmetries listed in Definition \ref{def:minimalform}, there remains only one class of symmetries in the case of shallow ReLU networks, which derives from the relationship that $\sigma(x)-\sigma(-x)=x$ (note that this is essentially the only nontrivial relation with the ReLU activation function which produces a smooth function.) In fact, the aforementioned immediate symmetries are not unique to the ReLU case, but are one example of symmetries prevalent for a broad class of neural networks. There is likely much to be said about this and the applications of these methods to a more general neural network, but for now we shall resign ourselves to Remark \ref{remark:generalizing}. 
\par We also prove a somewhat tangential result in classifying the possible stabilizers of a parameter under the $H_n$ action, which we call $\Stab_{H_n}\theta$. Specifically, we get the following result:
\begin{theorem}\label{thm:mainstabtheorem}
    For any given parameter $\theta\in\Omega_{(m,n,k)}$
    $$
    \Stab_{H_n} \theta \simeq S \times H_{n'}
    $$
    where $n'\leq n$ and $S\subseteq S_{n-n'}$ is a subgroup generated by at most $\comb{n-n'}{2}$ two cycles.
\end{theorem}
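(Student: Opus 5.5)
The plan is to compute the stabilizer directly in coordinates. Write every $h\in H_n$ as $h e_i = d_i e_{\pi(i)}$ with $\pi\in S_n$ and $d_i>0$, and attach to each hidden neuron $i$ its incoming data $w_i=(A_i,b_i)\in\mathbb{R}^{m+1}$ (the $i$-th row of $A$ together with $b_i$) and its outgoing data $M^i\in\mathbb{R}^k$ (the $i$-th column of $M$). Since $c$ is untouched, the conditions $hA=A$, $hb=b$, $Mh^{-1}=M$ become, neuron by neuron,
$$
w_{\pi(i)} = d_i\, w_i, \qquad M^{\pi(i)} = d_i^{-1} M^i \qquad \text{for all } i .
$$
The whole proof reduces to analysing these equations.

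\textbf{Step 1: split off the dead neurons.} Let $Z$ be the set of neurons with $w_i=0$ and $M^i=0$, and set $n'=|Z|$. Scaling by a positive number preserves being zero and preserves being nonzero. Hence any $\pi$ in the stabilizer maps $Z$ to $Z$ and its complement $Z^c$ to $Z^c$. On $Z$ the equations are vacuous, so $\pi|_Z$ and $(d_i)_{i\in Z}$ are arbitrary. Conversely, any such choice on $Z$, combined with the identity on $Z^c$, lies in the stabilizer. This exhibits the stabilizer as an internal direct product of a copy of $H_{n'}$ (acting on the coordinates in $Z$) and the group $G$ of stabilizing elements supported on $Z^c$. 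Both factors commute because they act on complementary coordinate blocks.

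\textbf{Step 2: identify $G$.} On $Z^c$, define $i\sim j$ if there is $\lambda>0$ with $w_j=\lambda w_i$ and $M^j=\lambda^{-1}M^i$. Such a $\lambda=\lambda_{ij}$ is unique: if $w_i\neq 0$ it is read off from $w_i,w_j$, and otherwise $M^i\neq 0$ and it is read off from $M^i,M^j$. Uniqueness gives transitivity with the cocycle identity $\lambda_{ij}\lambda_{jk}=\lambda_{ik}$, so $\sim$ is an equivalence relation.

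The displayed equations say exactly that $i\sim\pi(i)$ and $d_i=\lambda_{i\pi(i)}$. So an element of $G$ is determined by its permutation $\pi|_{Z^c}$, which must preserve every $\sim$-class. Conversely, every class-preserving permutation, together with the forced scalars, lies in $G$.

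The map $G\to S_{Z^c}$, $h\mapsto\pi|_{Z^c}$, is then an injective homomorphism with image the Young subgroup $S=\prod_{C} S_C$, the product taken over the $\sim$-classes $C$. That the forced scalars compose correctly under multiplication is precisely the cocycle identity.

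\textbf{Step 3: conclude.} A Young subgroup is generated by the transpositions $(i\,j)$ with $i\sim j$. There are at most $\binom{n-n'}{2}$ such transpositions, which gives $\Stab_{H_n}\theta\simeq S\times H_{n'}$ as claimed.

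The main point needing care is Step 2. Here the stabilizer is not literally a subgroup of permutation matrices: it is the graph of $\pi\mapsto(\lambda_{i\pi(i)})_i$. To justify the isomorphism rather than an equality, I would verify the cocycle identity explicitly, including the degenerate case $w_i=0$, $M^i\neq0$. The rest is bookkeeping.
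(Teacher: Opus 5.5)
Your proof is correct, and it takes a more direct route than the paper.

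\textbf{How the paper argues.} It first reduces to $k=1$ via $\Stab_{H_n}\theta=\bigcap_i\Stab_{H_n}\pi_i(\theta)$ (Lemma \ref{lem:intersectionstab}). For $k=1$ it splits a stabilizing element $lp$ into its disjoint-cycle factors, each of which again stabilizes. It then writes each cycle as a product of scaled transpositions $\textup{diag}(\lambda_i,\lambda_j)(i,j)$ through a base point, and uses uniqueness of the diagonal part (item (3) of the proposition preceding Corollary \ref{cor:stabmn1statement}) to match the two. Finally it intersects the groups $\langle\mathcal{S}_i\rangle\times H_{n_i'}$ over the $k$ outputs, a step it states without argument.

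\textbf{What your route buys.} You treat the whole outgoing column $M^i\in\mathbb{R}^k$ at once, so no intersection argument is needed. Your relation $\sim$ identifies the non-dead part explicitly as the Young subgroup $\prod_C S_C$, which is more information than a generating set. It also correctly records that outgoing weights scale inversely, $M^j=\lambda^{-1}M^i$.

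The paper's stated bijection instead uses the condition $\lambda(v_i,a_i,b_i)=(v_j,a_j,b_j)$, which is not right as written. Take $k=m=1$, $n=2$, $v=(1,\tfrac12)$, $A=(1,2)^T$, $b=0$. The element $e_1\mapsto 2e_2$, $e_2\mapsto \tfrac12 e_1$ stabilizes $\theta$, yet no such $\lambda$ exists. This does not affect the isomorphism type $S\times H_{n'}$ that you are proving.

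\textbf{A small simplification.} The cocycle verification you flag as the delicate point is not actually needed. The map $h\mapsto\pi$ is already a homomorphism $H_n\to S_n$, the forced scalars $d_i=\lambda_{i\pi(i)}$ give injectivity on $G$, and the converse direction of your Step 2 identifies the image directly.
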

The details of this result and its exact form is given in Proposition \ref{thm:stabtheoremmnk}. It is natural to ask if the structure of the stabilizer of $\theta$ determines whether its fibre is diffeomorphic to $H_n$. To this extent, the utility of this theorem within the context of this work is showing that $\Stab_{H_n}\theta\neq 1$ is a sufficient yet not necessary condition for $\rho_{(m,n,k)}^{-1}(\theta)\not\simeq H_n$. As a result, we are given the following hierarchy of parameters: 
$$
\{\theta : \Stab_{H_n}\theta\neq 1\} \stackrel{\textup{Definition}}{\subset} \{\theta = (M, A, b,c) : \Stab_{H_n}(A, b)\neq 1\} \stackrel{\textup{Corollary }\ref{cor:StabilizerAbAndFibre}}{\subset} \{\theta: \rho_{(m,n,k)}^{-1}(\theta)\not\simeq H_n\}.
$$
That being said, however, Corollary \ref{cor:StabilizerAbAndFibre} itself becomes useful by allowing our method of reduction in Proposition \ref{prop:genericityreductionprop}, which leads to Theorem \ref{thm:maintheorem}. On account of this and their otherwise tangential nature, the stabilizer results shall form the contents of Section \ref{section:stabilizers}. 
\par We see that these classifications and results are given firmly in the language of (linear) algebra. It is natural to ask how these conditions might be interpreted geometrically. While one might with relative ease make geometric sense of the equivalences given in Definition \ref{def:minimalform}, a complete geometric interpretation of the last condition of $\equiv$ in Theorem \ref{thm:symmetryclassificationk=1} is not by any means immediate, and it is possible that this one algebraic equivalence encapsulates many different types of geometric equivalences. Alas, any substantive discussion of this are beyond the scope of the work. 
\par But to whet our appetites, let us consider the following seemingly innocuous parameter in $\Omega_{(2,3,1)}$:
$$
\theta_1=\left(\begin{bmatrix}
    1\\1\\1
\end{bmatrix}, \begin{bmatrix}
    1 & 1\\
    -1 & 0 \\
    0 & -1
\end{bmatrix}, \begin{bmatrix}
    -2 \\ -1 \\ -1
\end{bmatrix},-6\right)
$$
(Here, as indicated in the Notation section, we shall denote the matrix $\mathbb{R}^n\rightarrow \mathbb{R}$ as a vector which acts by its transpose on $\mathbb{R}^n$.) Indeed, this parameter satisfies the conditions of \cite[Lem. D15]{pmlr-v202-grigsby23a}: it is generic in that all $k$-fold intersections are affine linear subspaces of dimension $n-k$\cite[Sec. B]{pmlr-v202-grigsby23a}, and it is supertransversal \cite[Def. 11]{masden2022algorithmicdeterminationcombinatorialstructure} in that the bent hyperplane arrangement on $\mathbb{R}^m$ is transverse on cells\cite[Def. 4]{masden2022algorithmicdeterminationcombinatorialstructure} to the complex of the second layer map, in addition to satisfying both the LRA and TPIC conditions (See Figure \ref{fig:benthyperplanearrangement1}.)
\begin{figure}
\centering
\begin{subfigure}{.4\textwidth}
  \centering
  \includegraphics[width=.5\linewidth]{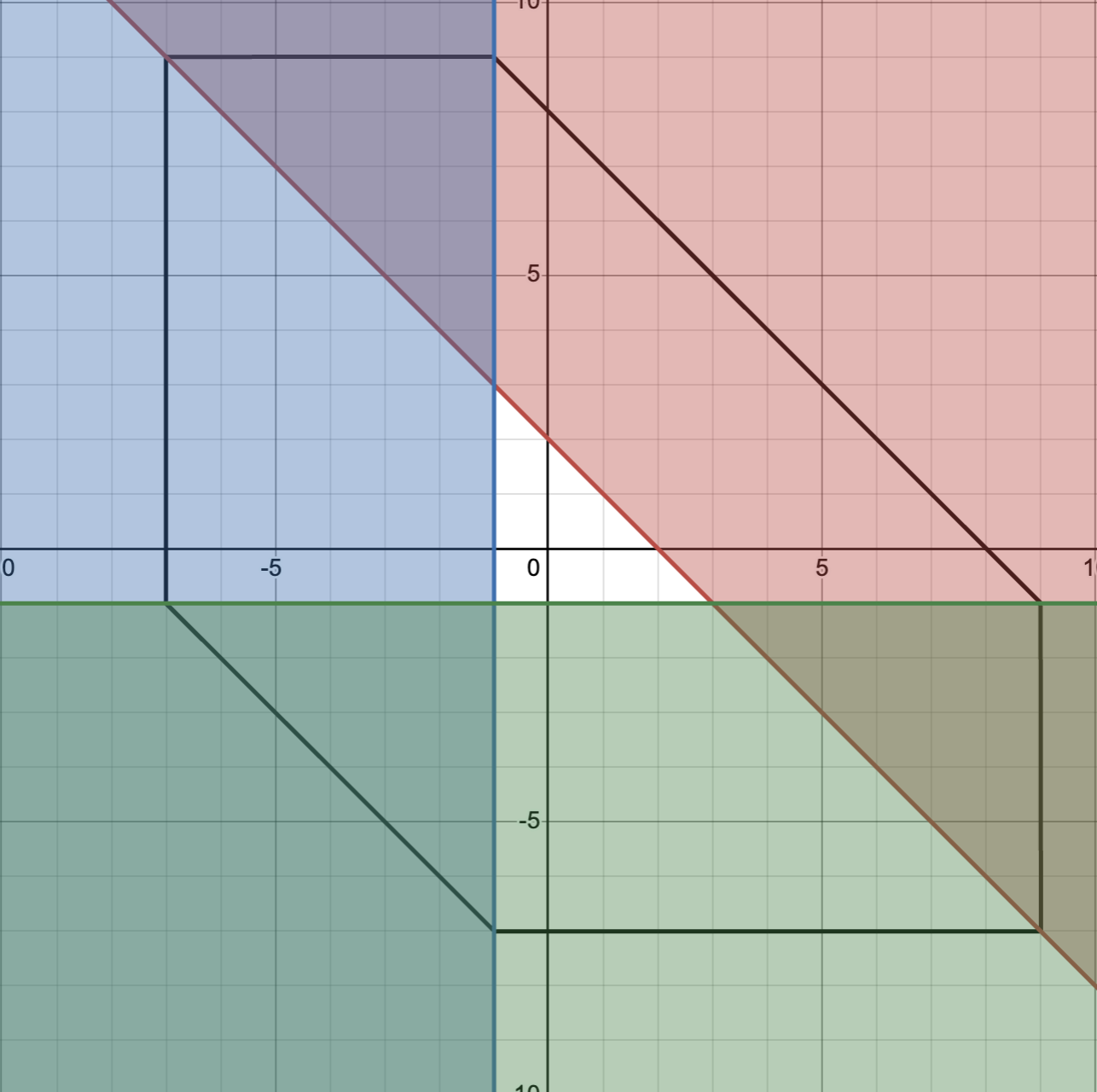}
  \caption{Bent hyperplane arrangement of $\theta_1$}
  \label{fig:benthyperplanearrangement1}
\end{subfigure}
\begin{subfigure}{.4\textwidth}
  \centering
  \includegraphics[width=.5\linewidth]{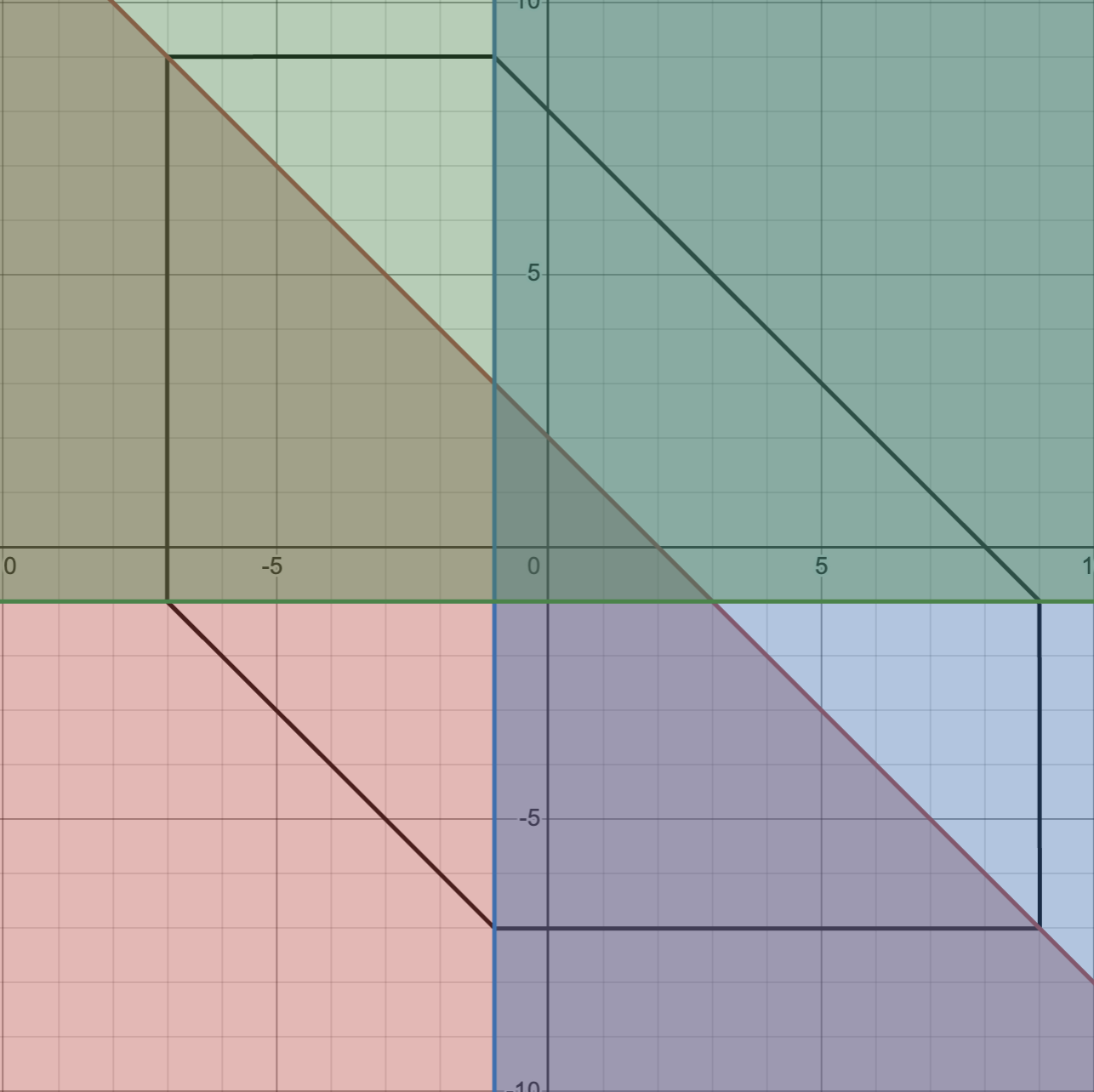}
  \caption{Bent hyperplane arrangement of $\theta_2$}
  \label{fig:benthyperplanearrangement2}
\end{subfigure}
\caption{The bent hyperplane arrangements of $\theta_1$ and $\theta_2$. The section in red represents the domain where the first hidden neuron is activated, blue the second, and green the third. Finally the set $\rho_{(2,3,1)}(\theta_i)(x,y)=0$ is denoted in black.}
\label{fig:benthyperplanearrangement}
\end{figure}
\par Yet, in contrast to the statement of \cite[Lem. D15]{pmlr-v202-grigsby23a}, the fibre $\rho_{(2,3,1)}^{-1}(\theta_1)$ is not determined by positive scalings and permutations of $\theta_1$ as  $\rho_{(2,3,1)}(\theta_1)=\rho_{(2,3,1)}(\theta_2)$ where
$$
\theta_2=\left(\begin{bmatrix}
    1\\1\\1
\end{bmatrix}, \begin{bmatrix}
    -1 & -1\\
    1 & 0 \\
    0 & 1
\end{bmatrix}, \begin{bmatrix}
    2 \\ 1 \\ 1
\end{bmatrix},-10\right).
$$
This equality derives from the identity $\sigma(x)-\sigma(-x)=x$ by the following
\begin{align*}
    \rho_{(2,3,1)}(\theta_1)-\rho_{(2,3,1)}(\theta_2) &= \sigma(x_1+x_2-2)+\sigma(-x_1-1) + \sigma(-x_2-1) - 6 \\
    &- (\sigma(-x_1-x_2+2)+\sigma(x_1+1) + \sigma(x_2+1) - 10)\\
    &= x_1+x_2 - 2 -x_1 - 1 -x_2 - 1-6+10\\
    &= 0
\end{align*}
(symmetries from the identity $\sigma(x)-\sigma(-x)=x$ is further discussed in Section \ref{section:mn1case}.) This shows that for a given parameter $\theta\in \Omega_{(m,n,k)}$, it satisfying the conditions of the aforementioned Lemma is not strong enough to show that $\rho_{(m,n,k)}^{-1}(\theta)\simeq H_n$. And so at best such conditions at best show \textit{local identifiability} \cite[Def. 6]{bonapellissier2022localidentifiabilitydeeprelu}. Whether it is sufficient to show local identifiability shall remain as conjecture for this work. 
\vskip 5pt
{\flushleft \bf Acknowledgements:} The core of this works stems from discussions with and attending the class of Kathryn Lindsey during the spring and fall of 2025, which extended into a research project with Eli Grigsby starting in earnest in spring of 2025. For this, I would be remiss not to express here my immense gratitude. I would also like to express my thanks in particular for the discussions I was able to have Joe Boninger, Qile Chen, Yaoying Fu, Spencer Leslie, and Ming Hong Tee, from which this paper has immensely benefitted. This work was partially supported by the Institute for Foundations of Machine Learning (IFML).
\section{Notation and Preliminaries}
\subsection{Notation}
\par To recall the notation used in the introduction, $H_n\leq \GL_n(\mathbb{R})$ denotes the subgroup generated by permutations, which we shall denote $S_n$ and diagonal matrices with positive coefficients, which we shall denote $D_n^+$. Likewise, $\Omega_{(m,n,k)} = M_{k\times n}(\mathbb{R})\times M_{n\times m}(\mathbb{R})\times \mathbb{R}^{n+k}$ denotes parameter space of architecture $(m,n,k)$, and the ReLU activation function $\sigma:\mathbb{R}^l\rightarrow \mathbb{R}^l$ is the function given by
$$
\sigma\left(\begin{bmatrix} x_1 \\ \vdots \\ x_{l}\end{bmatrix}\right) = \begin{bmatrix} \max\{0, x_1\} \\ \vdots \\ \max\{0, x_{l}\}\end{bmatrix},
$$
where the domain, and thereby codomain, is clear from context. A parameter in $\Omega_{(m,n,k)}$ is denoted by $\theta=(M, A, b, c)$, where $M\in M_{k\times n}(\mathbb{R}), A\in  M_{n\times m}(\mathbb{R}), b\in \mathbb{R}^{n}, c\in \mathbb{R}^k$, and admits an $H_n$-action denoted by $\cdot$ and given by
$$
h\cdot(M, A, b, c) = (Mh^{-1}, hA, hb, c).
$$
When $k=1$, in place of $M$ we shall write $v$ and in place of $c$ we shall write the capital $C$. Moreover, in the case where a parameter has no bias, that is to say is of the form $(M, A, 0,0)$, we may forego stating the bias and just write $(M, A)$ (likewise $(v, A)$ in place of $(v, A, 0,0)$.) The realization map $\rho_{(m,n,k)}:\Omega_{(m,n,k)}\rightarrow C^0(\mathbb{R}^m, \mathbb{R}^k)$ is the $H_n$-invariant map given by 
$$
\rho_{(m,n,k)}(M, A, b, c) = M\sigma(Ax+b)+c,
$$
or if $k=1$,
$$
\rho_{(m,n,1)}(v, A, b, C) = v^T\sigma(Ax+b)+C = \langle v,\sigma(Ax+b)\rangle+C.
$$
The fibre of $\theta$, denoted by $\rho^{-1}_{(m,n,k)}(\theta)$, is  the set $\rho^{-1}_{(m,n,k)}(\rho_{(m,n,k)}(\theta))$. 
\par In addition, we also make use of a projection map $\pi_i:\Omega_{(m,n,k)}\rightarrow \Omega_{(m,n,1)}$ throughout the work: for
$$
    (M, A, b, c) = \left(\begin{bmatrix}
        m_1 \\ \vdots\\ m_k
    \end{bmatrix}, A, b, \begin{bmatrix}
        c_1 \\ \vdots\\ c_k
    \end{bmatrix}\right)\in \Omega_{(m,n,k)},
$$
we define $\pi_i(M, A, b, c) = (m_i, A, b, c_i)$. 
\par Throughout Section \ref{section:stabilizers}, for some $G$-set $X, \Stab_G x$ denotes the stabilizer of $x\in X$ under the $G$-action (We usually take $X=\Omega_{(m,n,k)}$ and $G= H_n, S_n, D_n^+$.) Moreover, when used to denote an element of $\textup{GL}_n(\mathbb{R})$, we let $(i,j)$ denote the permutation of the $i$-th and $j$-th coordinate, and let
$$
\textup{diag}(\lambda_1, \ldots, \lambda_n) = \begin{bmatrix}
    \lambda_1 & & \\
    & \ddots & \\
    & & \lambda_n
\end{bmatrix}
$$
and denote $\textup{diag}(\lambda_{i_1}, \ldots, \lambda_{i_m})$ the matrix with $\lambda_{i_j}$ on the $i_j$-th entry and 1 otherwise. 
\par Finally, in Section 4, in accordance with notation from Commutative Algebra and Algebraic Geometry, we shall denote
$$
V(f) = \{x: f(x)=0\}
$$
\subsection{Preliminaries from the theory of Lie groups} Here we shall recall some relevant results from the theory of Lie groups that shall be relevant to this work, particularly in Section \ref{section:genericity}. We follow in the theory as described in \cite[Ch. 7, 21]{Lee_2013}.
\begin{defin}
    Let $G$ be a smooth manifold admitting the structure of a group where the maps
    $$
    G\times G \rightarrow G, (g_1, g_2)\mapsto g_1g_2
    $$
    $$
    G \rightarrow G, g_1 \mapsto g_1^{-1}
    $$
    are smooth. Then we call $G$ a (real) \textit{Lie group.}
\end{defin}
What is relevant to us of course is to describe the group action of a Lie group on a smooth manifold, and the relationship between its structure and the manifold on which it acts. We shall start by definition the group action of a Lie group:
\begin{defin}
    \par Let $G$ be a Lie group, and $M$ a smooth manifold. A (left) Lie group action of $G$ on $M$ is a group action in which the map $\theta_g: M\rightarrow M$ defined by $x\mapsto g\cdot x$ is a smooth map for all $g\in G$. This action is free if $g\cdot x= x$ for some $x\in M$ implies $g$ is identity element in $G$, and is transitive if for all $x, y\in M$ there exists some $g\in G$ such that $g\cdot x = y$. Finally, if the action is both free and transitive, we call it simply transitive. 
\end{defin}
An important map which relates the structure of a Lie group to a manifold it acts on is the orbit map on a point $x\in M$ given by
\begin{equation}\label{eqn:orbitmap}
    g \mapsto g\cdot x
\end{equation}
If $G$ acts on $M$ transitively, we call $M$ a \textit{homogeneous $G$-space}, and this map is surjective. Moreover, we are given the following result:
\begin{lemma}[\cite{Lee_2013}, Th. 21.18]\label{lemma:LeeLemmaGSpace}
    Let $M$ be a homogeneous $G$-space. Then we get an (equivariant) diffeomorphism $G/\Stab_{G}x\simeq M$ given by (\ref{eqn:orbitmap})
\end{lemma}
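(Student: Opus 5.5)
The plan is the standard argument: descend the orbit map (\ref{eqn:orbitmap}) to the quotient, then use equivariance to upgrade a smooth bijection to a diffeomorphism. Fix $x\in M$ and write $K=\Stab_G x$.

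\emph{Step 1: the quotient is a manifold.} Let $\varphi:G\to M$, $\varphi(g)=g\cdot x$. We assume the action map $G\times M\to M$ is jointly smooth, as in the setting of \cite{Lee_2013}; the definition above only records smoothness of each $\theta_g$, so this is an added standing hypothesis. Then $\varphi$ is smooth, and in particular continuous. Hence $K=\varphi^{-1}(x)$ is a closed subgroup of $G$. By the closed subgroup theorem, $K$ is an embedded Lie subgroup. The quotient manifold theorem for homogeneous spaces then gives $G/K$ a unique smooth structure for which the projection $\pi:G\to G/K$ is a smooth submersion. The left action of $G$ on $G/K$ is smooth and transitive.

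\emph{Step 2: descend the orbit map.} If $gK=g'K$, then $g'=gk$ with $k\in K$, so $\varphi(g')=g\cdot(k\cdot x)=g\cdot x=\varphi(g)$. Thus $\varphi$ is constant on the fibres of $\pi$. Since $\pi$ is a surjective submersion, $\varphi$ descends to a unique smooth map $F:G/K\to M$ with $F(gK)=g\cdot x$, and $F$ is $G$-equivariant. Transitivity of the action on $M$ gives surjectivity of $F$. For injectivity, $g\cdot x=g'\cdot x$ implies $g^{-1}g'\in K$, so $gK=g'K$. Hence $F$ is a smooth equivariant bijection.

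\emph{Step 3: constant rank.} The action of $G$ is transitive on $G/K$, and $F\circ\theta_g=\theta_g\circ F$ with each $\theta_g$ a diffeomorphism. Differentiating this identity shows that the rank of $dF$ at $gK$ equals its rank at $eK$. So $F$ has constant rank; this is the equivariant rank theorem.

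\emph{Step 4: constant rank plus bijective implies diffeomorphism.} This is the main obstacle. By the global rank theorem, a smooth map of constant rank is an immersion if it is injective, and a submersion if it is surjective. The surjective case is the delicate one: if the rank were less than $\dim M$, then by the constant rank theorem the image would be locally contained in lower-dimensional submanifolds. Second countability of $G/K$ then makes the image a countable union of measure-zero sets, hence of measure zero in $M$, which contradicts surjectivity (a Sard/Baire type argument). So $F$ is a bijective local diffeomorphism, and therefore a diffeomorphism.

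I would take the global rank theorem from \cite[Ch.~4]{Lee_2013} as given rather than reprove it. Equivariance of the diffeomorphism was already built in at Step 2.
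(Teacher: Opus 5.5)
Your proposal is correct and is essentially the proof of \cite[Th.~21.18]{Lee_2013}, which the paper invokes by citation only. That proof has the same four steps as yours: closedness of the isotropy group, descending the orbit map through the submersion $G\to G/K$, the equivariant rank theorem, and the global rank theorem. Your caveat about joint smoothness is well taken and genuinely necessary: under the paper's definition (only each $\theta_g$ smooth), let $(\mathbb{R},+)$ act on $\mathbb{R}$ by $g\cdot x=x+\varphi(g)$ for a discontinuous additive bijection $\varphi$; this action is simply transitive, yet the orbit map is not even continuous.
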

\section{Stabilizers of Parameter Space}\label{section:stabilizers}

\par The goal of this section is to classify the possible stabilizers of a parameter $\theta\in \Omega_{(m, n, k)}$ under the group action of $H_n$, which we shall denote $\Stab_{H_n}\theta$. Let $\textup{diag}(\lambda_1, \ldots, \lambda_n)$ denote the diagonal matrix with $\lambda_i$ as the $i$-th diagonal entry, and denote $\textup{diag}(\lambda_{i_1}, \ldots, \lambda_{i_m})$ the matrix with $\lambda_{i_j}$ on the $i_j$-th entry and 1 otherwise. Our main result of the section is the following:
\begin{proposition}\label{thm:stabtheoremmnk}
    Let $\theta\in \Omega_{(m, n, k)}$ be of the form
    $$
    \theta = \left(\begin{bmatrix}
        v_1 & \ldots & v_n
    \end{bmatrix}, \begin{bmatrix}
        a_1^T \\ \vdots \\ a_n^T
    \end{bmatrix}, \begin{bmatrix}
        b_1 \\ \vdots \\ b_n
    \end{bmatrix}, c\right)
    $$
    where $v_i\in\mathbb{R}^{k}, a_i\in\mathbb{R}^m, b_i, c\in\mathbb{R}^k$. Let $\#\{i: (v_i, a_i, b_i)= (0, 0, 0)\}=n'$. Then 
    $$
    \Stab_{H_n} \theta = \langle \mathcal{S}\rangle \times H_{n'}
    $$
    where $\mathcal{S}$ is a finite set with elements of the form $\textup{diag}(\lambda_i, \lambda_j) (i, j)$, where $\lambda_i\lambda_j=1$, with $\#\mathcal{S} \leq \binom{n-n'}{2}$. In particular, there is a bijection
    $$
    \mathcal{S}\longrightarrow \left\{\{i, j\}: \begin{split}
        i\neq j\\ (v_i, a_i, b_i), (v_j, a_j, b_j)\neq (0,0, 0)\\ \exists \lambda>0, \lambda (v_i, a_i, b_i)=(v_j, a_j, b_j)
    \end{split}\right\}
    $$
\end{proposition}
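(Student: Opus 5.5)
The plan is to compute $\Stab_{H_n}\theta$ directly from the coordinate description of the action. Every $h\in H_n$ can be written uniquely as $h=D P_\tau$ with $D=\textup{diag}(\lambda_1,\ldots,\lambda_n)\in D_n^+$ and $P_\tau$ a permutation matrix. The first step is to write out $h\cdot\theta=(Mh^{-1},hA,hb,c)$ neuron by neuron. Since $hA$ and $hb$ permute rows and rescale them, and $Mh^{-1}$ permutes columns and inversely rescales them, the action sends the triple $(v_i,a_i,b_i)$ to slot $\tau(i)$ as $(\lambda_{\tau(i)}^{-1}v_i,\lambda_{\tau(i)}a_i,\lambda_{\tau(i)}b_i)$. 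Hence $h\in\Stab_{H_n}\theta$ if and only if, for all $i$,
$$
(v_{\tau(i)},a_{\tau(i)},b_{\tau(i)})=(\lambda_{\tau(i)}^{-1}v_i,\lambda_{\tau(i)}a_i,\lambda_{\tau(i)}b_i).
$$
Write $i\sim j$ when $(v_j,a_j,b_j)=(\mu^{-1}v_i,\mu a_i,\mu b_i)$ for some $\mu>0$. I will read the scaling condition in the displayed bijection of the statement as exactly this twisted relation, since that is what the action produces. The relation $\sim$ is an equivalence relation on indices, and the zero neurons form their own class.

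The second step splits the indices into the set $Z$ of zero neurons, with $\#Z=n'$, and the complement $N$. Since $h$ maps zero triples to zero triples and nonzero triples to nonzero triples, $\tau$ preserves $Z$ and $N$. On $Z$ the condition above is vacuous, so every permutation of $Z$ and every positive scaling on $Z$ is allowed. This yields a factor isomorphic to $H_{n'}$, acting only on the $Z$-coordinates. Because the constraints on $Z$ and on $N$ involve disjoint coordinates, the stabilizer is the direct product of this $H_{n'}$ with the stabilizer of the $N$-part inside $H_{N}\cong H_{n-n'}$. This gives the $\times H_{n'}$ in the statement.

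The third step, which I expect to need the most care, analyses the $N$-part.

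First, if $\tau(i)=i$ for some $i\in N$, then $(v_i,a_i,b_i)=(\lambda_i^{-1}v_i,\lambda_i a_i,\lambda_i b_i)$ with a nonzero entry somewhere, so $\lambda_i=1$.

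Second, the condition forces $\tau(i)\sim i$, so $\tau$ preserves each $\sim$-class. For a fixed permutation, the scalings are then uniquely determined: if $i\sim j$ with $j\ne i$, the witness $\mu$ is unique because the triple is nonzero. Hence the map $\Stab\to S_N$, $h\mapsto\tau$, is injective. Its image is the full product of symmetric groups on the $\sim$-classes, because any permutation within a class is realised by choosing $\lambda_{\tau(i)}$ to be the witness between $i$ and $\tau(i)$.

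Third, each transposition $(i,j)$ with $i\sim j$ in $N$ lifts to the element $\textup{diag}(\lambda_i,\lambda_j)(i,j)$. Applying the witness relation in both directions gives $\lambda_i\lambda_j=1$. This identifies $\mathcal{S}$ with the set of unordered pairs $\{i,j\}\subseteq N$ with $i\sim j$, which is the claimed bijection. Since transpositions generate symmetric groups, and the lifting is an injective homomorphism from the image, $\langle\mathcal{S}\rangle$ is the whole $N$-stabilizer.

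Finally, the bound $\#\mathcal{S}\le\binom{n-n'}{2}$ is immediate from the fact that $\mathcal{S}$ injects into the set of pairs in $N$. The main subtlety I would check carefully is that the lifting really is a homomorphism: the composed scalings must match the witness attached to the composed permutation. This follows from multiplicativity of witnesses along chains $i\sim j\sim k$, again using the uniqueness of $\mu$ for nonzero triples.
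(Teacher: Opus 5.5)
Your proof is correct, and it takes a cleaner route than the paper's.

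\textbf{The scaling condition.} You were right to read the scaling condition as the twisted relation $(v_j,a_j,b_j)=(\mu^{-1}v_i,\mu a_i,\mu b_i)$. The paper's proof writes $v_{p(i)}=\lambda_{p(i)}v_i$, but the action $(Mh^{-1},hA,hb,c)$ gives $\lambda_{p(i)}^{-1}v_i$. The printed condition $\lambda(v_i,a_i,b_i)=(v_j,a_j,b_j)$ is therefore wrong as stated. For example, take $\theta=(v,A,0,C)\in\Omega_{(1,2,1)}$ with $v=(2,1)^T$ and $A=(1,2)^T$. Then $\textup{diag}(\tfrac12,2)(1,2)$ stabilizes $\theta$, yet no $\lambda>0$ satisfies $\lambda(2,1,0)=(1,2,0)$.

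\textbf{How the two routes differ.} The paper first reduces to $k=1$ using Lemma \ref{lem:intersectionstab} and folds $b$ into $A$. It then splits a stabilizing $lp$ into disjoint cycle pieces $l_ip_i$, each of which stabilizes. It factors each cycle as $(i,p^{k-1}(i))\cdots(i,p(i))$ and lifts each transposition. Uniqueness of the diagonal part for a given permutation (its part (3)) then identifies the product of lifts with $lp$. General $k$ is recovered by intersecting the $k=1$ stabilizers.

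You instead treat the full neuron datum $(v_i,a_i,b_i)$ at once, so no intersection step is needed. You also replace the cycle manipulations with one observation. On the $N$-part, $h\mapsto\tau$ is injective, which is the same uniqueness fact as the paper's (3). Its image is the Young subgroup of the $\sim$-classes.

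\textbf{What each route buys.} Your route gives a sharper result, namely $\Stab_{H_n}\theta\cong\prod_{\text{classes in }N}S_{|\text{class}|}\times H_{n'}$. It also makes the direct-product splitting and the generation by $\mathcal{S}$ immediate. The paper's step from $\bigcap_i\langle\mathcal{S}_i\rangle\times H_{n_i'}$ to $\langle\mathcal{S}\rangle\times H_{n'}$ is asserted rather than argued. In exchange, the paper's route builds $k=1$ machinery that it reuses for $\Stab_{H_n}(A,b)$ in Lemma \ref{lem:StabOmegaPrimeCase}.

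\textbf{The subtlety you flagged.} It is not actually an issue. The map $h\mapsto\tau$ is the restriction of the quotient homomorphism $H_n\to H_n/D_n^+\cong S_n$. Once it is a bijection onto the Young subgroup, its inverse is automatically a homomorphism, so you do not need to check multiplicativity of witnesses.
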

We first note that $H_n$ acts similarly to $M_{n\times m}(\mathbb{R})$ and $\mathbb{R}^{n}$, while acting trivially on the constant term $\mathbb{R}^{k}$. And so in particular:
\begin{equation}
    \Stab_{H_n}(M, A, b, c) = \Stab_{H_n}(M, \begin{bmatrix}
        A & b \end{bmatrix}, 0, 0)
\end{equation}
Moreover, we note the following:
\begin{lemma}\label{lem:intersectionstab}
    Let us denote $\pi_i: \Omega_{(m,n,k)}\rightarrow \Omega_{(m,n,1)}$ to be the projection as before. Then 
    $$
    \bigcap_{i=1}^{k} \Stab_{G}\pi_i(\theta) = \Stab_{G}\theta
    $$
    for any $G\leq H_n$
\end{lemma}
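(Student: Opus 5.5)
The plan is to prove the two inclusions of sets directly, using only the definition of the action $h\cdot(M,A,b,c)=(Mh^{-1},hA,hb,c)$. Write $\theta=(M,A,b,c)$ and $\pi_i(\theta)=(m_i,A,b,c_i)$, where $m_i$ is the $i$-th row of $M$. For $h\in G\leq H_n$ we have
$$
h\cdot\pi_i(\theta)=(m_ih^{-1},hA,hb,c_i).
$$
The key observation is that the $i$-th row of $Mh^{-1}$ is exactly $m_ih^{-1}$. Matrix multiplication on the right acts row by row, so
$$
Mh^{-1}=\begin{bmatrix} m_1h^{-1}\\ \vdots \\ m_kh^{-1}\end{bmatrix}.
$$
In words, $\pi_i(h\cdot\theta)=h\cdot\pi_i(\theta)$ for each $i$, so each projection $\pi_i$ is $G$-equivariant.

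\emph{Inclusion $\Stab_G\theta\subseteq\bigcap_i\Stab_G\pi_i(\theta)$.} If $h\cdot\theta=\theta$, then applying $\pi_i$ and using equivariance gives
$$
h\cdot\pi_i(\theta)=\pi_i(h\cdot\theta)=\pi_i(\theta)
$$
for every $i$.

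\emph{Inclusion $\bigcap_i\Stab_G\pi_i(\theta)\subseteq\Stab_G\theta$.} Suppose $h\cdot\pi_i(\theta)=\pi_i(\theta)$ for all $i$. Then $m_ih^{-1}=m_i$ for every $i$, together with $hA=A$ and $hb=b$; the constant coordinate $c_i$ is fixed automatically. The map
$$
(\pi_1,\ldots,\pi_k):\Omega_{(m,n,k)}\to\prod_{i}\Omega_{(m,n,1)}
$$
is injective, since it recovers every row $m_i$, the matrix $A$, the bias $b$, and every $c_i$. Hence $h\cdot\theta$ and $\theta$ agree coordinatewise, and therefore $h\cdot\theta=\theta$.

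None of these steps is a real obstacle. The only point needing care is the row-wise identity $\pi_i(h\cdot\theta)=h\cdot\pi_i(\theta)$ and the observation that $\theta$ is determined by its projections. The argument never uses any property of $G$ beyond being a subgroup of $H_n$ acting by the stated formula, which is why the statement holds for every $G\leq H_n$.
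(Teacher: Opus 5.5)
Your proof is correct and follows the paper's approach. The paper says only that the lemma follows from the definition of the stabilizer, and your row-wise equivariance $\pi_i(h\cdot\theta)=h\cdot\pi_i(\theta)$, combined with the injectivity of $(\pi_1,\ldots,\pi_k)$, is a careful unpacking of that remark.
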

\begin{proof}
    This follows from the definition of the stabilizer.
\end{proof}
So we may just consider $k=1$, and take all our parameters $\theta\in \Omega_{(m,n,1)}$ to be without bias. To this extent, we shall denote all our parameters with $(v, A)$ in place of $(v, A,0,0)\in \Omega_{(m,n,1)}$ 
\begin{lemma}
    For any $\theta\in \Omega$, 
    $$
    \Stab_{S_n}\theta \simeq \prod S_{\nu_i}
    $$
    where $\sum_i \nu_i \leq n$
\end{lemma}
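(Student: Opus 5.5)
We reduce to the unbiased case $k=1$ and write $\theta=(v,A)$, whose rows are the triples $w_i=(v_i,a_i)\in\mathbb{R}\times\mathbb{R}^m$ (the bias has already been absorbed as an extra column of $A$). A permutation $\pi\in S_n$, written as a permutation matrix $P_\pi$, acts by
$$
P_\pi\cdot(v,A)=(vP_\pi^{-1},P_\pi A).
$$
This action permutes the rows $w_i$: the $i$-th row of the new parameter is $w_{\pi^{-1}(i)}$. Hence $\pi\in\Stab_{S_n}\theta$ if and only if $w_{\pi^{-1}(i)}=w_i$ for all $i$, or equivalently $w_{\pi(i)}=w_i$ for all $i$.

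The plan is to partition $\{1,\ldots,n\}$ into classes $I_1,\ldots,I_r$ according to equality of rows, setting $i\sim j$ exactly when $w_i=w_j$. The condition above says precisely that $\pi$ preserves each class $I_t$ setwise. Conversely, any permutation preserving every class fixes all rows. Therefore $\Stab_{S_n}\theta$ is the Young subgroup
$$
\prod_{t=1}^r \Sym(I_t)\simeq\prod_t S_{\nu_t},\qquad \nu_t=\#I_t .
$$
The classes are disjoint, so $\sum_t\nu_t=n$, and in particular $\sum_t\nu_t\le n$. If one prefers to list only the nontrivial factors, drop the classes with $\nu_t=1$; the sum then becomes strictly smaller, which is why the statement allows $\le$.

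For general $k$ we could either repeat the same argument with the rows $w_i=(v_i,a_i,b_i)\in\mathbb{R}^k\times\mathbb{R}^m\times\mathbb{R}$, or invoke Lemma \ref{lem:intersectionstab}. Using the direct row argument avoids having to intersect Young subgroups coming from different partitions, which would in any case give the Young subgroup of the common refinement.

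There is no real obstacle here. The only care needed is the bookkeeping for the permutation action, namely checking that $vP_\pi^{-1}$ and $P_\pi A$ permute the entries of $v$ and the rows of $A$ by the same $\pi$. This holds because $P_\pi^{-1}=P_\pi^T$, so the entry $v_i$ and the row $a_i$ move together as a single row $w_i$.
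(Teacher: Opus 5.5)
Your argument is correct, and it is more direct than the paper's. The paper decomposes a stabilizing permutation into disjoint cycles and observes that each cycle stabilizes $\theta$ on its own. It then deduces that all rows moved by a given cycle coincide, so every transposition $(i,j)$ between two such rows also stabilizes $\theta$. This shows that $\Stab_{S_n}\theta$ is generated by transpositions of equal rows, but leaves implicit the final identification of that subgroup with $\prod S_{\nu_i}$.

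You instead characterize the stabilizer in one step as the permutations with $w_{\pi(i)}=w_i$, i.e.\ those preserving each block of the equal-row partition. This identifies it outright as the Young subgroup $\prod_t \Sym(I_t)$, determines the $\nu_t$ explicitly, and gives $\sum_t\nu_t=n$ when trivial blocks are counted. Your check that $vP_\pi^{-1}$ and $P_\pi A$ are permuted by the same $\pi$ also fixes the paper's index bookkeeping, which writes $v_{p^{-1}(i)}$ alongside $a_{p(i)}$.

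What the paper's cycle-and-transposition route buys is reuse. It is the template for the subsequent proposition on $\Stab_{H_n}\theta$, where the generators become $\textup{diag}(\lambda_i,\lambda_j)(i,j)$. There a bare ``preserve each class'' description is not enough by itself; your partition idea would have to be supplemented by tracking the scalings forced on the nonzero rows.
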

\begin{proof}
    Let $p=\prod p_i\in \Stab_{S_n}\theta$ such that $p_i$ are disjoint cycles. As they are disjoint permutation actions, it follows immediately that $p_i\in\Stab_{S_n}\theta$. Now if $p$ is a $k$-cycle we see by the definition of permutation that $(v_i, a^T_i) = (v_{p^{-1}(i)}, a^T_{p(i)})$ for all $i$. And so, for any $i, j$ such that $p(i)\neq i, p(j)\neq j, (v_i, a_i^T) = (v_j, a_j^T)$. And so the 2-cycle $(i ,j)$ also stabilizes $\theta$, proving our statement. 
\end{proof}

\begin{lemma}
    For any $\theta\in \Omega_{(m,n,1)}$, 
    $$
    \Stab_{D_n^+}\theta \simeq \mathbb{R}_{>0}^{\mu}
    $$
    where $\mu \leq n$
\end{lemma}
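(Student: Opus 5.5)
We compute the action of a diagonal element directly. Let $d=\textup{diag}(\lambda_1,\ldots,\lambda_n)$ with all $\lambda_i>0$, and let $\theta=(v,A)\in\Omega_{(m,n,1)}$ be without bias, as arranged above. Then
$$
d\cdot\theta=(d^{-1}v,\, dA)=\left(\begin{bmatrix}\lambda_1^{-1}v_1\\ \vdots\\ \lambda_n^{-1}v_n\end{bmatrix},\begin{bmatrix}\lambda_1a_1^T\\ \vdots\\ \lambda_na_n^T\end{bmatrix}\right),
$$
so the condition $d\cdot\theta=\theta$ decouples coordinate by coordinate. It says exactly that $\lambda_i^{-1}v_i=v_i$ and $\lambda_i a_i=a_i$ for every $i$.

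For each $i$ there are two cases.

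\begin{itemize}
\item If $(v_i,a_i)\neq(0,0)$, some entry of $v_i$ or of $a_i$ is nonzero. The corresponding equation $\lambda_i^{\pm1}x=x$ with $x\neq 0$ forces $\lambda_i=1$; we use $\lambda_i>0$ only to ensure $\lambda_i^{-1}$ makes sense.
\item If $(v_i,a_i)=(0,0)$, both equations hold for every $\lambda_i>0$, so $\lambda_i$ is unconstrained.
\end{itemize}

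Let $I=\{i:(v_i,a_i)=(0,0)\}$ and $\mu=\#I\le n$. From the two cases,
$$
\Stab_{D_n^+}\theta=\{\textup{diag}(\lambda_1,\ldots,\lambda_n):\lambda_j=1 \text{ for } j\notin I\}.
$$
Since $D_n^+$ is abelian and multiplication is coordinatewise, the map $\Stab_{D_n^+}\theta\to\mathbb{R}_{>0}^{\mu}$ given by $d\mapsto(\lambda_i)_{i\in I}$ is a group isomorphism. It is in fact a Lie group isomorphism, since it is the restriction of a coordinate projection.

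If one wants the statement for $\theta$ with bias, or for general $k$, the same argument applies with $a_i$ replaced by $(a_i,b_i)$ and $v_i\in\mathbb{R}^k$, using the reduction $\Stab_{H_n}(M,A,b,c)=\Stab_{H_n}(M,[A\ b],0,0)$ and Lemma \ref{lem:intersectionstab}.

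There is no real obstacle here. The only point needing care is the case analysis: one must check that a single nonzero entry in row $i$ already pins down $\lambda_i=1$, whether that entry lies in $v_i$ (where $\lambda_i$ appears inverted) or in $a_i$.
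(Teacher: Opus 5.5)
Your proof is correct and follows essentially the same route as the paper. The paper also observes that $\textup{diag}(\lambda_1,\ldots,\lambda_n)$ acts row-by-row, with each factor $\textup{diag}(\lambda_i)$ acting independently, and that $\lambda_i\neq 1$ is allowed exactly when $(v_i,a_i^T)=0$. Your version just states the identification $\mu=\#\{i:(v_i,a_i)=(0,0)\}$ and the explicit isomorphism more carefully.
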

\begin{proof}
    We see that if $\textup{diag}(\lambda_1, \ldots, \lambda_n)$ is in $\Stab_{D_n^+}\theta$ then as 
    $$
    \textup{diag}(\lambda_1, \ldots, \lambda_n) = \prod_{i=1}^n \textup{diag}(\lambda_i)
    $$
    with each $\textup{diag}(\lambda_i)$ acting disjointly, we see that $\textup{diag}(\lambda_i)$ also stabilizes $\theta$. Moreover, we see that $(v_i\lambda_i^{-1}, \lambda_ia_i^T) = (v_i, a_i^T)$ for $\lambda_i\neq 1$ if and only if $(v_i, a_i^T)=0$ and so $(v_i\lambda^{-1}, \lambda a_i^T) = (v_i, a_i^T)$ for all $\lambda>0$, proving our statement.
\end{proof}
\begin{proposition}
    For any $\theta\in \Omega_{(m,n,1)}, \Stab_{H_n}\theta$ has the following properties:
    \begin{enumerate}
        \item Any $h\in\Stab_{H_n}\theta$ can be written of the form $lp$ where $l\in D_n^+$ and $p\in S_n$.
        \item $\Stab_{H_n}\theta$ is generated by elements of the form $l'p'$ where $p'$ is a $k$-cycle and $l' = \textup{diag}(\lambda_1, \ldots, \lambda_n)\in D_n^+$ is a diagonal matrix where $\lambda_i\neq 1$ only if $p'(i)\neq i$. In particular, a stabilizing element of the form $l'p'$ acting non-trivially on a pair $(v_i, a_i^T), (v_j, a_j^T)$ exists if and only if they are linearly dependent by some $\lambda>0$. 
        \item Say $p'$ is a $k$-cycle is such that $p'(i)\neq i$ for some $(v_i, a_i^T)\neq 0$. Then there is at most one $l'$ as defined above such that $l'p'\in \Stab_{H_n}\theta$
        \item Finally, in fact, $\Stab_{H_n}\theta$ is generated by elements of the form $\textup{diag}(\lambda_i, \lambda_j) (i, j)$, where $\lambda_i\lambda_j=1$, and $\textup{diag}(\lambda_i)$
    \end{enumerate}
\end{proposition}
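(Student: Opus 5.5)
We work with bias-free parameters $\theta=(v,A)\in\Omega_{(m,n,1)}$ and write the $i$-th \emph{neuron} as $w_i=(v_i,a_i^T)\in\mathbb{R}^{1+m}$. The computation underlying everything is the action of $h=lp$, with $l=\textup{diag}(\lambda_1,\ldots,\lambda_n)$ and $p\in S_n$. Since $h$ acts on $A$ by $hA$ and on $v$ by $vh^{-1}$, it sends neuron $w_{p^{-1}(i)}$ to position $i$, with the $A$-part multiplied by $\lambda_i$ and the $v$-part by $\lambda_i^{-1}$. So $h\in\Stab_{H_n}\theta$ if and only if
$$
(\lambda_i^{-1}v_{p^{-1}(i)},\ \lambda_i a_{p^{-1}(i)}^T)=(v_i,a_i^T)\quad\text{for all } i .
$$

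\textbf{Part (1).} Every element of $H_n$ is a product of permutation matrices and positive diagonal matrices. Since $pD_n^+p^{-1}=D_n^+$, the group $D_n^+$ is normal in $H_n$ and $H_n=D_n^+\rtimes S_n$. Hence every $h$, in particular every stabilizer element, factors as $lp$.

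\textbf{Part (2).} Take $h=lp\in\Stab_{H_n}\theta$ and decompose $p=\prod_r p_r$ into disjoint cycles. Let $l_r$ be the diagonal matrix that agrees with $l$ on $\textup{supp}(p_r)$ and is $1$ elsewhere, and let $l_0$ be $l$ restricted to the fixed points of $p$. The displayed condition is checked coordinate by coordinate, and each coordinate involves only one cycle. Therefore each $l_rp_r$ and $l_0$ stabilizes $\theta$ on its own, and $h=l_0\prod_r l_rp_r$, with the factors commuting because they have disjoint supports. This gives the generating set.

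For the ``in particular'' claim, iterate the condition around a cycle. If $l'p'$ moves $w_j$ to position $i$, then $v_i=\lambda_i^{-1}v_j$ and $a_i=\lambda_i a_j$. With $\lambda_i>0$, the $v$-part and the $a$-part scale by different factors, so ``linearly dependent by some $\lambda>0$'' has to be read in the sense of this twisted scaling. I will state it that way and check that it matches the convention of Proposition~\ref{thm:stabtheoremmnk}, since the later bijection relies on it. Conversely, if $(v_j,a_j)=(\lambda^{-1}v_i,\lambda a_i)$, then $\textup{diag}(\lambda_i,\lambda_j)(i,j)$ with $\lambda_i=\lambda^{-1}$ and $\lambda_j=\lambda$ stabilizes $\theta$, because $\lambda_i\lambda_j=1$.

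\textbf{Part (3).} Suppose $l'p'$ and $l''p'$ both stabilize $\theta$. Then $l'l''^{-1}=(l'p')(l''p')^{-1}$ is a diagonal stabilizer supported on $\textup{supp}(p')$. By the lemma on $\Stab_{D_n^+}\theta$, each factor $\textup{diag}(\mu_i)$ with $\mu_i\neq1$ must act on a zero neuron. Along the cycle, a nonzero neuron is carried to nonzero neurons, since a zero neuron could only come from a zero neuron. So every entry in the support is $1$, and $l'=l''$.

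\textbf{Part (4), the main obstacle.} By Part (2) it suffices to write a cycle stabilizer $l'p'$ as a product of transposition stabilizers and $\textup{diag}(\lambda_i)$ factors. For a cycle $(i_1\,i_2\cdots i_k)$, the relations force all $w_{i_s}$ to lie in a single twisted-scaling class. They are then either all nonzero or all zero.

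In the nonzero case, each adjacent pair $i_s,i_{s+1}$ admits a transposition stabilizer $\textup{diag}(\lambda,\lambda^{-1})(i_s,i_{s+1})$ by Part (2). The product of these transpositions is a stabilizer whose permutation part is $p'$. By the uniqueness in Part (3), this product equals $l'p'$.

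In the zero case, the positions $i_1,\ldots,i_k$ are fixed by $H_{\{i_1,\ldots,i_k\}}$, which is generated by plain transpositions together with the $\textup{diag}(\lambda_i)$.

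The delicate bookkeeping is verifying that iterating the relations around a cycle forces the product of the scalars to be $1$, so that the class is consistent and the decomposition matches $l'$ exactly. I expect to handle this by the uniqueness in Part (3) rather than by an explicit computation.
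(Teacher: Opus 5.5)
Your proof is correct and follows the paper's route: you split a stabilizer $lp$ along the disjoint cycles of $p$, prove uniqueness of the diagonal part of a cycle stabilizer through a nonzero neuron, and rebuild a cycle stabilizer from transposition stabilizers before invoking (3). The differences are that you use adjacent transpositions where the paper uses $(i,p^{k-1}(i))\cdots(i,p(i))$, and that you keep the pure diagonal factor $l_0$ on fixed points, which the paper's proof of (2) drops. Your twisted reading $(v_j,a_j^T)=(\lambda^{-1}v_i,\lambda a_i^T)$ of the ``in particular'' clause is the right one: the paper's proof slips in asserting $v_{p(i)}=\lambda_{p(i)}v_i$, and for instance $v=(1,2)$, $A=\begin{bmatrix}1\\2\end{bmatrix}$ has positively proportional neurons yet trivial stabilizer, so your planned comparison with Proposition~\ref{thm:stabtheoremmnk} will show that its bijection needs the same correction rather than confirm it, whereas Lemma~\ref{lem:StabOmegaPrimeCase}, having no $v$-part, is unaffected.
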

\begin{proof}
    \par We see that 
    $$
    (i, j)\textup{diag}(\lambda_1, \ldots, \lambda_i, \lambda_j, \ldots, \lambda_n) = \textup{diag}(\lambda_1, \ldots, \lambda_j, \lambda_i, \ldots, \lambda_n) (i, j)
    $$
    and so $D_n^+S_n = H_n$. This gives us (1).
    \par Now let $l\in D_n^+$ and $p\in S_n$ such that $lp\in \Stab_{H_n}\theta$. We may write $p$ as a product of disjoint cycles $p_1\ldots p_k$ and so correspondingly write $lp = l_1p_1\ldots l_kp_k$ where $l_1\ldots l_k = l$ and $l_i$ acts non-trivially on a coordinate only if $p_i$ acts non-trivially on it. Each of these $l_ip_i$ act disjointly and so $l_ip_i\in\Stab_{H_n}\theta$. 
    \par As such, let is take $p$ to be a $k$-cycle now, and $l = \textup{diag}(\lambda_1, \ldots, \lambda_n)$ to be a diagonal matrix where $\lambda_i\neq 1$ only if $p(i)\neq i$. We then see that if $lp\in\Stab_{H_n}\theta$ then $(v_i\lambda_{i}^{-1}, \lambda_{p(i)}m^T_i) = (v_{p^{-1}(i)}, m^T_{p(i)})$. This implies $v_{p(i)}=\lambda_{p(i)}v_i$ and $\lambda_{p(i)}a^T_i=a_{p(i)}^T$, and so $\lambda_{p(i)}(v_i, a_i^T) = (v_{p(i)}, a_{p(i)}^T)$. This implies that 
    $$
    \lambda_{p^l(i)}'(v_i, a_i^T) = (v_{p^l(i)}, a_{p^l(i)}^T)
    $$
    $$
    \lambda_{p^l(i)}'=\prod_{j=1}^l\lambda_{p^j(i)}.
    $$
    Finally, let $\lambda (v_i, a_i^T) = (v_j, a_j^T)$. Then the action $\textup{diag}(\lambda_i, \lambda_j)(i, j)$ where $\lambda_i = \lambda, \lambda_j = \lambda^{-1}$ stabilizes $\theta$, proving (2)
    \par The statement of (3) comes as a natural consequence of the work above, as assuming otherwise gives us two different $\lambda, \lambda'$ such that $\lambda(v_i, a_i^T) = (v_j, a_j^T) =\lambda'(v_i, a_i^T)$ for some $(v_i, a_i^T)\neq 0$.
    \par To prove (4), we see by (2) that it suffices to prove it for $p$ being a $k$-cycle, and $l = \textup{diag}(\lambda_1, \ldots, \lambda_n)$ being a diagonal matrix where $\lambda_i\neq 1$ only if $p(i)\neq i$. If $lp$ acts non-trivially only on rows of the form $(0, 0)$ then this follows immediately, so we shall assume it acts on at least one row, $(v_i, a_i^T)\neq 0$, non-trivially. We see that
    $$
    (i, p^{k-1}(i))\ldots (i, p(i)) = p
    $$
    and also that 
    $$
    \lambda_{p^l(i)}'(v_i, a_i^T) = (v_{p^l(i)}, a_{p^l(i)}^T)
    $$
    from the work before. Then we see that $\textup{diag}(\lambda^{(l)}_i, \lambda^{(l)}_{p^l(i)})(i, p^l(i))$, where $\lambda^{(l)}_i \lambda^{(l)}_{p^l(i)}=1$ and $\lambda^{(l)}_i  = \lambda'_{p^l(i)}$ stabilizes $\theta$. Moreover, there exists some $l^\dagger\in D_n^+$ such that
    $$
    \textup{diag}(\lambda^{(k-1)}_i, \lambda^{(k-1)}_{p^l(k-1)})(i, p^l(k-1))\ldots \textup{diag}(\lambda^{(1)}_i, \lambda^{(1)}_{p(i)})(i, p(i)) = l^\dagger p
    $$
    also stabilizes $\theta$. This means by (3) that
    $$
    \textup{diag}(\lambda^{(k-1)}_i, \lambda^{(k-1)}_{p^l(k-1)})(i, p^l(k-1))\ldots \textup{diag}(\lambda^{(1)}_i, \lambda^{(1)}_{p(i)})(i, p(i)) = l p
    $$
    proving (4). 
\end{proof}
\begin{corollary}\label{cor:stabmn1statement}
    Let $\theta$ be a parameter with $n'$ rows of the form $(0, 0).$ Then 
    $$
    \Stab_{H_n}\theta = \langle \mathcal{S} \rangle \times H_{n'}
    $$
    where $\mathcal{S}$ is a finite set with elements of the form $\textup{diag}(\lambda_i, \lambda_j) (i, j)$, where $\lambda_i\lambda_j=1$, with $\#\mathcal{S} \leq \binom{n-n'}{2}$. In particular, there is a bijection
    $$
    \mathcal{S}\longrightarrow \left\{\{i, j\}: \begin{split}
        i\neq j\\ (v_i, a_i^T), (v_j, a_j^T)\neq (0,0)\\ \exists \lambda>0, \lambda (v_i, a_i^T)=(v_j, a_j^T)
    \end{split}\}\right\}
    $$
\end{corollary}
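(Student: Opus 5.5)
The plan is to deduce the corollary from part (4) of the preceding proposition by separating the zero rows from the nonzero rows. Write $Z=\{i : (v_i,a_i^T)=(0,0)\}$, so $\#Z=n'$, and let $N$ be its complement. Part (4) says $\Stab_{H_n}\theta$ is generated by two kinds of elements: the one-coordinate scalings $\textup{diag}(\lambda_i)$, and the transposition-type elements $\textup{diag}(\lambda_i,\lambda_j)(i,j)$ with $\lambda_i\lambda_j=1$.

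First I sort these generators according to whether they touch $Z$ or $N$.

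\begin{itemize}
\item \emph{Scalings.} By the lemma on $\Stab_{D_n^+}\theta$, a scaling $\textup{diag}(\lambda_i)$ with $\lambda_i\neq 1$ stabilizes $\theta$ exactly when $i\in Z$.
\item \emph{Transpositions.} The element $\textup{diag}(\lambda_i,\lambda_j)(i,j)$ stabilizes $\theta$ exactly when $\lambda_i(v_i,a_i^T)$ and $(v_j,a_j^T)$ agree, up to the convention for which index carries $\lambda$. So either both rows are zero, or both are nonzero and positively proportional. A zero row can never be positively proportional to a nonzero one.
\item \emph{Pairs in $Z$.} For $i,j\in Z$, the elements $\textup{diag}(\lambda_i,\lambda_j)(i,j)$, together with the scalings on $Z$, generate all of $H$ restricted to the coordinates $Z$. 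This uses the factorization $D^+S=H$ from part (1).
\end{itemize}

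Consequently the subgroup generated by the $Z$-generators is a copy of $H_{n'}$ acting on the coordinates in $Z$. Conversely, every element of $H_{n'}$ on those coordinates fixes the zero rows, so this copy lies in the stabilizer.

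Next I treat the nonzero rows. For each unordered pair $\{i,j\}\subseteq N$ with $i\neq j$ and $\lambda(v_i,a_i^T)=(v_j,a_j^T)$ for some $\lambda>0$, the scalar $\lambda$ is unique because the row is nonzero; this is part (3). So there is exactly one stabilizing element $s_{\{i,j\}}=\textup{diag}(\lambda_i,\lambda_j)(i,j)$ with $\lambda_i\lambda_j=1$. Let $\mathcal{S}$ be the set of these elements. The map $s_{\{i,j\}}\mapsto\{i,j\}$ is then the required bijection, and $\#\mathcal{S}\leq\binom{n-n'}{2}$ since there are at most that many pairs in $N$.

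Finally I assemble the product decomposition. The elements of $\mathcal{S}$ are supported on $N$ and the copy of $H_{n'}$ is supported on $Z$. Hence the two subgroups commute and intersect trivially, and their product is the subgroup generated by all the generators, namely $\Stab_{H_n}\theta$. This gives $\Stab_{H_n}\theta=\langle\mathcal{S}\rangle\times H_{n'}$.

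The main point needing care is the claim that every stabilizing generator is supported entirely in $Z$ or entirely in $N$. This rests on the observation that a nonzero row can be neither fixed by a nontrivial scaling nor positively proportional to a zero row. I expect this to be the only nonroutine step, and it follows directly from the computations already made in the proof of the proposition.
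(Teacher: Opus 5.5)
Your route is the paper's route. The paper gives no separate proof of this corollary; it reads it off part (4) of the preceding proposition. Your splitting into zero rows $Z$ and nonzero rows $N$, the appeal to part (3) for uniqueness, and the direct-product assembly are all sound.

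The point that actually needs care is not the $Z$/$N$ support claim you single out but your characterization of when $h=\textup{diag}(\lambda_i,\lambda_j)(i,j)$ stabilizes $\theta$, and that step is false. Under $h\cdot(v,A)=(vh^{-1},hA)$, a neuron's output weight transforms inversely to its input row. The new $i$-th row is $(\lambda_i^{-1}v_j,\lambda_i a_j^T)$ and the new $j$-th row is $(\lambda_j^{-1}v_i,\lambda_j a_i^T)$. So for nonzero rows, $h$ fixes $\theta$ if and only if $\lambda_i\lambda_j=1$ and $(v_j,a_j^T)=(\lambda_j^{-1}v_i,\lambda_j a_i^T)$. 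In other words, row $j$ must lie in the orbit of row $i$ under $(v,a^T)\mapsto(\mu^{-1}v,\mu a^T)$; it need not be a positive multiple of row $i$.

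The two conditions genuinely differ. Take $m=1$, $n=2$, no biases, and $v=(1,2)^T$.
\begin{itemize}
\item If $A=(1,2)^T$, the rows $(1,1)$ and $(2,2)$ are positively proportional, yet $\Stab_{H_2}\theta=1$. A swap would need $\lambda_1=v_2/v_1=2$ and $\lambda_1=a_1/a_2=1/2$ at once.
\item If $A=(2,1)^T$, the rows $(1,2)$ and $(2,1)$ are not positively proportional, yet $\textup{diag}(2,1/2)(1,2)$ fixes $\theta$.
\end{itemize}

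You inherited this slip rather than introduced it. The paper's proof of part (2) derives $v_{p(i)}=\lambda_{p(i)}v_i$ together with $a_{p(i)}^T=\lambda_{p(i)}a_i^T$, and the error passes into the corollary's statement. Reading $\lambda(v_i,a_i^T)$ as scalar multiplication, as you do, the displayed bijection is false and no argument can prove it.

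The repair is to replace the condition $\lambda(v_i,a_i^T)=(v_j,a_j^T)$ by $(\lambda^{-1}v_i,\lambda a_i^T)=(v_j,a_j^T)$. With that change every other step of yours goes through verbatim:
\begin{itemize}
\item a zero row is still fixed by every scaling and cannot be matched with a nonzero row;
\item the $\lambda$ attached to a pair of nonzero rows is still unique;
\item the stabilizing generators still split into those supported on $Z$ and those supported on $N$;
\item $\Stab_{H_n}\theta=\langle\mathcal{S}\rangle\times H_{n'}$ with $\#\mathcal{S}\leq\binom{n-n'}{2}$ still holds.
\end{itemize}
Lemma \ref{lem:StabOmegaPrimeCase} involves only $(A,b)$, so plain positive proportionality is correct there. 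The later use of stabilizers in Section \ref{section:genericity} is therefore unaffected.
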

\begin{proof}[Proof of Theorem \ref{thm:stabtheoremmnk}]
    This follows from Corollary \ref{cor:stabmn1statement} and Lemma \ref{lem:intersectionstab}: By Corollary \ref{cor:stabmn1statement} we see that
    $$
    \Stab_{H_n}\pi_i(\theta) = \langle \mathcal{S}_i \rangle \times H_{n_{i}'}.
    $$
    And so by Lemma \ref{lem:intersectionstab},
    $$
    \bigcap_{i=1}^{k} \langle \mathcal{S}_i \rangle \times H_{n_{i}'} = \Stab_{H_n}\theta.
    $$
    It follows that 
    $$
    \Stab_{H_n}\theta = \langle \mathcal{S} \rangle \times H_{n'}
    $$
    where $\mathcal{S}$ and $n'$ are given as in the Theorem.
\end{proof}
Finally, we see applying the steps similar as above the following result:
\begin{lemma}\label{lem:StabOmegaPrimeCase}
    Let $A\in M_{n\times m}(\mathbb{R}), b\in \mathbb{R}^n$ be of the form
    $$
    A = \begin{bmatrix}
        a_1^T \\ \vdots \\ a_n^T
    \end{bmatrix}, b= \begin{bmatrix}
        b_1 \\ \vdots \\ b_n
    \end{bmatrix}
    $$
Let $\#\{i: (a_i,b_i)= 0\}=n'$. Then 
    $$
    \Stab_{H_n} (A,b) = \langle \mathcal{S}\rangle \times H_{n'}
    $$
    where $\mathcal{S}$ is a finite set with elements of the form $\textup{diag}(\lambda_i, \lambda_j) (i, j)$, where $\lambda_i\lambda_j=1$, with $\#\mathcal{S} \leq \binom{n-n'}{2}$. In particular, there is a bijection
    $$
    \mathcal{S}\longrightarrow \left\{\{i, j\}: \begin{split}
        i\neq j\\ (a_i, b_i), (a_j, b_j)\neq (0,0)\\ \exists \lambda>0, \lambda (a_i, b_i)=(a_j, b_j)
    \end{split}\right\}
    $$
\end{lemma}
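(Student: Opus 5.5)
The plan is to reduce Lemma \ref{lem:StabOmegaPrimeCase} to Corollary \ref{cor:stabmn1statement}, treating $(A,b)$ as a parameter of $\Omega_{(m+1,n,1)}$ with trivial outer layer. The action of $h\in H_n$ on the pair is $h\cdot(A,b)=(hA,hb)=h\begin{bmatrix} A & b\end{bmatrix}$. This is exactly the action on the second coordinate of $(v,\begin{bmatrix} A & b\end{bmatrix},0,0)$, except that there is no first-layer vector $v$.

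There are two ways to proceed.
\begin{enumerate}
\item Formal route: consider $\theta_0=(0,\begin{bmatrix} A & b\end{bmatrix},0,0)\in\Omega_{(m+1,n,1)}$. Since $0\cdot h^{-1}=0$, we get $\Stab_{H_n}\theta_0=\Stab_{H_n}(A,b)$. The rows $(v_i,\tilde a_i^T)$ of $\theta_0$ are $(0,a_i^T,b_i)$, which vanish exactly when $(a_i,b_i)=0$, so the count $n'$ agrees. Moreover $\lambda(0,a_i,b_i)=(0,a_j,b_j)$ holds if and only if $\lambda(a_i,b_i)=(a_j,b_j)$.
\item Direct route: rerun the arguments of the preceding lemmas and proposition verbatim with $v$ deleted.
\end{enumerate}
I would take the first route. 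Corollary \ref{cor:stabmn1statement} then immediately gives $\Stab_{H_n}(A,b)=\langle\mathcal{S}\rangle\times H_{n'}$ with the stated bijection.

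One point needs checking before invoking the corollary: the earlier analysis must not use $v\neq 0$ anywhere. The relevant steps are the following.
\begin{enumerate}
\item Each disjoint cycle-factor $l_ip_i$ of a stabilizing element $lp$ stabilizes on its own.
\item The relation $\lambda_{p(i)}(v_i,a_i^T)=(v_{p(i)},a_{p(i)}^T)$ holds.
\item The uniqueness statement (3) holds.
\end{enumerate}
Step 1 relies only on disjointness of supports. For step 2, with $v=0$ the $v$-component reads $0=0$, and the remaining relation $\lambda_{p(i)}a_i=a_{p(i)}$ comes from the row action $hA$ alone, which is the row action on $\begin{bmatrix} A & b\end{bmatrix}$. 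Step 3 only uses that the row $(a_i,b_i)$ is nonzero.

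The step I expect to need the most care is the direct-product decomposition, that is, that $\langle\mathcal{S}\rangle$ and $H_{n'}$ commute and intersect trivially. Here $H_{n'}$ is the full copy of $H_{n'}$ acting on the zero rows. Every row $(a_i,b_i)=0$ is fixed by any positive scaling, and any permutation among zero rows also fixes them. The elements of $\mathcal{S}$, on the other hand, are supported on nonzero rows. Their supports are disjoint, so the subgroups commute and intersect trivially.

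It remains to see that a stabilizer cannot mix a zero row with a nonzero row. This holds because a positive multiple of a nonzero row is nonzero, so an element sending a nonzero row to a zero row cannot fix $(A,b)$. Hence every stabilizing element factors as a product over the two supports, which gives the decomposition.

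Finally, $\#\mathcal{S}\le\binom{n-n'}{2}$ follows from the bijection, since $\mathcal{S}$ injects into the set of unordered pairs of nonzero rows.
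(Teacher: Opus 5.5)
Your proposal is correct and is essentially the paper's own argument. The paper only says to rerun the preceding stabilizer analysis for the action $h\cdot(A,b)=(hA,hb)$. Your embedding $(0,\begin{bmatrix} A & b\end{bmatrix},0,0)\in\Omega_{(m+1,n,1)}$ is a clean way of packaging that, in the same spirit as the paper's identification $\Stab_{H_n}(M,A,b,c)=\Stab_{H_n}(M,\begin{bmatrix} A & b\end{bmatrix},0,0)$.

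One caution: Corollary \ref{cor:stabmn1statement} should not be cited as a black box, because it is misstated when $v\neq 0$. The first-layer vector $v$ transforms by $h^{-1}$ while $A$ transforms by $h$. Consequently, a weighted swap $\textup{diag}(\lambda_i,\lambda_j)(i,j)$ fixes $(v,A)$ exactly when $(v_j,a_j)=(\lambda_i v_i,\lambda_i^{-1}a_i)$, not when the rows are positively proportional. For example, $v=(1,2)^T$, $A=(1,\tfrac12)^T$ is fixed by $\textup{diag}(2,\tfrac12)(1,2)$, even though $(1,1)$ and $(2,\tfrac12)$ are not proportional.

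At $v=0$ this discrepancy disappears. The only relation used is the row relation coming from $hA=A$, $hb=b$, and that relation is correct. So your verification paragraph, not the corollary, is what makes the reduction valid. You should therefore present the argument as the direct computation on the rows of $\begin{bmatrix} A & b\end{bmatrix}$ rather than as an appeal to the corollary.
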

\begin{corollary}\label{cor:StabilizerAbAndFibre}
    Let $\theta = (M, A, b, c)$ be a parameter such that $\Stab_{H_n}(A,b)\neq 1$. Then $\rho_{(m,n,k)}^{-1}(\theta)\not\simeq H_n$
\end{corollary}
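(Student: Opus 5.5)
The plan is to exhibit, inside the fibre $\rho_{(m,n,k)}^{-1}(\theta)$, a subset homeomorphic to an open ball of dimension strictly larger than $n=\dim H_n$. Every component of $H_n$ is diffeomorphic to $\mathbb{R}_{>0}^n$, so if the fibre were diffeomorphic to $H_n$ it would be a topological $n$-manifold. By invariance of domain, an $n$-manifold contains no subset homeomorphic to an open ball of dimension greater than $n$. This gives the contradiction. By Lemma \ref{lem:StabOmegaPrimeCase}, $\Stab_{H_n}(A,b)\neq 1$ means one of two things: either some row $(a_i,b_i)=(0,0)$ (so $n'\geq 1$), or $\mathcal{S}\neq\emptyset$, i.e.\ there are $i\neq j$ with nonzero rows and some $\lambda>0$ with $\lambda(a_i,b_i)=(a_j,b_j)$. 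I treat these two cases separately.

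\emph{Proportional rows, no zero rows.} Here $\sigma(a_j^Tx+b_j)=\lambda\sigma(a_i^Tx+b_i)$. Hence for every $t\in\mathbb{R}^k$ the parameter $\theta_t$, obtained by replacing the columns $m_i,m_j$ of $M$ with $m_i+\lambda t$ and $m_j-t$, lies in the fibre. Consider the continuous map
$$
F: D_n^+\times\mathbb{R}^k\to\rho_{(m,n,k)}^{-1}(\theta),\qquad (l,t)\mapsto l\cdot\theta_t ,
$$
whose domain has dimension $n+k\geq n+1$. I check injectivity by reading off data from the image. The $(A,b)$-component of $l\cdot\theta_t$ is $l(A,b)$. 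Since every row of $(A,b)$ is nonzero, this determines each diagonal entry of $l$, because $\Stab_{D_n^+}(A,b)=1$ by Lemma \ref{lem:StabOmegaPrimeCase}. The $M$-component $M_t l^{-1}$ then determines $M_t$, and therefore $t$. By invariance of domain, an injective continuous map from an open set of $\mathbb{R}^{n+k}$ into the fibre rules out the fibre being an $n$-manifold.

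\emph{Some zero row $(a_i,b_i)=(0,0)$.} The $i$-th neuron then contributes the constant $m_i\sigma(0)=0$, so the fibre contains every parameter with column $m_i$ replaced by $0$. Once $m_i=0$, the row $(a_i,b_i)$ may be chosen freely. This gives a family parametrised by $(a_i,b_i)\in\mathbb{R}^{m+1}$. I combine it with a diagonal scaling on the remaining nonzero rows whose stabilizer is trivial. More precisely, I first apply the previous case's deformation to separate proportional rows if needed, or simply restrict the scaling to a set of representatives. I also choose $(a_i,b_i)$ in an open set where it is not proportional to any other row. Counting dimensions, this yields an injective map from an open set of dimension $(m+1)+(\text{number of nonzero rows})+(\text{free columns }m_{i'}\text{ at zero rows }i'\neq i)$. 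I need this number to exceed $n$. For $m\geq 1$ this holds since $m+1\geq 2$ and the zero rows other than $i$ contribute their free columns. Alternatively, and more robustly, I take an open set in which all zero rows are simultaneously reset with free $(a_{i'},b_{i'})$ and $m_{i'}=0$. Each of these contributes $m+1\geq 1$ dimensions, against the single dimension per row in $H_n$.

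The main obstacle is the bookkeeping for injectivity when several degeneracies coexist, i.e.\ several zero rows together with several proportionality classes. In that situation I must make sure the constructed map is genuinely injective and that its domain dimension strictly exceeds $n$. My first attempt is to choose the auxiliary parameters (the freed rows $(a_{i'},b_{i'})$ and the shifts $t$) generically, so that the rows of the deformed $(A,b)$ are nonzero and pairwise non-proportional except for one designated pair. Then $D_n^+$ acts freely on that image, and injectivity reduces to the single-pair computation above.
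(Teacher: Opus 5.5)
Your proof is correct, and it takes a genuinely different route from the paper's.

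\textbf{The paper's route.} It starts from the same dichotomy given by Lemma \ref{lem:StabOmegaPrimeCase}: either $(A,b)$ has a zero row, or two nonzero rows satisfy $\lambda(a_i,b_i)=(a_j,b_j)$ for some $\lambda>0$. In each case it then exhibits a single alternative parameter in the fibre. In the first case it zeroes the column at the zero row; in the second it moves the output weight of one proportional neuron onto the other. In effect, this only shows that the fibre is larger than the orbit $H_n\cdot\theta$, i.e.\ that the action is not simply transitive. It leaves implicit why the new parameter lies off the orbit. It also says nothing when the relevant column is already zero, since the exhibited parameter is then $\theta$ itself. Incidentally, the merged weight should be $v_j+v_i/\lambda$, which your shift $m_i+\lambda t$, $m_j-t$ gets right.

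\textbf{Your route.} You turn each one-off move into a continuous family: the shift by $t\in\mathbb{R}^k$ along a proportional pair, or the free row $(a_i,b_i)\in\mathbb{R}^{m+1}$ once $m_i=0$. You combine this with the $D_n^+$-scaling, verify injectivity, and apply invariance of domain. This proves the literal statement that the fibre is not even homeomorphic to $H_n$ in the subspace topology. That in turn rules out a simply transitive action, the reading used in Section \ref{section:genericity}. It also handles all degenerate sub-cases uniformly. The cost is invariance of domain and the harmless assumption $m\geq 1$.

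\textbf{The ``main obstacle'' is not an obstacle.} On nonzero rows the map $l\mapsto l(A,b)$ is injective regardless of any proportionalities among those rows, and a freed row is read off directly from the output. So your split into ``some zero row'' versus ``no zero row, hence a proportional pair'' already covers every mixed configuration. Freeing all $z$ zero rows (with their columns set to $0$) and scaling the $n-z$ nonzero rows gives an injective map from an open set of dimension $z(m+1)+(n-z)=n+mz>n$. No generic choices are needed. If you do pursue the generic variant, do not let $D_n^+$ also scale the freed rows: that double-counts directions and destroys injectivity.
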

\begin{proof}
    By our hypothesis and Lemma \ref{lem:StabOmegaPrimeCase}, we know that there exist $i, j$ such that $(a_i, b_i)=(0,0)$ or there exists some $\lambda>0$ such that $\lambda(a_i,b_i)=(a_j, b_j)$. This gives us two cases to consider:
    \par When $(a_i,b_i)=(0,0)$, we see that $\rho_{(m,n,k)}(M, A, b, c) = \rho_{(m,n,k)}(M^\dagger, A, b, c)$ where 
    $$
    M^\dagger = \begin{bmatrix}
        v_1^\dagger & \ldots & v_n^\dagger
    \end{bmatrix}, v_l^\dagger = \begin{cases}
        v_l & l\neq i \\ 0  & l=i
    \end{cases}.
    $$
    So $\rho_{(m,n,k)}^{-1}(\theta)\not\simeq H_n$.
    \par When there exists some $\lambda>0$ such that $\lambda(a_i,b_i)=(a_j, b_j)$, we see that $\rho_{(m,n,k)}(M, A, b, c) = \rho_{(m,n,k)}(M', A, b, c)$ where 
    $$
    M' = \begin{bmatrix}
        v_1' & \ldots & v_n'
    \end{bmatrix}, v_l' = \begin{cases}
        v_l & l\neq i,j \\ 0  & l=i \\ v_l + \lambda v_i & l = j.
    \end{cases}
    $$
    So $\rho_{(m,n,k)}^{-1}(\theta)\not\simeq H_n$.
\end{proof}
\section{Structure of $\im \rho_{(m ,n, k)}$}\label{section:mn1case}
\par In order to derive some statement on the structure of fibres, diffeomorphic to $H_n$, we shall first describe the structure of $\im\rho_{(m,n,k)}$ as a quotient space of parameter space $\Omega_{(m,n,k)}$. To this extent, it is expedient to first understand the structure of $\im\rho_{(m,n,1)}$ as a quotient space of $\Omega_{(m,n,1)}$ in order to build up to the construction of $\im\rho_{(m,n,k)}$. 
\par As such, let us consider functions in the architecture $\mathbb{R}^m\rightarrow \mathbb{R}^n\rightarrow \mathbb{R}$. We see that for any such $f$, it is of the form
$$
f(x) = v^T\sigma (Ax+b) + C,
$$
for some $A\in M_{m\times n}(\mathbb{R}), v, b\in \mathbb{R}^n,$ and $C\in\mathbb{R}$. We know that there is a diagonal $H_n$ action which preserves the function and so 
$$
f(x) = v^Th^{-1}\sigma h(Ax+b) + C
$$
for all $h\in H_n$. One might hope to regard the space $H_n\backslash \Omega_{(m, n, 1)}$, being the equivalence classes of $\Omega_{(m, n, 1)}$ under the $H_n$-action prescribed above, as the space of functions $f\in C^0(\mathbb{R}^m, \mathbb{R})$ realizable via ReLU on 2 layers: that is to say for any $f\in \im \rho_{(m, n, 1)}$, there is a unique class $\mathcal{C}\in H_n\backslash \Omega_{(n, m, 1)}$ such that $\rho_{(n, m, 1)}(\theta)=f$ for $\theta\in \mathcal{C}$. Indeed this is not true by considering the following obvious case: 
\begin{lemma}\label{lem:weakcounterexample}
    Let 
    $$
    (v, A, b, C) = \left(\begin{bmatrix}
        v_1 \\ \vdots \\ v_n
    \end{bmatrix}, \begin{bmatrix}
        a_1^T \\ \vdots \\ a_n^T
    \end{bmatrix}, \begin{bmatrix}
        b_1 \\ \vdots \\ b_n
    \end{bmatrix}, C\right).
    $$
    such that $v_i=0$ for some $i$. We see that 
    $$
    \rho_{(m,n,1)}(v, A, b, C) = \rho_{(m,n,1)}(v, A_{w}, b_\lambda, C)
    $$
    for all $w\in\mathbb{R}^m, \lambda\in \mathbb{R}$ where 
    $$
    A_w = \begin{bmatrix}
        a_1^T \\ \vdots  \\ a_{i-1}^T \\ w \\ a_{i+1}^T\\ \vdots \\ a_n^T
    \end{bmatrix}, b_\lambda = \begin{bmatrix}
        b_1 \\ \vdots  \\ b_{i-1} \\ \lambda \\ b_{i+1}\\ \vdots \\ b_n
    \end{bmatrix}.
    $$
\end{lemma}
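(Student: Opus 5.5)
The claim is that when $v_i=0$, replacing the $i$-th row of $A$ by $w^T$ and $b_i$ by $\lambda$ leaves the realized function unchanged. The plan is to expand the realization map coordinatewise and observe that the $i$-th hidden neuron contributes nothing, whatever its incoming weights are.

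First I will write $\rho_{(m,n,1)}(v,A,b,C)$ as a sum over hidden neurons. Since $\sigma$ acts coordinatewise, the $l$-th coordinate of $\sigma(Ax+b)$ is $\sigma(a_l^Tx+b_l)$. Hence
$$
\rho_{(m,n,1)}(v,A,b,C)(x)=\langle v,\sigma(Ax+b)\rangle + C=\sum_{l=1}^n v_l\,\sigma(a_l^Tx+b_l)+C .
$$
The same expansion for $(v,A_w,b_\lambda,C)$ gives
$$
\rho_{(m,n,1)}(v,A_w,b_\lambda,C)(x)=\sum_{l\neq i} v_l\,\sigma(a_l^Tx+b_l)+v_i\,\sigma(w x+\lambda)+C .
$$
Here $w$ is read as a row vector, matching its placement as the $i$-th row of $A_w$.

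Second, the terms with $l\neq i$ agree in the two sums, because $A_w$ and $A$ share those rows and $b_\lambda$ and $b$ share those entries. The $i$-th terms are $v_i\sigma(a_i^Tx+b_i)$ and $v_i\sigma(wx+\lambda)$. Both vanish identically because $v_i=0$ and $\sigma$ is real-valued everywhere. So the two functions agree at every $x\in\mathbb{R}^m$, for every $w$ and $\lambda$.

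There is no real obstacle here. The only point needing care is the bookkeeping in the first step: checking that the $i$-th row of $A_w$ and the $i$-th entry of $b_\lambda$ enter the output only through the single term multiplied by $v_i$. That is immediate from the coordinatewise definition of $\sigma$ and from the inner product with $v$.
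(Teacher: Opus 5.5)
Your proof is correct and is essentially the paper's own reasoning. The paper gives no written proof and treats the lemma as an immediate direct calculation. That calculation is exactly your expansion $\rho_{(m,n,1)}(v,A,b,C)(x)=\sum_{l}v_l\,\sigma(a_l^Tx+b_l)+C$ together with your observation that the $i$-th summand vanishes for any incoming weights $(w,\lambda)$ once $v_i=0$.
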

Moreover, even if we require $v_i\neq 0$ for all $i$ in our parameter, we still ourselves short of our hope as we can find the following two relations by direct calculation\footnote{It is in fact easy to see that there more than these two relations. However, for the sake of narrative we shall forego their mention for now.}:
\begin{lemma}\label{lem:strongcounterexample}
    \begin{enumerate}
        \item Suppose there exists some $\lambda>0$ such that $\lambda (a_j, b_j) = (a_i, b_i)$ for some $i\neq j$. Then $\rho_{(m,n,1)}(v, A, b, C) = \rho_{(m, n, 1)}(v', A, b, C)$ where 
        $$
        v'_l = \begin{cases} v_l & l\neq i, j\\ v_i + \lambda v_j & l = i\\ 0 & l=j\end{cases}
        $$
        \item Suppose $a_i=0$ for some $i$. Then $\rho_{(m,n,1)}(v, A, b, C) = \rho_{(m, n, 1)}(v^\dagger, A, b, C+v_i\sigma(b_i))$ where 
        $$
        v^\dagger_l = \begin{cases} v_l & l\neq i\\ 0 & l = i\end{cases}
        $$
    \end{enumerate}
\end{lemma}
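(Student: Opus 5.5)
Both parts are pointwise identities in $x\in\mathbb{R}^m$, so I will write out the realized function
$$
\rho_{(m,n,1)}(v,A,b,C)(x)=\sum_{l=1}^n v_l\,\sigma(\langle a_l,x\rangle+b_l)+C,
$$
and note that changing the parameter in only a few coordinates changes only the corresponding summands. In each case I will isolate the summands indexed by the affected neurons and show that their contribution is the same before and after the change.

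For part (1), the neurons $l\neq i,j$ have identical summands on both sides, so it suffices to compare $v_i\sigma(\langle a_i,x\rangle+b_i)+v_j\sigma(\langle a_j,x\rangle+b_j)$ with $(v_i+\lambda v_j)\sigma(\langle a_i,x\rangle+b_i)$. Since $\lambda(a_j,b_j)=(a_i,b_i)$, we have $\langle a_i,x\rangle+b_i=\lambda(\langle a_j,x\rangle+b_j)$. Positive homogeneity of ReLU, $\sigma(\lambda t)=\lambda\sigma(t)$ for $\lambda>0$, then gives $\sigma(\langle a_j,x\rangle+b_j)=\lambda^{-1}\sigma(\langle a_i,x\rangle+b_i)$. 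The $j$-th summand should therefore be absorbed into the $i$-th with coefficient $\lambda^{-1}v_j$, not $\lambda v_j$. So before finishing I will check the direction of the scaling convention: as stated, with $\lambda(a_j,b_j)=(a_i,b_i)$, the correct new coefficient is $v_i+\lambda^{-1}v_j$, and the formula $v_i+\lambda v_j$ holds under the reversed hypothesis $\lambda(a_i,b_i)=(a_j,b_j)$, which is the convention used in Corollary \ref{cor:StabilizerAbAndFibre}. I will prove the identity with whichever coefficient matches the hypothesis; either way it is a single application of homogeneity. Since $v'_j=0$, the $j$-th summand vanishes, and the two sums agree for every $x$.

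For part (2), if $a_i=0$ then the $i$-th summand equals $v_i\sigma(b_i)$, a constant independent of $x$. Setting $v^\dagger_i=0$ deletes this summand, and adding $v_i\sigma(b_i)$ to $C$ restores it, so the two functions agree for every $x$.

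Neither part has a genuine obstacle. The only delicate point is the bookkeeping of $\lambda$ versus $\lambda^{-1}$ in part (1), which I will settle by writing the homogeneity step explicitly, as above.
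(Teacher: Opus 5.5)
Your proof is correct and is the same ``direct calculation'' the paper cites without writing out: you compare only the affected summands, using $\sigma(\lambda t)=\lambda\sigma(t)$ for $\lambda>0$ in (1) and the fact that the $i$-th summand is the constant $v_i\sigma(b_i)$ when $a_i=0$ in (2). Your catch in (1) is also right. Under the printed hypothesis $\lambda(a_j,b_j)=(a_i,b_i)$ the new coefficient must be $v_i+\lambda^{-1}v_j$: for instance $a_i=2$, $a_j=1$, $b=0$, $C=0$, $v_i=0$, $v_j=1$, $\lambda=2$ would turn $\sigma(x)$ into $2\sigma(2x)=4\sigma(x)$. However, Corollary \ref{cor:StabilizerAbAndFibre} does not confirm the printed formula as you suggest. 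It uses the hypothesis $\lambda(a_i,b_i)=(a_j,b_j)$ but absorbs neuron $i$ into neuron $j$ with coefficient $v_j+\lambda v_i$, which has the same inversion (it should be $v_j+\lambda^{-1}v_i$), although the slip does not affect that corollary's conclusion.
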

It then follows that the space $\im \rho_{(n, m,1)}$ is a quotient of the space
$$
    H_n \backslash \Omega_{(n, m, 1)}/\sim
$$
where the equivalence $\sim$ is generated\footnote{As this equivalence acts simultaneously on the space $\Omega_{(n, m, 1)}$ may rely on the permutation equivalence given by the $H_n$-action to extend the equivalence to all cases covered by Lemmas \ref{lem:weakcounterexample} and \ref{lem:strongcounterexample}} by 
$$
    \left(\begin{bmatrix}
        0 \\ v_2 \\ v_3\\ \vdots
    \end{bmatrix}, \begin{bmatrix}
        a_1^T \\ a_1^T \\ a_3^T\\ \vdots
    \end{bmatrix}, \begin{bmatrix}
        b_1 \\ b_1 \\ b_3\\ \vdots
    \end{bmatrix}, C\right) \sim \left(\begin{bmatrix}
        0 \\ v_2 \\ v_3 \\  \vdots
    \end{bmatrix}, \begin{bmatrix}
        0 \\ a_1^T \\ a_3^T\\ \vdots
    \end{bmatrix}, \begin{bmatrix}
        0 \\ b_1 \\ b_3\\ \vdots
    \end{bmatrix}, C\right) 
    $$
    $$
    \left(\begin{bmatrix}
        v_1 \\ v_2 \\ v_3\\ \vdots
    \end{bmatrix}, \begin{bmatrix}
        a_1^T \\ a_1^T \\ a_3^T\\ \vdots
    \end{bmatrix}, \begin{bmatrix}
        b_1 \\ b_1 \\ b_3\\ \vdots
    \end{bmatrix}, C\right) \sim \left(\begin{bmatrix}
        v_1 +v_2 \\ 0 \\ v_3 \\  \vdots
    \end{bmatrix}, \begin{bmatrix}
        a_1^T \\ a_1^T \\ a_3^T\\ \vdots
    \end{bmatrix}, \begin{bmatrix}
        b_1 \\ b_1 \\ b_3\\ \vdots
    \end{bmatrix}, C\right) 
    $$
    $$
    \left(\begin{bmatrix}
        v_1 \\ v_2 \\  \vdots
    \end{bmatrix}, \begin{bmatrix}
        0 \\ a_2^T \\ \vdots
    \end{bmatrix}, \begin{bmatrix}
        b_1 \\ b_2 \\ \vdots
    \end{bmatrix}, C\right) \sim \left(\begin{bmatrix}
        0 \\ v_2 \\  \vdots
    \end{bmatrix}, \begin{bmatrix}
        0 \\ a_2^T \\ \vdots
    \end{bmatrix}, \begin{bmatrix}
        0 \\ b_2 \\  \vdots
    \end{bmatrix}, C+v_1\sigma(b_1)\right)
    $$
While this space, as we shall learn, does not have a one-to-one correspondence with $\im\rho_{(m, n, 1)}$, it gives us enough information to define a strong enough canonical form, which we shall call the \textit{minimal form}, to understand the structure of $\im \rho_{(n, m, 1)}$:
\begin{defin}\label{def:minimalform}
    \par For a given parameter $\theta =(v, A, b, C)$, consider it's equivalence class, denoted as $\mathcal{C}_{\theta}$, in 
    $$
    H_n \backslash \Omega_{(n, m, 1)}/\sim
    $$
    where the equivalence $\sim$ is generated by
    \begin{enumerate}
        \item 
    $$
    \left(\begin{bmatrix}
        0 \\ v_2 \\ v_3\\ \vdots
    \end{bmatrix}, \begin{bmatrix}
        a_1^T \\ a_1^T \\ a_3^T\\ \vdots
    \end{bmatrix}, \begin{bmatrix}
        b_1 \\ b_1 \\ b_3\\ \vdots
    \end{bmatrix}, C\right) \sim \left(\begin{bmatrix}
        0 \\ v_2 \\ v_3 \\  \vdots
    \end{bmatrix}, \begin{bmatrix}
        0 \\ a_1^T \\ a_3^T\\ \vdots
    \end{bmatrix}, \begin{bmatrix}
        0 \\ b_1 \\ b_3\\ \vdots
    \end{bmatrix}, C\right) 
    $$
        \item 
    $$
    \left(\begin{bmatrix}
        v_1 \\ v_2 \\ v_3\\ \vdots
    \end{bmatrix}, \begin{bmatrix}
        a_1^T \\ a_1^T \\ a_3^T\\ \vdots
    \end{bmatrix}, \begin{bmatrix}
        b_1 \\ b_1 \\ b_3\\ \vdots
    \end{bmatrix}, C\right) \sim \left(\begin{bmatrix}
        v_1 +v_2 \\ 0 \\ v_3 \\  \vdots
    \end{bmatrix}, \begin{bmatrix}
        a_1^T \\ a_1^T \\ a_3^T\\ \vdots
    \end{bmatrix}, \begin{bmatrix}
        b_1 \\ b_1 \\ b_3\\ \vdots
    \end{bmatrix}, C\right) 
    $$
        \item 
    $$
    \left(\begin{bmatrix}
        v_1 \\ v_2 \\  \vdots
    \end{bmatrix}, \begin{bmatrix}
        0 \\ a_2^T \\ \vdots
    \end{bmatrix}, \begin{bmatrix}
        b_1 \\ b_2 \\ \vdots
    \end{bmatrix}, C\right) \sim \left(\begin{bmatrix}
        0 \\ v_2 \\  \vdots
    \end{bmatrix}, \begin{bmatrix}
        0 \\ a_2^T \\ \vdots
    \end{bmatrix}, \begin{bmatrix}
        0 \\ b_2 \\  \vdots
    \end{bmatrix}, C+v_1\sigma(b_1)\right)
    $$
    \end{enumerate}
    \par Now, let $f$ be any function such that $f(1)> f(-1)> f(0)$ and consider the following subset of $\mathcal{C}_{\theta}$: 
    $$
    \left\{(u, A', b', C') \in \mathcal{C}_{\theta} :
    \begin{split}
        u\in \{0, \pm 1\}^n,\\ 
        u_i = 0\Leftrightarrow (a_i', b_i')=(0, 0),\\
        f(u_i)\geq f(u_{i+1}),\\
        f(u_i)=f(u_{i+1})\Rightarrow (a_i, b_i)\geq_l (a_{i+1}, b_{i+1})
    \end{split}\right\}
    $$
    where $>_l$ denotes the lexical ordering. This set is endowed with an ordering by taking $\langle u, u\rangle$, and we shall denote the minimum of this set with respect to this ordering as $\theta_{min}=(v_{min}, A_{min}, b_{min}, C_{min})$, calling it the minimal form of $(v, A, b, C)$. Moreover, $\theta_{min}=\theta'_{min}$ if and only if $\mathcal{C}_{\theta}= \mathcal{C}_{\theta'}$. Finally, if $\theta=\theta_{min}$ then we shall call $\theta$ minimal.
\end{defin}
\begin{remark}\label{remark:generalizing}
    The equivalence $\sim$ does not represent a set of functional symmetries unique to ReLU but rather is applicable almost universally to shallow neural networks. In place of $\sigma$, let us take $f=(f_1, \ldots, f_n)$ to be a function where $f_i=f_j$ for all choices of $i, j$, and consider a general $k$. For any parameter
    $$
    \theta = (M, A, b, c) = \left(\begin{bmatrix}
        v_1 & \ldots & v_n
    \end{bmatrix}, \begin{bmatrix}
        a_1^T \\ \vdots \\ a_n^T
    \end{bmatrix}, \begin{bmatrix}
        b_1 \\ \vdots \\ b_n
    \end{bmatrix}, c\right)
    $$
    in $\Omega_{(m,n,k)}$, we may define a realization with respect to $f$, denoted as $\rho_{(m,n,k)}^{(f)}$, like so:
    $$
    \rho_{(m,n,k)}^{(f)}(\theta) = Mf(Ax+b)+c.
    $$
    Moreover, there is an $S_n$ action on $\Omega_{(m,n,k)}$ given by
    $$
    s\cdot \theta = (Ms^{-1}, sA, sb, c), s\in S_n
    $$
    such that $\rho_{(m,n,k)}^{(f)}$ is an $S_n$-invariant map. Then $\im\rho_{(m,n,k)}^{(f)}$ is a quotient space of 
    $$
    S_n\backslash \Omega_{(m,n,k)}/\sim_f
    $$
    where the equivalence $\sim_f$ is generated by
    $$
    \left(\begin{bmatrix}
        0 & v_2 & v_3 & \ldots
    \end{bmatrix}, \begin{bmatrix}
        a_1^T \\ a_1^T \\ a_3^T\\ \vdots
    \end{bmatrix}, \begin{bmatrix}
        b_1 \\ b_1 \\ b_3\\ \vdots
    \end{bmatrix}, c\right) \sim \left(\begin{bmatrix}
        0 & v_2 & v_3 & \ldots
    \end{bmatrix}, \begin{bmatrix}
        0 \\ a_1^T \\ a_3^T\\ \vdots
    \end{bmatrix}, \begin{bmatrix}
        0 \\ b_1 \\ b_3\\ \vdots
    \end{bmatrix}, c\right) 
    $$
    $$
    \left(\begin{bmatrix}
        v_1 & v_2 & v_3 & \ldots
    \end{bmatrix}, \begin{bmatrix}
        a_1^T \\ a_1^T \\ a_3^T\\ \vdots
    \end{bmatrix}, \begin{bmatrix}
        b_1 \\ b_1 \\ b_3\\ \vdots
    \end{bmatrix}, c\right) \sim \left(\begin{bmatrix}
        v_1 +v_2 & 0 & v_3 & \ldots
    \end{bmatrix}, \begin{bmatrix}
        a_1^T \\ a_1^T \\ a_3^T\\ \vdots
    \end{bmatrix}, \begin{bmatrix}
        b_1 \\ b_1 \\ b_3\\ \vdots
    \end{bmatrix}, c\right) 
    $$
    $$
    \left(\begin{bmatrix}
        v_1 & v_2 &  \ldots
    \end{bmatrix}, \begin{bmatrix}
        0 \\ a_2^T \\ \vdots
    \end{bmatrix}, \begin{bmatrix}
        b_1 \\ b_2 \\ \vdots
    \end{bmatrix}, c\right) \sim \left(\begin{bmatrix}
        0 & v_2 & \ldots
    \end{bmatrix}, \begin{bmatrix}
        0 \\ a_2^T \\ \vdots
    \end{bmatrix}, \begin{bmatrix}
        0 \\ b_2 \\  \vdots
    \end{bmatrix}, c+f_1(b_1)v_1\right).
    $$
    This forms a general theory of quotients for neural networks with activation function of the form of $f$. 
\end{remark}
\begin{defin}\label{def:0factor}
    We shall say a parameter $(v, A, b, C)$ has 0-factors of rank $k$ if $k=\#\{e_i\mid \langle v_{min}, e_i\rangle =0\}$. Likewise if a parameter $(v, A, b, C)$ has 0-factors of rank 0, we shall say it has no 0-factors.
\end{defin}
\begin{example} Given below are some examples of minimal forms of parameters
\begin{center}
    \begin{tabular}{|c|c|c|}
        \hline
        Parameter & Minimal Form & 0-factor Rank \\
        \hline
        $\left(\begin{bmatrix} 0 \\ 0 \\ 0\end{bmatrix}, A, b, C \right)$ & $\left(\begin{bmatrix} 0 \\ 0 \\ 0\end{bmatrix}, \begin{bmatrix} 0 & 0 & 0 \\ 0 & 0 & 0 \\ 0 & 0 & 0\end{bmatrix}, \begin{bmatrix} 0 \\ 0 \\ 0\end{bmatrix}, C\right)$ & 3\\
        \hline 
        $\left(\begin{bmatrix} 2 \\ 3 \\ -1\end{bmatrix}, \begin{bmatrix} 4 & 4 & 4 \\ 1 & 1 & 1 \\ 5 & 2 & 1\end{bmatrix}, \begin{bmatrix} 0 \\ 0 \\ 0\end{bmatrix}, C\right)$ & $\left(\begin{bmatrix} 1 \\ -1 \\ 0\end{bmatrix}, \begin{bmatrix} 11 & 11 & 11 \\ 5 & 2 & 1 \\ 0 & 0 & 0\end{bmatrix}, \begin{bmatrix} 0 \\ 0 \\ 0\end{bmatrix}, C\right)$ & 1\\
        \hline
        $\left(\begin{bmatrix} 1 \\ 1 \\ -1\end{bmatrix}, \begin{bmatrix} 2 & 4 & 0 \\ 1 & 1 & 3 \\ 0 & 0 & 0\end{bmatrix}, \begin{bmatrix} 7 \\ 2 \\ 3\end{bmatrix}, 7\right)$ & $\left(\begin{bmatrix} 1 \\ 1 \\ 0\end{bmatrix}, \begin{bmatrix} 2 & 4 & 0 \\ 1 & 1 & 3 \\ 0 & 0 & 0\end{bmatrix}, \begin{bmatrix} 7 \\ 2 \\ 0\end{bmatrix}, 4\right)$ & 1\\
        \hline
        $\left(\begin{bmatrix} 1 \\ 1 \\ -1\end{bmatrix}, \begin{bmatrix} 0 & 36 & 0 \\ 1 & 7 & 3 \\ 10 & 5 & 15\end{bmatrix}, \begin{bmatrix} 63 \\ 2 \\ 45\end{bmatrix}, 8\right)$ & $\left(\begin{bmatrix} 9 \\ 1 \\ -5\end{bmatrix}, \begin{bmatrix} 0 & 4 & 0 \\ 1 & 7 & 3 \\ 2 & 1 & 3\end{bmatrix}, \begin{bmatrix} 7 \\ 2 \\ 9\end{bmatrix}, 8\right)$ & 0\\
        \hline
    \end{tabular}
\end{center}
\end{example}

\begin{lemma}\label{lem:minred}
    If $v_1^T\sigma (A_1+b_1) + C_1 = v^T_2\sigma (A_2+b_2) + C_2$ then 
    $v_{1, min}^T\sigma (A_{1, min}+b_{1, min}) + C_{1, min} = v^T_{2, min}\sigma (A_{2, min}+b_{2, min}) + C_{2, min}$.
\end{lemma}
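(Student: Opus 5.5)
The plan is to show that passing to the minimal form never changes the realized function. Once that is established, Lemma \ref{lem:minred} follows immediately: if $\rho_{(m,n,1)}(\theta) = \rho_{(m,n,1)}(\theta_{min})$ for every parameter $\theta$, then
$$
\rho_{(m,n,1)}(\theta_{1,min}) = \rho_{(m,n,1)}(\theta_1) = \rho_{(m,n,1)}(\theta_2) = \rho_{(m,n,1)}(\theta_{2,min}).
$$
Here the hypothesis $v_1^T\sigma(A_1x+b_1)+C_1 = v_2^T\sigma(A_2x+b_2)+C_2$ supplies the middle equality. So the whole task reduces to proving that $\rho_{(m,n,1)}$ is constant on each equivalence class $\mathcal{C}_\theta$ of $H_n\backslash\Omega_{(m,n,1)}/\sim$. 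Since $\theta_{min}\in\mathcal{C}_\theta$ by Definition \ref{def:minimalform}, this gives the claim.

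To prove constancy on classes, I will use the fact that the equivalence relation on $\Omega_{(m,n,1)}$ generated by the $H_n$-action and the three moves of Definition \ref{def:minimalform} is the transitive closure of single steps. Each single step is either an application of some $h\in H_n$ or one of the moves (1)--(3) applied after a permutation. It therefore suffices to check that $\rho_{(m,n,1)}$ is unchanged by each generator, and then induct on the length of a chain of steps.

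For the $H_n$-action, invariance is the identity $h^{-1}\sigma(hx)=\sigma(x)$ for $h$ a permutation or a positive diagonal matrix; this is the $H_n$-invariance noted in the introduction. The three moves are handled individually.
\begin{enumerate}
\item Move (1) alters only a neuron whose outer weight is $0$, so that neuron contributes nothing either before or after; this is Lemma \ref{lem:weakcounterexample}.
\item Move (2) merges two neurons with identical $(a_i,b_i)$. It uses $v_1\sigma(a_1^Tx+b_1)+v_2\sigma(a_1^Tx+b_1)=(v_1+v_2)\sigma(a_1^Tx+b_1)$, which is Lemma \ref{lem:strongcounterexample}(1) with $\lambda=1$.
\item Move (3) replaces a neuron with $a_1=0$, which contributes the constant $v_1\sigma(b_1)$, by absorbing that constant into $C$; this is Lemma \ref{lem:strongcounterexample}(2).
\end{enumerate}
Each of these identities holds pointwise in $x$. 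Because the moves are stated on the first coordinates only, I will also note that the permutation part of $H_n$ moves any chosen pair or single neuron into those positions, so the moves apply to arbitrary indices without changing the function.

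The main obstacle is not any computation but a bookkeeping point: the argument needs $\theta_{min}$ to lie in the same class $\mathcal{C}_\theta$, and the set over which the minimum is taken must be nonempty. Membership in $\mathcal{C}_\theta$ is built into the definition, so I will simply cite it. I will also remark that constancy of $\rho_{(m,n,1)}$ on $\mathcal{C}_\theta$ holds regardless of which representative is selected, so the ordering and the choice of $f$ play no role in the argument.
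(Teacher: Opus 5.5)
Your proposal is correct and follows the paper's implicit reasoning. The paper states Lemma \ref{lem:minred} without proof, relying on its earlier remark that $\im\rho_{(m,n,1)}$ is a quotient of $H_n\backslash\Omega_{(m,n,1)}/\sim$, i.e.\ that $\rho_{(m,n,1)}$ is constant on each class $\mathcal{C}_\theta\ni\theta_{min}$, which is exactly what you verify generator by generator. Existence of $\theta_{min}$ (nonemptiness of the set in Definition \ref{def:minimalform}) is presupposed by the statement, so you are right not to prove it here, though it does not follow from membership in $\mathcal{C}_\theta$ as you suggest; it needs move (1) read as in Lemma \ref{lem:weakcounterexample}, since the move (1) literally displayed only applies when the zero-weight row duplicates another row.
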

\begin{definlem}\label{deflem:reduced}
    Given a parameter $\theta=(v, A, b, C)$ with zero factors of rank $k<n$, we see that there exist unique vectors $v_{r}\in \mathbb{R}^{n-k}, b_r\in\mathbb{R}^{m-k}$, and a unique matrix $A_r\in M_{(n-k)\times m}(\mathbb{R})$ such that $\theta_r=(v_r, A_r, b_r, C_r)$ is minimal with no 0-factors and $v^T\sigma A = v_r^T\sigma A_r$. We shall call $\theta_r$ the 0-factor reduction of $\theta$ in codimension $k$. Moreover, if $\theta$ has zero factors of rank $n$, by convention we shall say $\theta_r = (0, 0, 0, C_r)\in \mathbb{R}^4$. Finally, $\theta=\theta_r$ if and only if $\theta$ is minimal with no zero factors, and for $\theta_i\in\Omega_{(m,n,1)}, \theta_{1, r}=\theta_{2,r}$ if and only if $\theta_{1, min}=\theta_{2, min}$. 
\end{definlem}

\begin{defin}\label{def:semilinindep}
    Any two vectors $v, w$ are linearly semi-independent if there exists no $\lambda\geq 0$ such that $\lambda v = w$, and linearly semi-dependent if there is such a $\lambda\geq 0$. Now let $v_1, \ldots, v_n$ be an indexed set of vectors. We call them pairwise linearly semi-independent if all distinct pairs $v_i, v_j$ are linearly semi-independent. Likewise, this set is called pairwise linearly independent if instead of $\lambda\geq 0$ we allows $\lambda$ to be any real number. 
\end{defin}
\begin{lemma}
    Let $(v, A, b, C)$ be minimal with no $0$-factors. Then the vectors formed by the rows of $A$ and the corresponding coordinates of $b$ are pairwise linearly semi-independent. 
\end{lemma}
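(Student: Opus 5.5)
Since $(v,A,b,C)$ is minimal, we have $(v,A,b,C)=\theta_{min}$. Having no $0$-factors means every coordinate of $v$ is nonzero, so by Definition \ref{def:minimalform} $v\in\{\pm1\}^n$, and no row satisfies $(a_i,b_i)=(0,0)$. I will argue by contradiction. Suppose $i\neq j$ and some $\lambda\geq 0$ satisfies $\lambda(a_i,b_i)=(a_j,b_j)$. Since $(a_j,b_j)\neq(0,0)$, we get $\lambda>0$, and the task is to show that $\theta$ cannot then be the minimum of the set in Definition \ref{def:minimalform}.

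The plan is to use the generators of $\sim$ together with the $H_n$-action to produce another element of $\mathcal{C}_\theta$ in that set with strictly smaller $\langle u,u\rangle$.

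\emph{Step 1: align the rows.} Act by $\textup{diag}(\lambda^{-1})$ on coordinate $j$. This replaces $(v_j,a_j,b_j)$ by $(\lambda v_j,\, a_i,\, b_i)$. Then apply a permutation in $S_n$ so that the two equal rows sit in positions $1$ and $2$.

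\emph{Step 2: merge.} Apply generator (2) of $\sim$. This gives a parameter with first output weight $v_i+\lambda v_j$, second weight $0$, and two identical rows.

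\emph{Step 3: kill the duplicate row.} Apply generator (1) to turn row $2$ into $(0,0)$. The new class representative has at least one zero row with zero weight.

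\emph{Step 4: renormalise.} Rescale every nonzero-weight row by $|{\rm weight}|$ using $D_n^+$, so the weights become $\pm1$. If $v_i+\lambda v_j=0$, use generator (1) again to zero out the merged row too. Then reorder with $S_n$ to satisfy the $f$-ordering and lexical conditions, and fix the condition $u_i=0\Leftrightarrow (a_i',b_i')=(0,0)$.

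The result $\theta'$ lies in $\mathcal{C}_\theta$ and in the admissible set, and $\langle u',u'\rangle\leq n-1<n=\langle v,v\rangle$. This contradicts minimality of $\theta$.

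The main obstacle is interpreting ``minimum with respect to the ordering by $\langle u,u\rangle$'' carefully. If that minimum is only a preorder minimum, the contradiction still holds, because $\theta$ would not attain the minimal value. A second point is that having no $0$-factors is stated for $v_{min}$, so I need $\langle v_{min},v_{min}\rangle=n$. That holds here because $\theta=\theta_{min}$ has no zero entries in $v$. The remaining checks that each step is a legitimate move in $H_n\backslash\Omega/\sim$ are routine.
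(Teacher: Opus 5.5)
Your merging argument is the right one, and it spells out what the paper's one-line proof (``this follows from the minimality'') leaves implicit. When $v_i+\lambda v_j\neq 0$ it is complete.

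\textbf{The gap: the cancelling case.} The problem is the case $v_i+\lambda v_j=0$. For $v\in\{\pm 1\}^n$ this forces $\lambda=1$ and $v_j=-v_i$, i.e.\ two identical rows with opposite weights. Generator (1), as written, replaces a zero-weight row by $(0,0)$ only when that row coincides with another row. After your Step 3 the merged row has weight $0$, but its duplicate has just been turned into $(0,0)$, so ``use generator (1) again'' is not a legal move. The other generators do not rescue it:
\begin{itemize}
\item generator (2) does not change rows;
\item generator (3) needs $a_i=0$, which cannot occur here, since then (3) would already contradict minimality.
\end{itemize}

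No cleverer sequence of moves helps either. The following are invariant under $H_n$ and all three generators:
\begin{itemize}
\item the set of rays $\mathbb{R}_{>0}(a_k,b_k)$ with $a_k\neq 0$;
\item for each such ray, the sum $\sum v_k\|(a_k,b_k)\|$ over the rows lying on it.
\end{itemize}
Hence, with $\sim$ taken literally, $\theta=((1,-1)^T, \begin{bmatrix} a^T\\ a^T\end{bmatrix}, (\beta,\beta)^T, C)$ with $a\neq 0$ is a counterexample to the statement itself. Every admissible element of $\mathcal{C}_\theta$ must carry two nonzero rows of opposite sign on that ray, so the minimum of $\langle u,u\rangle$ is $2=n$. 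Every minimizer has two identical rows, $\theta$ among them, and none has $0$-factors.

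\textbf{The missing ingredient.} You need the move of Lemma \ref{lem:weakcounterexample}: a row with outgoing weight $0$ may be replaced by $(0,0)$ regardless of the other rows. The paper tacitly treats this as part of $\sim$: see its footnote on the generators, the first line of its example table, and its code, which discards every zero-weight row. The move preserves the realized function, so adjoining it as a generator is harmless. With it, your Step 4 is legitimate, the cancelling case produces an admissible element with $\langle u',u'\rangle=n-2$, and your argument proves the lemma. You should state this convention explicitly rather than attribute the step to generator (1).
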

\begin{proof}
    This follows from the minimality of $(v, A, b, C)$
\end{proof}
\begin{proposition}\label{thm:nontrivialzeroparam}
    Let $(v, A, b, C)$ be minimal with no $0$-factors, and $A\neq 0$. Then $v^T\sigma (Ax+b) + C = Tx + D$ for some linear transformation $T:\mathbb{R}^n\rightarrow \mathbb{R}$ and $D\in\mathbb{R}$ if and only if $v = e_1 + \ldots + e_l - e_{l+1} - \ldots - e_{2l}$ and 
    $$
    A = \begin{bmatrix}
        a_1^T\\
        \vdots\\
        a_l^T\\
        -a_{1}^T\\
        \vdots\\
        -a_{l}^T
    \end{bmatrix}, b = \begin{bmatrix}
        b_1\\
        \vdots\\
        b_l\\
        -b_1\\
        \vdots\\
        -b_l
    \end{bmatrix}
    $$
    with
    $$
    \sum_{i=1}^l \langle a_i, x\rangle + b_i + C = Tx + D
    $$
\end{proposition}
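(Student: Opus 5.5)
The plan is to exploit the non-differentiability of $\sigma$ at $0$, looking at the bent hyperplanes $H_i=\{x:\langle a_i,x\rangle+b_i=0\}$. Minimality with no $0$-factors gives: every $v_i\in\{\pm1\}$, no row $a_i$ vanishes (equivalence (3) would otherwise produce a $0$-factor, so every $H_i$ is a genuine affine hyperplane), and, by the preceding lemma, the augmented rows $(a_i,b_i)$ are pairwise linearly semi-independent. The "if" direction is then the direct computation $\sigma(y)-\sigma(-y)=y$ applied to each pair $(a_i,b_i)$, $(-a_i,-b_i)$: the realized function collapses to $\sum_{i=1}^l(\langle a_i,x\rangle+b_i)+C$, which is affine. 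The content is the "only if" direction.

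For "only if", group the indices by the hyperplane they determine. Two rows define the same hyperplane iff $(a_j,b_j)=\lambda(a_i,b_i)$ for some $\lambda\neq 0$. Semi-independence rules out $\lambda>0$, so each hyperplane class contains at most two indices, and if two, then $(a_j,b_j)=-\lambda(a_i,b_i)$ with $\lambda>0$. I will write $F(x)=v^T\sigma(Ax+b)+C$ and fix a hyperplane $H$ from the arrangement. I then pick a point $x_0\in H$ that avoids every other (distinct) hyperplane of the arrangement; this is possible since finitely many distinct hyperplanes meet $H$ in sets of lower dimension. Near $x_0$, every summand whose hyperplane differs from $H$ is affine. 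The summands attached to $H$ contribute a jump in the gradient across $H$: if $H$ comes from the single index $i$, the jump is $v_ia_i\neq 0$, which contradicts the smoothness of $F$ (an affine function has no gradient jump). So every class has exactly two indices $i,j$ with $(a_j,b_j)=-\lambda(a_i,b_i)$. Across $H$ the gradient jump is $v_ia_i-v_j\lambda a_i\cdot(-1)$ in the appropriate orientation; concretely, $v_i\sigma(y)+v_j\sigma(-\lambda y)$ is smooth at $y=0$ iff $v_i=-\lambda v_j$. Since $v_i,v_j\in\{\pm 1\}$ and $\lambda>0$, this forces $\lambda=1$ and $v_j=-v_i$.

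Hence the rows pair off as $(a_i,b_i)$, $(-a_i,-b_i)$ with opposite signs in $v$. In each pair I choose the representative carrying $v_i=+1$, and the ordering built into Definition~\ref{def:minimalform} ($f(1)>f(-1)>f(0)$, so the $+1$ entries come first, followed by lexical order within equal $u$-values) should place the $+1$ entries first. Because $n'$ rows are $0$-factor-free and $A\neq 0$, all rows pair up, so $v=e_1+\dots+e_l-e_{l+1}-\dots-e_{2l}$ with $A,b$ in the stated form. A point to check here: the lexical ordering among the $-1$ rows might not match the order of the $+1$ rows exactly, so the stated form may hold only up to a permutation within the negative block; I would note this rather than fight it. The "then" identity is again the $\sigma(y)-\sigma(-y)=y$ computation.

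The main obstacle I expect is the local smoothness step. It has to be made rigorous as follows: restrict $F$ to a short line segment through $x_0$ transverse to $H$ and compare one-sided derivatives. The other summands are affine there because $x_0$ avoids their hyperplanes, so the difference of one-sided derivatives is exactly the coefficient computed above. The delicate points are the genericity choice of $x_0$ and ensuring that distinct classes really give distinct hyperplanes, which is why they do not cancel one another's kinks.
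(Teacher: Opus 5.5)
Your argument is correct. It runs on the same mechanism as the paper's: an affine function has no kinks, so the kinks of the ReLU summands must cancel. What differs is the localization.

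The paper splits the summands by the sign of $v_i$ into two sums of ReLUs. It supposes some summand has no exact negative partner on the other side, and restricts to a line $t\mapsto tx_1$ through the origin, chosen so that this summand's breakpoint is shared by no other. You instead sort the rows into hyperplane classes (at most two rows each, of opposite orientation, by semi-independence). You then take a point of each hyperplane $H$ lying on no other hyperplane of the arrangement, and compute the gradient jump $(v_i+\lambda v_j)a_i$ across it.

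Your version is the more robust of the two, for two reasons.
\begin{itemize}
\item A line through the origin cannot separate the breakpoints of two hyperplanes that both contain $0$. With $b=0$, for instance, $\sigma(x_1+x_2)-\sigma(x_1)-\sigma(x_2)$ vanishes identically on every line $t(s_1,s_2)$ with $s_1,s_2\ge 0$, so the kinks at $t=0$ cancel there.
\item The paper's contradiction hypothesis excludes only an exact negative partner. That leaves open a row paired with a partner equal to $-\mu$ times it with $\mu\neq 1$, whose breakpoint coincides with the row's on every line. Your computation $v_i=-\lambda v_j\Rightarrow\lambda=1$ handles this case explicitly.
\end{itemize}
What the paper's two-sided phrasing buys is a statement ($k=l$, summands cancelling in $\pm$ pairs) that it quotes again in Corollary~\ref{cor:minranklinind}.

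Your caveat on ordering is also right. Negation reverses the lexical order, so in the minimal form the negative block reads $-a_l,\dots,-a_1$. The displayed shape of $A$ and $b$ therefore holds only up to that permutation, which the paper absorbs into ``by reordering the terms''.
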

\begin{proof}
    \par We see that to show this is equivalent to showing that
    $$
    \sum_{i=1}^k \sigma (\langle v_i, x\rangle + \alpha_i) = \sum_{i=1}^l \sigma (\langle w_i, x\rangle + \beta_i) + Tx+D
    $$
    for $(v_1, \alpha_1), \ldots, (v_k, \alpha_k)$, $(w_1, \beta_1), \ldots, (w_l, \beta_l)$ pairwise linearly semi-independent can only occur when $k=l$ and $(v_i, \alpha_i) = -(w_i, \beta_i)$ for all $i$ (by reordering the terms.) Let us say either $k> l$ with $v_{l+1}\neq 0$ (this follows from minimality) or $(v_i, \alpha_i) \neq -(w_j, \beta_j)$ for all $j$ with $v_i\neq 0$. 
    \par Our assumption then gives that there is a vector $x_1$ such that $\alpha_i/\langle v_i, x_1\rangle\neq \beta_j/\langle w_j, x_1\rangle$ for all choices of $j$. Without loss of generality let us say $(v_1, \alpha_1)$ satisfies this condition. And so restricting our function to $tx_1$ for $t\in\mathbb{R}$ we reduce to one of two cases, denoting $c_1=\langle v_1, x_1\rangle$:
    $$
    \lambda_1 \sigma (c_1 t + \alpha_1) + \sum_{i=2}^{k'} \lambda_i\sigma (c_i t + \alpha_i') = \sum_{i=1}^{l'}\sigma(d_i t + \beta_i') + \Lambda t + D
    $$
    or
    $$
    \lambda_1 \sigma (c_1 t + \alpha_1) + \lambda_1' \sigma(-c_1 t - \alpha_1) + \sum_{i=2}^{k'} \lambda_i\sigma (c_i t + \alpha_i') = \sum_{i=1}^{l'}\mu_i\sigma(d_i t + \beta_i') + \Lambda t + D
    $$
    for $\lambda_i, \mu_j, \lambda_1'>0$ and $(c_i, \alpha_i'), (d_i, \beta_i')$ pairwise linearly semi-independent, and $(c_1, \alpha_1)$ pairiwse linearly independent with $(c_i, \alpha_i'), (d_i, \beta_i'), i\neq 1$. We see the equations above is equivalent to writing
    $$
    \lambda_1 \sigma (c_1 t + \alpha_1) = \sum_{i=1}^{l'}\sigma(d_i t + \beta_i') - \sum_{i=2}^{k'} \lambda_i\sigma (c_i t + \alpha_i') + \Lambda t + D
    $$
    or
    $$
    \lambda_1 \sigma (c_1 t + \alpha_1) + \lambda_1' \sigma(-c_1 t - \alpha_1) = \sum_{i=1}^{l'}\sigma(d_i t + \beta_i') - \sum_{i=2}^{k'} \lambda_i\sigma (c_i t + \alpha_i') + \Lambda t + D
    $$
    \par By assumption, we see that $\alpha_1/c_1\neq \alpha_i'/c_i, \beta'_j/d_j, i\neq 1$ and so we can pick some neighbourhood $U$ of $-\alpha_1/c_1$ such that for all $t\in U$,
    $$
    \sum_{i=1}^{l'}\sigma(d_i t + \beta_i') - \sum_{i=2}^{k'} \lambda_i\sigma (c_i t + \alpha_i') + \Lambda t + D
    $$
     is a linear function, and so in particular is differentiable at $-\alpha_1/c_1$. However we see clearly that neither $\lambda_1 \sigma (c_1 t + \alpha_1)$ nor $\lambda_1 \sigma (c_1 t + \alpha_1) + \lambda_1' \sigma(-c_1 t - \alpha_1)$ is differentiable at $-\alpha_1/c_1$, giving us a contradiction. 
\end{proof}
\begin{corollary}\label{cor:minranklinind}
    Let $(v, A, b, C)$ be an arbitrary parameter, and denote
    $$
    v = \begin{bmatrix}
        v_1 \\ \vdots\\ v_n
    \end{bmatrix}, A = \begin{bmatrix}
        a_1^T\\ \vdots\\ a_n^T
    \end{bmatrix}, b = \begin{bmatrix}
        b_1 \\ \vdots\\ b_n
    \end{bmatrix}
    $$
    with $v_i\neq 0$ and the property that there exists no $\lambda<0$ such that $\lambda (a_i, b_i) = (a_j, b_j)$. Then if $(v', A', b', C')=(v, A, b, C)$ and $(v, A, b, C)$ has zero factors of rank $l$, then $(v', A', b', C')$ has zero factors of at most rank $l$. 
\end{corollary}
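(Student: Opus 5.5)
The plan is to compare the 0-factor reductions of the two parameters and to count neurons using Proposition \ref{thm:nontrivialzeroparam}. Write $\theta=(v,A,b,C)$ and $\theta'=(v',A',b',C')$, assume $\rho_{(m,n,1)}(\theta)=\rho_{(m,n,1)}(\theta')$, and let $l'$ be the 0-factor rank of $\theta'$. By Definition-Lemma \ref{deflem:reduced} (with Lemma \ref{lem:minred}) we pass to reductions $\theta_r=(v_r,A_r,b_r,C_r)$ with $n-l$ neurons and $\theta'_r$ with $n-l'$ neurons. Both are minimal with no 0-factors and realize the same function $f$. The claim $l'\le l$ is then equivalent to $n-l\le n-l'$, which I will prove by building an injection from the hidden neurons of $\theta_r$ into those of $\theta'_r$.

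First I record what the hypothesis gives for $\theta_r$. Passing from $\theta$ to $\theta_r$ only does three things: it removes rows with $a_i=0$ into the constant, it merges positively proportional rows $(a_i,b_i)$, and it rescales. Hence every row $(a_{r,i},b_{r,i})$ of $\theta_r$ is a positive multiple of some row of $\theta$ with $a_i\neq 0$. The hypothesis that no two rows of $\theta$ are negatively proportional therefore passes to $\theta_r$. Together with minimality, this makes the rows of $\theta_r$ pairwise linearly independent in the sense of Definition \ref{def:semilinindep}, not merely semi-independent.

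Next I form the difference parameter. I concatenate the hidden neurons of $\theta_r$ with those of $\theta'_r$, the latter with negated output weights, and take constant $C_r-C'_r$. This realizes $f-f=0$. I then apply the equivalences of Definition \ref{def:minimalform}: neurons of $\theta_r$ and $\theta'_r$ that are positively proportional are merged (equivalence (2)), and those that then have zero outer weight are deleted (equivalence (1)). The resulting minimal parameter $\Theta$ has no 0-factors and realizes the affine function $0$.

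Either $\Theta$ has no hidden neurons left, or Proposition \ref{thm:nontrivialzeroparam} applies to it, so its surviving rows pair off as $(a,b)$ and $-(a,b)$. I then define the injection. Take a neuron $i$ of $\theta_r$. If it has a positively proportional partner in $\theta'_r$, send $i$ there; this covers the cases where it cancelled in the merge or survived after merging. Otherwise $i$ survives in $\Theta$ unmerged, so by Proposition \ref{thm:nontrivialzeroparam} it has a negatively proportional partner in $\Theta$. That partner cannot come from $\theta_r$ by the previous step, so it comes from $\theta'_r$, and we send $i$ there.

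For injectivity, suppose two distinct neurons $i_1,i_2$ of $\theta_r$ map to the same neuron $s$ of $\theta'_r$. Each is $\pm$-proportional to $s$, so $i_1$ and $i_2$ are $\pm$-proportional to each other. This contradicts the pairwise linear independence of the rows of $\theta_r$. Hence $n-l\le n-l'$.

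The step I expect to be most delicate is the bookkeeping in the difference parameter. I must check that after merging and cancelling, $\Theta$ really is minimal with no 0-factors and with $A\neq 0$, so that Proposition \ref{thm:nontrivialzeroparam} applies. I must also check that a neuron of $\theta_r$ cannot partly cancel against a positive partner and then need a second, negative partner. I would handle this by treating each line through the origin in $\mathbb{R}^{m+1}$ separately, since at most one row of $\theta_r$ and at most one row of $\theta'_r$ lies on each ray. The proposition then forces, on every line, the merged weights on the two opposite rays to be equal and opposite. This makes the matching rigid.
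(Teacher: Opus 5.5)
Your proposal is correct and follows essentially the same route as the paper. Both pass to the 0-factor reductions, form the difference parameter, cancel positively proportional rows, and apply Proposition \ref{thm:nontrivialzeroparam} to the resulting minimal parameter realizing $0$. Your explicit injection from the neurons of $\theta_r$ into those of $\theta'_r$ makes precise the counting step the paper compresses into ``as $k>k'$, this is impossible.'' The injection also disposes of the partial-cancellation worry in your last paragraph, since a neuron with a positive partner is mapped to that partner regardless of what happens afterwards.
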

\begin{proof}
    \par By considering the 0-factor reductions of $(v, A, b, C)$ and $ (v', A', b', C')$ we reduce this problem: For $\lambda_i, \lambda_j'\in \{\pm 1\}, a_i, a_j'\neq 0 , (a_1, b_1), \ldots, (a_k, b_k)$ pairwise linearly independent and $(a_1', b_1'), \ldots, (a_{k'}', b'_{k'})$ pairwise linearly semi-independent,
    $$
    \sum_{j=1}^{k'} \lambda_j'\sigma (\langle a_j', x\rangle + b_j') + C' = \sum_{i=1}^{k} \lambda_i\sigma (\langle a_i, x\rangle + b_i) + C
    $$
    is impossible if $k'<k$. Moreover, if $(a_j', b_j')=(a_i, b_i)$ we may add both sides by $-\lambda_j'\sigma(\langle a_i', x\rangle+b_i')$ and reduce, by which we may assume without loss of generality that  $(a_1, b_1), \ldots, (a_k, b_k), (a_1', b_1'), \ldots, (a_{k'}', b'_{k'})$ are pairwise linearly semi-independent. This is equivalent to producing a minimal parameter $(v^\dagger, A^\dagger, b^\dagger, C^\dagger)$ with no $0$-factors in $M_{1\times (k+k')}(\mathbb{R})\times M_{(k+k')\times m}(\mathbb{R})\times\mathbb{R}^{(k+k')+1}$ such that $(a_1^\dagger, b_1^\dagger), \ldots, (a_{k+k'}^\dagger, b_{k+k'}^\dagger)$ are pairwise linearly independent and 
    $$
    v^{\dagger T}\sigma (A^\dagger x +b^\dagger)+ C^\dagger = 0.
    $$
    However as $k>k'$, this is impossible by Proposition \ref{thm:nontrivialzeroparam}. 
\end{proof}
\begin{corollary}\label{cor:semidependenttermscancelling}
    Let $(v, A, b, C)$ be a minimal parameter with no 0-factors such that $(a_1, b_1), \ldots, (a_n, b_n)$ are pairwise linearly independent. Suppose that there exists a minimal parameter $(v', A', b', C')$ such that $v'^T\sigma (A' x + b')+C' = v^T\sigma (Ax + b) + C$, and that there exists some $i$ such that $v_i'\neq 0$ and there exists a $\lambda>0$ with $(a_i', b_i')=\lambda (a_1, b_1)$. Then $v_1-\lambda v_i'=0$. 
\end{corollary}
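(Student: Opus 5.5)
The plan is to compare the kinks of the two realizations along the hyperplane $V(\langle a_1,x\rangle+b_1)$, using the same local non-differentiability argument as in the proof of Proposition \ref{thm:nontrivialzeroparam}.

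\textbf{Step 1: subtract and merge the proportional terms.} Write $F(x)=v^T\sigma(Ax+b)+C-v'^T\sigma(A'x+b')-C'$, so that $F\equiv 0$. Since $\sigma$ is positively homogeneous, the $i$-th term of $\theta'$ equals $v_i'\sigma(\lambda(\langle a_1,x\rangle+b_1))=\lambda v_i'\sigma(\langle a_1,x\rangle+b_1)$. We may merge it with the first term of $\theta$. Because $\theta'$ is minimal, its rows are pairwise linearly semi-independent by the lemma preceding Proposition \ref{thm:nontrivialzeroparam}, so $i$ is the only row of $\theta'$ that is a positive multiple of $(a_1,b_1)$. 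Likewise, pairwise linear independence of the rows of $\theta$ means that no other row of $\theta$ is proportional to $(a_1,b_1)$. Hence the total coefficient of $\sigma(\langle a_1,x\rangle+b_1)$ in $F$ is exactly $c:=v_1-\lambda v_i'$. We assume $c\neq 0$ and aim for a contradiction.

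\textbf{Step 2: the case with no negative multiple in $\theta'$.} Suppose first that no row of $\theta'$ is a negative multiple of $(a_1,b_1)$. Choose $x_0$ with $\langle a_1,x_0\rangle+b_1=0$ that avoids every other bent hyperplane of $\theta$ and $\theta'$ which is not proportional to the first. This is possible because $a_1\neq 0$ (otherwise the row would be absorbed into $C$ by minimality), so the finitely many remaining hyperplanes meet $V(\langle a_1,x\rangle+b_1)$ in proper subsets. Near $x_0$, every term of $F$ except $c\,\sigma(\langle a_1,x\rangle+b_1)$ is affine. Restricting to the line $x_0+ta_1$ exactly as in the proof of Proposition \ref{thm:nontrivialzeroparam}, the function $F$ has a kink of size $c|a_1|^2\neq 0$ at $t=0$. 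This contradicts $F\equiv 0$.

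\textbf{Step 3: the case with a negative multiple $(a_j',b_j')=-\mu(a_1,b_1)$, $\mu>0$.} This is the main obstacle. Here the local kink along $V(\langle a_1,x\rangle+b_1)$ has size proportional to $c+\mu v_j'$, which may vanish even when $c\neq 0$. My first attempt would be a global argument. Use $\sigma(z)-\sigma(-z)=z$ to trade the $\sigma(-z)$ term for a $\sigma(z)$ term plus an affine function, and cancel all rows common to $\theta$ and $\theta'$, as in the proof of Corollary \ref{cor:minranklinind}. This produces a parameter with no $0$-factors and pairwise semi-independent rows whose realization is affine, namely $0$. Proposition \ref{thm:nontrivialzeroparam} then forces its surviving rows to pair off as $\pm(a_k,b_k)$ with coefficients $+1$ and $-1$ after normalization.

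I would then argue that $(a_1,b_1)$ cannot survive in such a pairing without contradicting either the pairwise linear independence of the rows of $\theta$ or the uniqueness of the minimal representative. Concretely, the pairing would exhibit $\theta'$ and a modification of $\theta$ in the same class $\mathcal{C}$ with different coefficients on the $(a_1,b_1)$ slot. This would conflict with the statement in Definition \ref{def:minimalform} that $\theta_{min}=\theta'_{min}$ exactly when the classes agree, and with Lemma \ref{lem:minred}.

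I expect this last reconciliation to be the delicate point. If it fails, I would fall back on the normalizing conventions of the minimal form, namely the ordering $f(1)>f(-1)>f(0)$ and the lexical ordering on rows. The goal would be to show that a minimal $\theta'$ cannot carry both $\lambda(a_1,b_1)$ and $-\mu(a_1,b_1)$ with nonzero coefficients while agreeing with $\theta$. Once this is shown, Step 2 applies and gives $v_1-\lambda v_i'=0$.
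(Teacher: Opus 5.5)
Your Steps 1 and 2 are sound. If no row of $\theta'$ is a negative multiple of $(a_1,b_1)$, the kink along $V(\langle a_1,x\rangle+b_1)$ is $v_1$ on one side and $\lambda v_i'$ on the other, and you are done.

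\textbf{The gap is Step 3, and neither of your proposed mechanisms can close it.} Uniqueness of the minimal form (Definition \ref{def:minimalform}) is only uniqueness within a $\sim$-class. Lemma \ref{lem:minred} and the ordering conventions say nothing about which coefficients a minimal parameter realizing a given function may carry. Indeed, the extra relation in Theorem \ref{thm:symmetryclassificationk=1} beyond $\sim$ exists precisely because distinct minimal parameters, using rows $\pm(a,b)$ via $\sigma(z)-\sigma(-z)=z$, can realize the same function.

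More decisively, nothing in your argument uses the \emph{width} of $\theta'$, and without it the statement is false. Take $\theta=(1,1,0,0)\in\Omega_{(1,1,1)}$, which realizes $\sigma(x)$. Take also the minimal, $0$-factor-free parameter $\theta'=((1,1,-1,-1)^T,(2,-1,1,-1)^T,(0,-1,1,0)^T,1)\in\Omega_{(1,4,1)}$, which realizes the same function since
\[
\sigma(2x)+\sigma(-x-1)-\sigma(x+1)-\sigma(-x)+1=\bigl(\sigma(x)+x\bigr)+(-x-1)+1=\sigma(x).
\]
Its first row is $2(a_1,b_1)$ with $v_1'=1$, yet $v_1-2v_1'=-1\neq 0$. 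This is exactly your Step 3 configuration, since the row $(-1,0)$ is a negative multiple of $(a_1,b_1)$.

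\textbf{The missing ingredient is a counting argument.} It requires $\theta'\in\Omega_{(m,n,1)}$; the statement leaves this implicit, and you should make it explicit.
\begin{itemize}
\item The rows of $\theta$ are pairwise linearly independent with $a_k\neq 0$ and $v_k\neq 0$. Your Step 2 argument applied to $\theta$ alone then shows that the $n$ distinct hyperplanes $V(\langle a_k,x\rangle+b_k)$ are all genuine non-differentiability loci of the realized function.
\item Each such hyperplane lies in the finite union of the zero sets of the active rows of $\theta'$, so it must equal one of them.
\item With only $n$ rows available, this matching is a bijection. Hence $\theta'$ cannot spend two rows on $V(\langle a_1,x\rangle+b_1)$, Step 3 never occurs, and Step 2 finishes the proof.
\end{itemize}

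The paper packages the same count through Corollary \ref{cor:minranklinind}. Assume $v_1-\lambda v_i'\neq 0$ and subtract $\lambda v_i'\sigma(\langle a_1,x\rangle+b_1)$ from both sides. One side is then $\theta'$ with at most $n-1$ active rows. The other side is $\theta$ with $v_1$ replaced by $v_1-\lambda v_i'\neq 0$, which still has $n$ active, pairwise linearly independent rows. Corollary \ref{cor:minranklinind} forbids this.
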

\begin{proof}
    Supposing towards contradiction by letting $v_1-\lambda v_i'\neq 0$ , subtracting both sides by $\lambda v_i'\sigma(\langle a_i, x\rangle + b_i)$ gives a counterexample to Corollary \ref{cor:minranklinind}, and so cannot be true.
\end{proof}
\begin{defin}
    For parameters $\theta_i=(v_i, A_i, b_i, C_i)\in \Omega_{(m, n_i, 1)}$ we define $\theta_1\ominus \theta_2\in \Omega_{(m, n_1+n_2, 1)}$ to be
    $$
    \theta_1\ominus \theta_2 = \left(\begin{bmatrix} v_1 \\ -v_2 \end{bmatrix}, \begin{bmatrix} A_1 \\ A_2 \end{bmatrix},\begin{bmatrix} b_1\\ b_2 \end{bmatrix}, C_1-C_2\right)
    $$
\end{defin}
\begin{lemma}\label{lem:0Hnrelation}
    $(\theta_1\ominus \theta_2)_r = (0, 0, 0, 0)$ if and only if $\theta_{1, r}=\theta_{2, r}$ 
\end{lemma}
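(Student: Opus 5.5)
The plan is to reduce both directions to the realization level and then compare minimal forms. The first fact I will use is
$$
\rho(\theta_1\ominus\theta_2)=\rho(\theta_1)-\rho(\theta_2),
$$
which is immediate from the definition. The second is the reading of Definition-Lemma \ref{deflem:reduced} that $\theta_r=(0,0,0,0)$ holds exactly when $\theta$ has zero factors of full rank and $C_r=0$. This means $\theta_{min}$ is the zero parameter with constant $0$, and I will treat this, as the paper does, as saying $\rho(\theta)=0$ modulo the relations $\sim$.

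For the direction ($\Leftarrow$), suppose $\theta_{1,r}=\theta_{2,r}$.
\begin{enumerate}
\item By Definition-Lemma \ref{deflem:reduced} this gives $\theta_{1,min}=\theta_{2,min}$, so $\mathcal{C}_{\theta_1}=\mathcal{C}_{\theta_2}$.
\item I then work inside $\theta_1\ominus\theta_2$ directly. Each move in $\sim$ or in $H_n$ applied to the $\theta_2$-block of $\theta_1\ominus\theta_2$ is still a move of the same type on the concatenated parameter. The only change is that the sign of $v$ flips, and for move (3) the constant shifts with the opposite sign, which is consistent with $C_1-C_2$.
\item So I bring both blocks to minimal form, getting $\theta_{1,min}\ominus\theta_{1,min}$. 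Now every row $(a_i,b_i)$ appears twice, once with coefficient $u_i$ and once with $-u_i$.
\item Applying move (2) to each such pair gives coefficient $0$. Applying move (1) then kills the rows.
\item Move (3) is needed for the rows with $a_i=0$. Those have already been absorbed into the constants, and the constants cancel because $C_{1,min}=C_{2,min}$.
\item The result is the zero parameter, so $(\theta_1\ominus\theta_2)_r=(0,0,0,0)$.
\end{enumerate}

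For the direction ($\Rightarrow$), suppose $(\theta_1\ominus\theta_2)_r=(0,0,0,0)$.
\begin{enumerate}
\item Then $\rho(\theta_1)=\rho(\theta_2)$.
\item Replace $\theta_1$ and $\theta_2$ by their reductions $\theta_{1,r},\theta_{2,r}$. These are minimal with no zero factors and realize the same functions, by Lemma \ref{lem:minred} together with the construction in Definition-Lemma \ref{deflem:reduced}.
\item Consider $\theta_{1,r}\ominus\theta_{2,r}$. Wherever a row of $\theta_{2,r}$ is a positive multiple of a row of $\theta_{1,r}$, Corollary \ref{cor:semidependenttermscancelling} says the corresponding coefficients match after rescaling. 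Those pairs can therefore be cancelled via moves (2) and (1).
\item What remains is a parameter with pairwise semi-independent rows realizing the zero function. Proposition \ref{thm:nontrivialzeroparam} then forces it to be made of pairs $(a,b),(-a,-b)$ with opposite coefficients, together with an affine part equal to $0$.
\end{enumerate}

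This last step is the main obstacle. The identity $\sigma(x)-\sigma(-x)=x$ gives genuinely different reduced forms realizing the same function, exactly as in the $\theta_1,\theta_2$ example of the introduction. In that example the reduced forms differ, yet $\rho(\theta_1\ominus\theta_2)=0$.

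So the real content of the lemma hinges on whether $(\cdot)_r=(0,0,0,0)$ is read as "is killed by $\sim$" or as "realizes $0$." I would first try to prove that only moves from $\sim$ and $H_n$ are used when reducing $\theta_1\ominus\theta_2$, so that the reduction relation stays within $\mathcal{C}$. If that holds, the backward implication goes through by tracking the moves block by block.

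If the intended meaning is instead the realization-level one, then the statement must be restricted. The natural restriction is to the case where no row of $\theta_{1,r}$ is a negative multiple of a row of $\theta_{2,r}$, the hypothesis used in Corollary \ref{cor:minranklinind}. Under that hypothesis Proposition \ref{thm:nontrivialzeroparam} leaves no room for surviving pairs, and the forward implication closes.
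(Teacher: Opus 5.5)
Your ($\Leftarrow$) direction is fine and is in substance what the paper does. Lifting the moves block by block gives $(\theta_1\ominus\theta_2)_r=(\theta_{1,r}\ominus\theta_{2,r})_r$, and then identical rows with opposite coefficients cancel. Strictly, after move (2) one copy is a lone zero-coefficient neuron that the literal move (1) cannot delete; you need the deletion of Lemma \ref{lem:weakcounterexample}, which the paper's minimal form also tacitly assumes.

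The gap is in ($\Rightarrow$). Your first step replaces the hypothesis by $\rho(\theta_1)=\rho(\theta_2)$, which is strictly weaker. You then try to recover $\theta_{1,r}=\theta_{2,r}$ from equality of realizations, which is false, as you yourself observe with the example of the introduction. So the forward implication is never proved, and the proposal ends by asking which reading is meant. The definitions settle this. $\theta_r$ is built from $\theta_{min}$, which by Definition \ref{def:minimalform} and Definition-Lemma \ref{deflem:reduced} depends only on the class $\mathcal{C}_\theta$ in $H_n\backslash\Omega/\sim$. Hence $(\theta_1\ominus\theta_2)_r=(0,0,0,0)$ says that $\theta_1\ominus\theta_2$ is $\sim$-equivalent to the zero parameter, not that it realizes $0$. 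This is exactly why Theorem \ref{thm:symmetryclassificationk=1} lists the $\sigma(x)-\sigma(-x)=x$ case as a separate alternative. The lemma is purely combinatorial; Proposition \ref{thm:nontrivialzeroparam} and Corollary \ref{cor:semidependenttermscancelling} play no role.

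The missing argument is the paper's: assume $\theta_1,\theta_2$ are reduced. Then there are no zero coefficients and no rows with $a_i=0$. Rows inside each block are pairwise semi-independent, and all coefficients are $\pm1$. So the only thing $\sim$ (with positive rescaling) can do on $\theta_1\ominus\theta_2$ is merge a row $(a,b)$ of the first block with a row $\lambda(a,b)$, $\lambda>0$, of the second, producing coefficient $v_{1,i}-\lambda v_{2,j}$. Reaching the zero parameter forces every row to be matched and every such coefficient to vanish, hence $\lambda=1$ with equal signs. The blocks then agree up to a permutation, which the ordering in the minimal form forces to be trivial, and $C_1=C_2$.

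To make ``every row must be matched'' rigorous, use the following invariants under $\sim$ and $H_n$:
\begin{itemize}
\item for each positive ray $\hat r$ with $a\neq 0$, the sum $\sum\lambda_i v_i$ over the rows $\lambda_i\hat r$ on that ray;
\item the constant obtained after absorbing the rows with $a_i=0$.
\end{itemize}
These determine $\theta_{min}$, and $\ominus$ turns them into differences.

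In particular, the pairs $(a,b),(-a,-b)$ that worried you are never merged by $\sim$, since they are not positive multiples. They survive in the reduced form, and that is how the hypothesis rules out the introduction's example.

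Your fallback restricted statement is also false as stated, because such pairs can sit inside a single block. For $a\neq 0$, take $\theta_{1,r}=(\Vtr{1}{-1},\Vtr{a^T}{-a^T},\Vtr{0}{0},1)$ and $\theta_{2,r}=(\Vtr{1}{-1},\Vtr{a^T}{-a^T},\Vtr{1}{-1},0)$. Both realize $a^Tx+1$, and no row of one is a negative multiple of a row of the other, yet $\theta_{1,r}\neq\theta_{2,r}$.
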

\begin{proof}
    \par We see that $(\theta_1\ominus \theta_2)_r  = (\theta_{1, r}\ominus \theta_{2, r})_r $ as the 0-factor reduction depends only on the equivalence class of $\theta_1\ominus \theta_2$ in $H_n\backslash \Omega_{(m, n_1+n_2, 1)}/\sim$, with $\theta_1\ominus \theta_2$ and $\theta_{1, min}\ominus \theta_{2, min}$ lying in the same equivalence class. As such, let us assume $\theta_1, \theta_2$ are reduced, and denote
    $$
    \theta_i = \left(\begin{bmatrix}
        v_{i, 1} \\ \vdots \\ v_{i,n_i}
    \end{bmatrix}, \begin{bmatrix}
        a_{i, 1}^T \\ \vdots \\ a_{i,n_i}^T
    \end{bmatrix}, \begin{bmatrix}
        b_{i,1} \\ \vdots \\ b_{i,n_i}
    \end{bmatrix}, C_i\right)
    $$
    We see that the first and third generating equivalence of $\sim$ in the definition of the minimal form act trivially on the parameter $\theta_r\ominus \theta'_r$. Then $n_1=n_2=n$ and for all rows $(a_{1, i}, b_{1, i})$ there exist $(a_{2, j}, b_{2, j})$ and $\lambda_{i}>0$ such that $\lambda_i (a_{1, i}, b_{1, i})= (a_{2, j}, b_{2, j})$. This means that $\theta_1\ominus\theta_2$ are in the same equivalence class of $H_n\backslash \Omega_{(m, 2n, 1)}/\sim$ as
    $$
    \left(\begin{bmatrix}
        v_{1, 1}-\lambda_{1}v_{2, p(1)} \\ \vdots \\ v_{1, n}-\lambda_{n}v_{2, p(n)} \\ 0 \\ \vdots \\ 0
    \end{bmatrix}, \begin{bmatrix}
        a_{1, 1}^T \\ \vdots \\ a_{1,n}^T \\ 0 \\ \vdots \\ 0
    \end{bmatrix}, \begin{bmatrix}
        b_{1,1} \\ \vdots \\ b_{1,n} \\ 0 \\ \vdots \\ 0
    \end{bmatrix}, C_1-C_2\right)
    $$
    for some permutation $p\in S_n$. As we assume $\theta_i$ are reduced, we know that $v_{i, j}=\pm 1$ for all $i, j$, which means that $\lambda_i = 1$ for all $i$. This means that $v_{1, j}=v_{2, p(j)}$ and $(a^T_{1, j}, b_{1, j})= (a^T_{2, p(j)}, b_{2, p(j)})$, and since $\theta_2$ is reduced means that $p=e\in S_n$ and so $\theta_1=\theta_2$ when $\theta_i$ are reduced, proving our statement. 
\end{proof}
\begin{corollary}\label{cor:ominusred0cor}
    For $\theta_1, \theta_2\in H_n\backslash\Omega_{(m,n,1)}/\sim, (\theta_1\ominus \theta_2)_r = (0, 0, 0, 0)$ if and only if $\theta_{1, min}=\theta_{2, min}$ 
\end{corollary}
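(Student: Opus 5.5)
The plan is to deduce Corollary \ref{cor:ominusred0cor} directly from Lemma \ref{lem:0Hnrelation} together with Definition-Lemma \ref{deflem:reduced}. First I would check that the statement is well posed on the quotient $H_n\backslash\Omega_{(m,n,1)}/\sim$. If $\theta_1$ is replaced by another representative $\theta_1'$ of its class, then $\theta_1\ominus\theta_2$ and $\theta_1'\ominus\theta_2$ lie in the same class of $H_{2n}\backslash\Omega_{(m,2n,1)}/\sim$. This holds for each generator separately. An element $h\in H_n$ acting on the first block embeds as $\textup{diag}(h,I_n)\in H_{2n}$. Each of the three generating relations of Definition \ref{def:minimalform} applied to rows of $\theta_1$ is again such a relation on the first $n$ rows of the concatenation; the sign flip on the second block is irrelevant there. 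The analogous statement holds for the second block, where the negation of $v_2$ and of $C_2$ commutes with each generator, since relation (3) shifts $C$ by $v_1\sigma(b_1)$, which changes sign together with $v$. The 0-factor reduction depends only on the minimal form, and the minimal form depends only on the class $\mathcal{C}_\theta$. Hence $(\theta_1\ominus\theta_2)_r$ is well defined on classes.

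Next, Lemma \ref{lem:0Hnrelation} gives $(\theta_1\ominus\theta_2)_r=(0,0,0,0)$ if and only if $\theta_{1,r}=\theta_{2,r}$. The last sentence of Definition-Lemma \ref{deflem:reduced} then says $\theta_{1,r}=\theta_{2,r}$ if and only if $\theta_{1,min}=\theta_{2,min}$. Chaining the two equivalences yields the corollary. Finally, by Definition \ref{def:minimalform}, $\theta_{1,min}=\theta_{2,min}$ is the same as $\mathcal{C}_{\theta_1}=\mathcal{C}_{\theta_2}$, which is the natural reading of equality on the quotient.

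The only step needing care is the well-definedness check above, specifically that relation (3) is compatible with the negation in $\ominus$: the constant changes by $-v_{2,1}\sigma(b_{2,1})$, which matches the $-v_2$ entry. Everything else is a direct citation.
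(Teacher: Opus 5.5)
Your proposal is correct and takes the same route as the paper. The paper states this corollary without a separate proof, as the immediate chaining of Lemma \ref{lem:0Hnrelation} with the last assertion of Definition-Lemma \ref{deflem:reduced} ($\theta_{1,r}=\theta_{2,r}$ if and only if $\theta_{1,min}=\theta_{2,min}$). Your explicit check that $\ominus$ respects the classes, including the sign compatibility of relation (3), spells out what the paper only asserts in the first paragraph of its proof of Lemma \ref{lem:0Hnrelation}.
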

\begin{theorem}\label{thm:symmetryclassificationk=1}
    The quotient space $\im\rho_{(m, n,1)}$ is given by
    $$
    H_n\backslash\Omega_{(m,n,1)}/\equiv
    $$
    where $\theta_1\equiv\theta_2$ if and only if $\theta_1\sim\theta_2$ or 
    $$
    (\theta_1\ominus\theta_2)_r = \left(\begin{bmatrix}
        1\\
        \vdots\\
        1\\
        -1\\
        \vdots\\
        -1
    \end{bmatrix}
    \begin{bmatrix}
        a_1^T\\
        \vdots\\
        a_l^T\\
        -a_{1}^T\\
        \vdots\\
        -a_{l}^T
    \end{bmatrix}, \begin{bmatrix}
        b_1\\
        \vdots\\
        b_l\\
        -b_1\\
        \vdots\\
        -b_l
    \end{bmatrix}, C
    \right)
    $$
    where $\sum a_i = 0$ and $\sum b_i + C = 0$
\end{theorem}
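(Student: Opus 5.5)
The core observation is that $\rho_{(m,n,1)}(\theta_1)=\rho_{(m,n,1)}(\theta_2)$ holds exactly when $\rho_{(m,2n,1)}(\theta_1\ominus\theta_2)=0$. This is immediate from the definition of $\ominus$, since
$$\rho(\theta_1\ominus\theta_2)=v_1^T\sigma(A_1x+b_1)-v_2^T\sigma(A_2x+b_2)+C_1-C_2 .$$
The plan is therefore to classify, up to $\sim$, all parameters realizing the zero function and read the equivalence $\equiv$ off that classification. Throughout I use that passing to the minimal form, and then to the 0-factor reduction, does not change the realized function (Lemma \ref{lem:minred} and Definition-Lemma \ref{deflem:reduced}). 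Hence $\rho(\theta_1\ominus\theta_2)=0$ if and only if $\rho((\theta_1\ominus\theta_2)_r)=0$.

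\textbf{Step 1: the reduced form realizes zero.} Let $\theta_r=(v_r,A_r,b_r,C_r)=(\theta_1\ominus\theta_2)_r$, which is minimal with no 0-factors. There are three cases.

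If $A_r$ has no rows at all (rank-$n$ 0-factor convention), then $\theta_r=(0,0,0,C_r)$ and the function is the constant $C_r$. It vanishes iff $C_r=0$, which by Corollary \ref{cor:ominusred0cor} is exactly $\theta_{1,min}=\theta_{2,min}$, i.e.\ $\theta_1\sim\theta_2$.

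If $A_r\neq 0$, Proposition \ref{thm:nontrivialzeroparam} applies with $T=0$, $D=0$. It says $\rho(\theta_r)$ is affine, in particular zero, iff $\theta_r$ has the stated shape: $v_r=(1,\dots,1,-1,\dots,-1)$ with paired rows $(a_i,b_i)$, $(-a_i,-b_i)$. Using $\sigma(y)-\sigma(-y)=y$, the function is then $\sum_i(\langle a_i,x\rangle+b_i)+C$. This vanishes identically iff $\sum a_i=0$ and $\sum b_i+C=0$, which is precisely the second condition of $\equiv$.

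It remains to rule out the degenerate case where $A_r$ has rows but $A_r=0$. Minimality forces such rows to have been absorbed into the constant via the third generator of $\sim$, so this case collapses into the first.

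\textbf{Step 2: the converse direction.} Suppose $\theta_1\equiv\theta_2$. If $\theta_1\sim\theta_2$, the generators of $\sim$ preserve $\rho$ by Lemmas \ref{lem:weakcounterexample} and \ref{lem:strongcounterexample}, and so does the $H_n$-action. Otherwise $(\theta_1\ominus\theta_2)_r$ has the displayed form, and the computation in Step 1 shows it realizes $0$. Hence $\rho(\theta_1)=\rho(\theta_2)$. Together with Step 1, this shows the fibres of $\rho$ on $H_n\backslash\Omega_{(m,n,1)}$ are exactly the $\equiv$-classes, so $\im\rho_{(m,n,1)}$ is the stated quotient.

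\textbf{Step 3: $\equiv$ is an equivalence relation.} This follows once $\equiv$ is identified with ``same realized function'': transitivity and symmetry are then automatic. Well-definedness on $H_n$-orbits holds because $(\cdot)_r$ depends only on the class modulo $H_n$ and $\sim$.

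\textbf{Main obstacle.} I expect the hardest point to be the careful application of Proposition \ref{thm:nontrivialzeroparam} to $(\theta_1\ominus\theta_2)_r$. That proposition's hypotheses need pairwise semi-independence of the rows of the reduced difference, and minimality of the reduced form only supplies this after cancelling rows of $\theta_1$ against positively proportional rows of $\theta_2$. Corollaries \ref{cor:minranklinind} and \ref{cor:semidependenttermscancelling} govern exactly this cancellation, so I would invoke them to justify that the reduction has removed every positively proportional pair. After that, the only surviving non-differentiability-free configuration is the antipodal pairing.
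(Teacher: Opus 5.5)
Your proposal is correct and follows the paper's proof. Both reduce to $\rho(\theta_1\ominus\theta_2)=0$, then use Corollary \ref{cor:ominusred0cor} when the reduction is trivial and Proposition \ref{thm:nontrivialzeroparam} otherwise; you also check the converse direction, which the paper leaves implicit. The ``main obstacle'' you anticipate does not arise: $(\theta_1\ominus\theta_2)_r$ is by definition minimal with no 0-factors, since positively proportional rows of $\theta_1$ and $\theta_2$ are already merged in the minimal form of the difference, so Proposition \ref{thm:nontrivialzeroparam} applies directly without Corollaries \ref{cor:minranklinind} or \ref{cor:semidependenttermscancelling}.
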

\begin{proof}
    We see that if $\rho_{(m,n,1)}(\theta_1)=\rho_{(m,n,1)}(\theta_2)$ that $\rho_{(m, 2n, 1)}(\theta_1\ominus\theta_2)=0$. We see by Corollary \ref{cor:ominusred0cor} that $(\theta_1\ominus\theta_2)_r=(0,0,0,0)$ if and only if $\theta_1$ and $\theta_2$ lie in the same equivalence class of $H_n\backslash \Omega_{(m,n,1)}/\sim$. On the other hand, if $(\theta_1\ominus\theta_2)_r\neq(0,0,0,0)$ we see by Proposition \ref{thm:nontrivialzeroparam} that $(\theta_1\ominus\theta_2)_r$ must be of the form
    $$
    (\theta_1\ominus\theta_2)_r = \left(\begin{bmatrix}
        1\\
        \vdots\\
        1\\
        -1\\
        \vdots\\
        -1
    \end{bmatrix}
    \begin{bmatrix}
        a_1^T\\
        \vdots\\
        a_l^T\\
        -a_{1}^T\\
        \vdots\\
        -a_{l}^T
    \end{bmatrix}, \begin{bmatrix}
        b_1\\
        \vdots\\
        b_l\\
        -b_1\\
        \vdots\\
        -b_l
    \end{bmatrix}, C
    \right)
    $$
    where $\sum a_i = 0$ and $\sum b_i + C = 0$, as desired. 
\end{proof}
\begin{theorem}\label{thm:symmetryclassification}
    The quotient space $\im \rho_{(m,n,k)}$ is given by
    $$
    \Omega_{(m,n,k)}/\equiv_k
    $$
    where $\theta_1\equiv_k \theta_2$ if and only if $\pi_i(\theta_1)\equiv \pi_i(\theta_2)\in H_n\backslash\Omega_{(m,n,1)}$ for all projections $\pi_i:\Omega_{(m,2n,k)}\rightarrow \Omega_{(m,2n,1)}$.
\end{theorem}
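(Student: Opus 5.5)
The plan is to reduce Theorem \ref{thm:symmetryclassification} coordinatewise to Theorem \ref{thm:symmetryclassificationk=1}, using only that a function $\mathbb{R}^m\to\mathbb{R}^k$ is determined by its $k$ coordinate functions. First I record the compatibility of the realization map with the projections. For $\theta=(M,A,b,c)$ with rows $m_1,\ldots,m_k$ of $M$, the $i$-th coordinate of $M\sigma(Ax+b)+c$ is $\langle m_i,\sigma(Ax+b)\rangle+c_i$. This gives
$$
\bigl(\rho_{(m,n,k)}(\theta)\bigr)_i=\rho_{(m,n,1)}(\pi_i(\theta)).
$$
Consequently $\rho_{(m,n,k)}(\theta_1)=\rho_{(m,n,k)}(\theta_2)$ holds if and only if $\rho_{(m,n,1)}(\pi_i(\theta_1))=\rho_{(m,n,1)}(\pi_i(\theta_2))$ for every $i=1,\ldots,k$.

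Next I apply Theorem \ref{thm:symmetryclassificationk=1} to each coordinate. That theorem identifies $\im\rho_{(m,n,1)}$ with $H_n\backslash\Omega_{(m,n,1)}/\equiv$, so two parameters of $\Omega_{(m,n,1)}$ have the same realization exactly when they are $\equiv$-equivalent. Substituting this into the previous step, $\rho_{(m,n,k)}(\theta_1)=\rho_{(m,n,k)}(\theta_2)$ holds precisely when $\pi_i(\theta_1)\equiv\pi_i(\theta_2)$ for all $i$, which is the definition of $\theta_1\equiv_k\theta_2$. Hence the fibres of $\rho_{(m,n,k)}$ are exactly the $\equiv_k$-classes. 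The induced map $\Omega_{(m,n,k)}/\equiv_k\to\im\rho_{(m,n,k)}$ is then well defined and injective, and it is surjective by definition of the image. I will remark in passing that the projections in the statement should read $\pi_i:\Omega_{(m,n,k)}\to\Omega_{(m,n,1)}$, and that $\equiv_k$ is an equivalence relation because each $\equiv$ is.

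The one point needing care, and the step I expect to be the main obstacle, is the $H_n$-action across coordinates. Theorem \ref{thm:symmetryclassificationk=1} works modulo $H_n$ separately in each projection, so a priori the witnessing element $h_i\in H_n$ could differ with $i$. I will argue that this does not affect the statement. The relation $\equiv$, as used in Theorem \ref{thm:symmetryclassificationk=1}, is characterized as equality of realizations, and this characterization is invariant under replacing $\pi_i(\theta)$ by $h_i\cdot\pi_i(\theta)$ independently in each coordinate. So no common $h$ is required, and $\equiv_k$ is correctly defined on $\Omega_{(m,n,k)}$ itself rather than on an $H_n$-quotient; that is why the statement takes the quotient $\Omega_{(m,n,k)}/\equiv_k$ and not $H_n\backslash\Omega_{(m,n,k)}/\equiv_k$. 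If one instead insists on reading $\equiv$ literally through the $\ominus$ and minimal-form description, I would check directly that $\pi_i(\theta_1\ominus\theta_2)=\pi_i(\theta_1)\ominus\pi_i(\theta_2)$. Given that identity, the condition in Theorem \ref{thm:symmetryclassificationk=1} can be tested one coordinate at a time, and the same conclusion follows.
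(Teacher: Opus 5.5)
Your proposal is correct and follows the paper's own proof. Both arguments use $(\rho_{(m,n,k)}(\theta))_i=\rho_{(m,n,1)}(\pi_i(\theta))$ to reduce equality of realizations to equality in each coordinate, then invoke Theorem \ref{thm:symmetryclassificationk=1} coordinate by coordinate. Your extra remarks are accurate clarifications of points the paper leaves implicit: the projections should read $\pi_i:\Omega_{(m,n,k)}\to\Omega_{(m,n,1)}$, and no common $h\in H_n$ is needed across coordinates because $\rho_{(m,n,1)}$ is $H_n$-invariant.
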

\begin{proof}
    We see that $\rho_{(m,n,k)}(\theta_1)=\rho_{(m,n,k)}(\theta_2)$ if and only if $\rho_{(m,n,1)}(\pi_i(\theta_1))=\rho_{(m,n,1)}(\pi_i(\theta_2))$ for all $i,$ which is equivalent to the condition that $\pi_i(\theta_1)\equiv \pi_i(\theta_2)$ in $\Omega_{(m,n,1)}$ 
\end{proof}

\section{Proof of Theorem \ref{thm:maintheorem}}\label{section:genericity}
Our goal this section is to prove the following theorem, which essentially gives the statement of Theorem \ref{thm:maintheorem}:
\begin{theorem}\label{thm:ZariskiOpenSet}
    Let $\Omega^{(1)}_{(m, n, k)}\subset \Omega_{(m, n, k)}$ be the set of parameters $\theta$ such that the action of $H_n$ on $\rho_{(m,n,k)}^{-1}(\theta)$ is simply transitive. Then $\Omega^{(1)}_{(m, n, k)}$ contains a Zariski open set.
\end{theorem}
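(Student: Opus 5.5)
We will exhibit an explicit nonempty Zariski open set $U\subset\Omega_{(m,n,k)}$ and show that for every $\theta\in U$ the fibre $\rho^{-1}_{(m,n,k)}(\theta)$ is exactly the orbit $H_n\cdot\theta$ and that $H_n$ acts freely on it. Write $\theta=(M,A,b,c)$ with $M=[v_1\ \cdots\ v_n]$ and rows $(a_i^T,b_i)$ of $[A\ b]$. We take $U$ to be the complement of the zero set of the polynomial
$$
P(\theta)=\prod_{i=1}^n\Bigl(\|v_i\|^2\,\|a_i\|^2\Bigr)\cdot\prod_{i<j}\bigl\|(a_i,b_i)\wedge(a_j,b_j)\bigr\|^2\cdot \prod_{\emptyset\neq S\subseteq\{1,\dots,n\}}\Bigl\|\textstyle\sum_{i\in S}\epsilon_i(a_i,b_i)\Bigr\|^2\text{-type factors},
$$
where the last product runs over all sign choices $\epsilon_i\in\{\pm1\}$ and over suitable subsets, and is designed to exclude the \emph{cancelling} configurations of Proposition \ref{thm:nontrivialzeroparam}. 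Each factor is polynomial and not identically zero, so $U$ is nonempty Zariski open. The first factor forces every $v_i\neq 0$ and $a_i\neq 0$, which kills the equivalences (1) and (3) of Definition \ref{def:minimalform}. The second forces the rows $(a_i,b_i)$ to be pairwise linearly independent, so that the row vectors are not $\pm\lambda$ multiples of each other. This kills equivalence (2) and, by Corollary \ref{cor:StabilizerAbAndFibre} and Lemma \ref{lem:StabOmegaPrimeCase}, gives $\Stab_{H_n}(A,b)=1$. Hence $\Stab_{H_n}\theta=1$ and the action on the orbit is free.

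The core step is transitivity. Suppose $\rho(\theta')=\rho(\theta)$. We project with $\pi_i$ to each output coordinate, noting that some coordinate of each $v_j$ is nonzero, and apply Theorem \ref{thm:symmetryclassificationk=1}: $\pi_i\theta\equiv\pi_i\theta'$. Because the rows of $\theta$ are pairwise linearly independent, no two of them can be paired as $(a,b),(-a,-b)$. So the nontrivial case of $\equiv$ can only occur via rows of $\theta'$ cancelling rows of $\theta$, or rows of $\theta'$ cancelling each other. I will use Corollary \ref{cor:minranklinind}: the 0-factor rank of $\theta'$ is at most that of $\theta$, which is $0$ on $U$. Together with $\theta'$ having only $n$ neurons, this means the reduced form of $\pi_i\theta'$ uses exactly the same breakpoint hyperplanes $\{\langle a_j,x\rangle+b_j=0\}$ as $\theta$, since these are the nondifferentiability loci. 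Hence each row of $\theta'$ is $\lambda_j(a_{p(j)},b_{p(j)})$ with $\lambda_j\neq 0$ for some permutation $p$.

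The main obstacle is excluding $\lambda_j<0$. If $(a'_j,b'_j)=-\lambda(a_{p(j)},b_{p(j)})$, the identity $\sigma(x)-\sigma(-x)=x$ lets us flip that row at the cost of an affine correction. For the two realizations to agree, the affine parts must also match: the sum of the flipped terms $\sum_{j\in S} v_j\langle a_j,x\rangle$ must be constant in $x$, and similarly a bias condition must hold, exactly as in Theorem \ref{thm:symmetryclassificationk=1} where $\sum a_i=0$. So the condition to impose is that for every nonempty $S$ the matrix $\sum_{j\in S}v_j a_j^T\in M_{k\times m}$ is nonzero. This is a nonvanishing polynomial condition, and it is the correct form of the third factor of $P$. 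It excludes all sign flips; in particular it rules out the example $\theta_1,\theta_2$ from the introduction, where $\sum a_j=0$. With all $\lambda_j>0$, Corollary \ref{cor:semidependenttermscancelling} gives $v'_j=\lambda_j^{-1}v_{p(j)}$ coordinatewise. The constants then agree, so $\theta'=h\cdot\theta$ with $h=\mathrm{diag}(\lambda)\,p\in H_n$.

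Finally, by Lemma \ref{lemma:LeeLemmaGSpace} and the freeness shown above, the orbit map $H_n\to\rho^{-1}(\theta)$ is a diffeomorphism. Since nonempty Zariski open sets are dense, this also yields Theorem \ref{thm:maintheorem}. The delicate point I expect is verifying that the breakpoint-matching argument survives in the vector-valued case $k>1$. Handling each coordinate separately via $\pi_i$ could produce different permutations and scalings for different $i$. I would fix this by noting that the permutation and scalars are determined by the rows $(a'_j,b'_j)$ of $\theta'$ alone, which are shared across all coordinates.
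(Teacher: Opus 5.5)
Your argument is correct, but it is organized differently from the paper's. The paper first proves Proposition \ref{prop:genericityreductionprop}, which reduces general $k$ to $k=1$ via the fibre-product identity (\ref{eqn:fibreequality}), Lemma \ref{lem:cattheorystatement} and Corollary \ref{cor:StabilizerAbAndFibre}. Its open set is therefore $\bigcap_i\pi_i^{-1}(U)$. For $k=1$ it proves rigidity (Proposition \ref{prop:mn1genericity}) by passing to minimal forms, forming $\theta\ominus\theta'$, and invoking Proposition \ref{thm:nontrivialzeroparam} and Lemma \ref{lem:0Hnrelation}.

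You instead work directly with the vector-valued realization. When every column $v_j\neq 0$, every $a_j\neq 0$, and the rows $(a_j,b_j)$ are pairwise independent, the non-differentiability locus of $\rho(\theta)$ is exactly the union of the $n$ distinct hyperplanes $H_j$. Any $\theta'$ in the fibre must therefore place all $n$ of its neurons on these hyperplanes. This gives $(a'_j,b'_j)=\lambda_j(a_{p(j)},b_{p(j)})$, with one permutation and one set of scalars shared by all output coordinates. Flipping the negative $\lambda_j$ via $\sigma(y)=\sigma(-y)+y$ and matching kinks leaves exactly $\sum_{j\in S}v_ja_j^T=0$.

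This route is more elementary: the kink argument behind Proposition \ref{thm:nontrivialzeroparam} is used once, with no fibre products and no $\ominus$. It also yields a larger explicit open set. You need $v_j\neq0$ only as a vector rather than entrywise, and only unsigned subset sums rather than all signed combinations $\beta$. It also makes visible that the $v_j$ must appear in the sum condition. They do in the hypothesis of Proposition \ref{prop:mn1genericity}, but not in $V^a(\beta)$ as literally defined in the final proof. For example, $v=(2,-1)$, $a_1=1$, $a_2=2$ admits a double sign flip although no signed sum of the $a_i$ vanishes. What the paper's heavier route buys is the classification of every fibre (Theorem \ref{thm:symmetryclassificationk=1}), not just generic ones.

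Two things to tidy in a write-up.

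\textbf{The first version of $P$.} The third factor as you first display it, built from $\sum\epsilon_i(a_i,b_i)$, is genuinely wrong. For $\theta_1$ from the introduction no signed subset sum of the rows $(a_i,b_i)$ vanishes, yet $\rho(\theta_1)=\rho(\theta_2)$. Define $U$ directly by your corrected condition $\sum_{j\in S}v_ja_j^T\neq0$.

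\textbf{The corollary citations.} Corollaries \ref{cor:minranklinind} and \ref{cor:semidependenttermscancelling} do not apply to $\pi_i\theta$ when some entry of $v_j$ vanishes, which your $U$ allows. Drop them and run the kink argument on the vector-valued map. Deduce $v_{p(j)}=|\lambda_j|v'_j$ for all $j$ after flipping, and only then the affine condition in terms of $\theta$'s weights. Your final paragraph effectively does this.
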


With our previous work in Section \ref{section:mn1case}, we have essentially approached the solution. All that remains is to prove some additional results and explicitly construct a Zariski open set in $\Omega^{(1)}_{(m,n,k)}$. In order to do this, we shall first start by reducing the problem to the $\Omega^{(1)}_{(m,n,1)}$:
\begin{lemma}\label{lem:projzarcont}
    The projection map $\pi_i: \Omega_{(m,n,k)}\rightarrow\Omega_{(m,n,1)}$ is Zariski continuous.
\end{lemma}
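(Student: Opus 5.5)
The plan is to identify $\Omega_{(m,n,k)}$ with affine space $\mathbb{R}^{N}$, where $N = kn + nm + n + k$, and $\Omega_{(m,n,1)}$ with $\mathbb{R}^{N_1}$, where $N_1 = n + nm + n + 1$. Both carry the coordinates given by the matrix and vector entries. In these coordinates $\pi_i$ is a coordinate projection. Explicitly, it sends
$$
(M,A,b,c)\longmapsto (m_i, A, b, c_i),
$$
which keeps the entries of the $i$-th row of $M$, all entries of $A$ and $b$, and the $i$-th entry of $c$. Each coordinate of the output is therefore a coordinate function on the source, so $\pi_i$ is a polynomial map.

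The main step is the standard fact that polynomial maps between affine spaces are Zariski continuous. Take a Zariski closed set $V(f_1,\ldots,f_r)\subseteq \Omega_{(m,n,1)}$ with $f_j\in\mathbb{R}[y_1,\ldots,y_{N_1}]$. Its preimage is
$$
\pi_i^{-1}(V(f_1,\ldots,f_r)) = V(f_1\circ\pi_i,\ldots,f_r\circ\pi_i).
$$
Each $f_j\circ\pi_i$ is obtained by substituting the corresponding coordinate variables of $\Omega_{(m,n,k)}$ into $f_j$, so it is again a polynomial. Hence the preimage is Zariski closed.

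Since every Zariski closed set is of this form (by Hilbert's basis theorem, or just by definition as a common zero set of some family of polynomials, where the same argument applies to arbitrary families), preimages of closed sets are closed. Therefore $\pi_i$ is Zariski continuous.

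I do not expect any real obstacle here. The only point needing care is fixing the coordinate identification consistently with the paper's notation $\pi_i(M,A,b,c)=(m_i,A,b,c_i)$, so that it is clear the map is linear in the coordinates and not merely continuous in the Euclidean topology.
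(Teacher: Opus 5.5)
Your proposal is correct and follows essentially the same approach as the paper, which also pulls a polynomial $f$ back along $\pi_i$ via $f_i(M,A,b,c)=f(m_i,A,b,c_i)$ to get $\pi_i^{-1}(V(f))=V(f_i)$. Your version is slightly more explicit in handling closed sets cut out by arbitrary families of polynomials, whereas the paper treats a single $V(f)$; this suffices because every closed set is an intersection of such hypersurfaces.
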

\begin{proof}
    Let us denote
    $$
    (M, A, b, c) = \left(\begin{bmatrix}
        m_1 \\ \vdots\\ m_k
    \end{bmatrix}, A, b, \begin{bmatrix}
        c_1 \\ \vdots\\ c_k
    \end{bmatrix}\right)\in \Omega_{(m,n,k)}.
    $$
    Let $f(v, A, b, C):\Omega_{(m,n,1)}\rightarrow\mathbb{R}$ be some polynomial function. We see then that $\pi_i^{-1}(V(f))=V(f_i)$ where $f_i(M, A, b, c) = f(m_i, A, b, c_i)$
\end{proof}
\begin{proposition}\label{prop:genericityreductionprop}
    If $\Omega^{(1)}_{(m, n, 1)}$ contains a Zariski open set then $\Omega^{(1)}_{(m, n, k)}$ contains a Zariski open set.
\end{proposition}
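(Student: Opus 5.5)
Given a Zariski open set $U\subseteq \Omega^{(1)}_{(m,n,1)}$, I would take the candidate
$$
U_k=\bigcap_{i=1}^k \pi_i^{-1}(U)\subseteq \Omega_{(m,n,k)}.
$$
By Lemma \ref{lem:projzarcont} each $\pi_i^{-1}(U)$ is Zariski open, so $U_k$ is a finite intersection of Zariski open sets and is again open. It remains to show $U_k\subseteq \Omega^{(1)}_{(m,n,k)}$. If $U$ is empty the statement is vacuous, so the meaningful case is $U\neq\emptyset$; then $U_k\neq\emptyset$, since we can pick $\theta\in U$ and stack copies of it, choosing rows $m_i=v$ and $c_i=C$ for every $i$.

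Fix $\theta\in U_k$. For freeness I would use Lemma \ref{lem:intersectionstab}: $\Stab_{H_n}\theta=\bigcap_i \Stab_{H_n}\pi_i(\theta)$. Since $\pi_i(\theta)\in U\subseteq\Omega^{(1)}_{(m,n,1)}$, the action of $H_n$ on the fibre of $\pi_i(\theta)$ is simply transitive, so each of these stabilizers is trivial. Hence $\Stab_{H_n}\theta$ is trivial, and because stabilizers along an orbit are conjugate, the action on the orbit of $\theta$ is free.

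For transitivity, let $\theta'\in\rho^{-1}_{(m,n,k)}(\theta)$. Componentwise, $\rho_{(m,n,1)}(\pi_i(\theta'))=\rho_{(m,n,1)}(\pi_i(\theta))$ for every $i$. So by simple transitivity at $\pi_i(\theta)$ there is a unique $h_i\in H_n$ with $h_i\cdot\pi_i(\theta)=\pi_i(\theta')$. The main obstacle is showing that all the $h_i$ coincide. Writing $\theta=(M,A,b,c)$ and $\theta'=(M',A',b',c')$, we have $h_i\cdot(A,b)=(A',b')$ for every $i$, since the $A,b$ components are shared across the projections. Therefore $h_i^{-1}h_j\in\Stab_{H_n}(A,b)$.

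To make $h_i^{-1}h_j$ trivial, I would invoke the contrapositive of Corollary \ref{cor:StabilizerAbAndFibre} applied to $\pi_1(\theta)\in\Omega^{(1)}_{(m,n,1)}$. Since the fibre of $\pi_1(\theta)$ is a free $H_n$-orbit, it is diffeomorphic to $H_n$ by Lemma \ref{lemma:LeeLemmaGSpace}, so $\Stab_{H_n}(A,b)=1$. (One could also verify this directly from the explicit constructions in that corollary's proof, which produce fibre elements outside the orbit.) Thus $h_i=h_j=:h$ for all $i,j$.

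Then $h\cdot\theta$ agrees with $\theta'$ in every projection, and $\theta$ is determined by its projections, so $h\cdot\theta=\theta'$. Uniqueness of $h$ follows from freeness. This gives $U_k\subseteq\Omega^{(1)}_{(m,n,k)}$.
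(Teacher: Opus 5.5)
Your proof is correct and follows essentially the paper's route: the paper writes $\rho^{-1}_{(m,n,k)}(\theta)$ as the fibre product over $\Omega'$ of the fibres $\rho^{-1}_{(m,n,1)}(\pi_i(\theta))$ and applies Lemma \ref{lem:cattheorystatement}, which needs each fibre to inject into $\Omega'$, i.e.\ $\Stab_{H_n}(A,b)=1$, obtained from Corollary \ref{cor:StabilizerAbAndFibre}. Your argument that $h_i^{-1}h_j\in\Stab_{H_n}(A,b)=1$ is that same step written out elementwise, and you additionally make the open set $\bigcap_i\pi_i^{-1}(U)$, its nonemptiness, and freeness via Lemma \ref{lem:intersectionstab} explicit.
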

Namely, denoting $\textup{proj}_i:\mathbb{R}^{n}\rightarrow\mathbb{R}$ as the projection to the $i$-th coordinate,
$$
\textup{proj}_i(\rho_{(m,n,k)}(\theta)) = \rho_{(m,n,1)}(\pi_i(\theta))
$$
which in particular gives us that
\begin{equation}\label{eqn:fibrecontainment}
    \pi_i(\rho_{(m,n,k)}^{-1}(\theta))\subseteq \rho_{(m,n, 1)}^{-1}(\pi_i(\theta))
\end{equation}
\par Moreover, recalling the definition of the fibre product, for any given subsets $S_1\subseteq A\times C, S_2\subseteq A\times C$, we see that $S_1\times_{C}S_2$ can be interpreted like so:
$$
\{(a,b,c)\in A\times B\times C: (a, c)\in S_1, (b, c)\in S_2\}.
$$
Given this, we see there is the obvious inclusion, denoting $\Omega' = \{(A, b): (v,A,b,C)\in \Omega_{(m,n,1)}\} = \{(A, b): (v,A,b,C)\in \Omega_{(m,n,k)}\}$:
$$
{\sideset{}{_{\Omega'}}\bigtimes_{i=1}^{k}}\rho_{(m,n,1)}^{-1}(\pi_i(\theta)) \subseteq \rho_{(m,n,k)}^{-1}(\theta).
$$
However by Equation \ref{eqn:fibrecontainment} we see that in fact
\begin{equation}\label{eqn:fibreequality}
    {\sideset{}{_{\Omega'}}\bigtimes_{i=1}^{k}}\rho_{(m,n,1)}^{-1}(\pi_i(\theta)) = \rho_{(m,n,k)}^{-1}(\theta).
\end{equation}
With this, we shall prove this abstract result in category theory applicable to our case:
\begin{lemma}\label{lem:cattheorystatement}
    Let $A, B, C, D$ be objects in some category such that we have the diagram
    \begin{center}
\begin{tikzcd}
D \arrow[rrd, "f_1", bend left] \arrow[rdd, "f_2"', bend right] \arrow[rd, "f", dashed] &                                                &                    \\
                                                                                        & A\times_C B \arrow[d, "e_2"'] \arrow[r, "e_1"] & B \arrow[d, "g_1"] \\
                                                                                        & A \arrow[r, "g_2"]                             & C                 
\end{tikzcd}
    \end{center}
    with $f_1, f_2$ isomorphisms, and $g_1, g_2$ monomorphisms. Then $f$ is an isomorphism.
\end{lemma}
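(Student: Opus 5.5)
The plan is to write down an explicit two-sided inverse for $f$ and verify the two compositions using the universal property of the fibre product $A\times_C B$. I read the diagram as asserting that the outer square commutes, i.e.\ $g_1 f_1 = g_2 f_2$, and that $f$ is the unique map with $e_1 f = f_1$ and $e_2 f = f_2$.

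\emph{Step 1: the candidate inverse.} Since $f_2$ is an isomorphism, I set
$$
u = f_2^{-1}\circ e_2 : A\times_C B \longrightarrow D .
$$

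\emph{Step 2: $u\circ f=\id_D$.} This follows directly:
$$
u f = f_2^{-1} e_2 f = f_2^{-1} f_2 = \id_D .
$$

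\emph{Step 3: $f\circ u=\id_{A\times_C B}$.} By the uniqueness part of the universal property of $A\times_C B$, it suffices to check that $e_1 (f u) = e_1$ and $e_2 (f u) = e_2$. The second is immediate:
$$
e_2 f u = f_2 f_2^{-1} e_2 = e_2 .
$$
For the first, we have $e_1 f u = f_1 f_2^{-1} e_2$, and we must show this equals $e_1$. This is the only nontrivial step, and it is where the monomorphism hypothesis enters. Post-composing with $g_1$ gives
$$
g_1 f_1 f_2^{-1} e_2 = g_2 f_2 f_2^{-1} e_2 = g_2 e_2 = g_1 e_1 ,
$$
using first the commutativity of the outer square and then that of the pullback square. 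Because $g_1$ is a monomorphism, we may cancel it to obtain $f_1 f_2^{-1} e_2 = e_1$. Hence $f u$ and $\id_{A\times_C B}$ agree after composing with both $e_1$ and $e_2$, so $f u = \id_{A\times_C B}$.

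Steps 2 and 3 together show that $f$ is an isomorphism with inverse $u$. The argument uses only that $f_2$ is an isomorphism and that $g_1$ is a monomorphism; by the symmetric argument with $u' = f_1^{-1} e_1$, it would equally suffice that $f_1$ is an isomorphism and $g_2$ is a monomorphism. The stated hypotheses are therefore more than enough, and there is no real obstacle beyond the cancellation in Step 3.
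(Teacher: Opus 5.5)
Your proof is correct and is essentially the paper's argument. The paper also takes $f_2^{-1}\circ e_2$ as the inverse of $f$, but gets its invertibility by transporting the pullback to $D\times_C D$ along $f_1,f_2$ and using that the pullback of the monomorphism $g_1f_1=g_2f_2$ along itself is just $D$; you instead verify $f\circ u=\id$ directly from the universal property by cancelling $g_1$, which also shows (as you note) that only $f_2$ being an isomorphism and $g_1$ being a monomorphism are needed.
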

\begin{proof}
    We see that $A\times_C B\simeq D\times_C D$ where 
    \begin{center}
\begin{tikzcd}
D\times_C D \arrow[d, "f_2^{-1}\circ e_2"'] \arrow[r, "f_1^{-1}\circ e_1"] & D \arrow[d, "g_1\circ f_1"] \\
D \arrow[r, "g_2\circ f_2"']                                               & C                          
\end{tikzcd}
    \end{center}
    We see that $g_2\circ f_2 = g_1\circ f_1$ and are monomorphisms and so $D\times_C D \simeq D$, which implies $f_2^{-1}\circ e_2$ is an isomorphism. This therefore means that $f=(f_2^{-1}\circ e_2)^{-1}$ is an isomorphism. 
\end{proof}
\begin{proof}[Proof of Proposition \ref{prop:genericityreductionprop}]
    \par To prove this statement, we see by Lemma \ref{lem:projzarcont} and Equation \ref{eqn:fibreequality} that it suffices to show $\rho_{(m,n,1)}^{-1}(\pi_i(\theta))\simeq H_n$ for all $i$ implies $\rho_{(m,n,k)}^{-1}(\theta)\simeq H_n$. By Lemma \ref{lem:cattheorystatement}, This reduces to showing $\rho_{(m,n,k)}^{-1}(\theta)\simeq H_n$ implies $\rho_{(m,n,k)}^{-1}(\theta)\rightarrow \Omega'$ is injective. Suppose $\rho_{(m,n,k)}^{-1}(\theta)\simeq H_n$ and $\rho_{(m,n,k)}^{-1}(\theta)\rightarrow \Omega'$ is not injective. Then $\Stab_{H_n}(A,b)\neq 1$, which implies by Corollary \ref{cor:StabilizerAbAndFibre} that $\rho_{(m,n,k)}^{-1}(\theta)\not\simeq H_n$, giving us a contradiction. 
\end{proof}
Now that we have reduced this problem to that of $\Omega_{(m,n,1)}$, we shall explicitly construct a Zariski open set $U_\sim$ of $\Omega_{(m,n,1)}^{(1)}$. First, we shall construct a Zariski open set of $\Omega_{(m,n,1)}$ in which the equivalence $\sim$ acts trivially in $H_n\backslash \Omega_{(m,n,1)}$. From this we shall construct an open set of $U_\sim$ in which the equivalence $\equiv$ acts trivially in $H_n\backslash \Omega_{(m,n,1)}$:
\begin{lemma}
    Let
    $$
    u = \begin{bmatrix}
        u_1 \\ \vdots\\ u_n
    \end{bmatrix}, w = \begin{bmatrix}
        w_1 \\ \vdots \\ w_n
    \end{bmatrix}.
    $$
    Then $u$ and $w$ are linearly dependent if and only if 
    $$
    u_i w_j - w_i u_j =0
    $$
    for all $i, j, i\neq j$.
\end{lemma}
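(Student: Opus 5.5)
The approach is to prove both directions directly, using only the definition of linear dependence over $\mathbb{R}$ and the vanishing of $2\times 2$ minors.

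For the forward direction, suppose $u$ and $w$ are linearly dependent, so there are $(\alpha,\beta)\neq(0,0)$ with $\alpha u+\beta w=0$. If $\beta\neq 0$ then $w=\lambda u$ with $\lambda=-\alpha/\beta$, and each expression $u_iw_j-w_iu_j=\lambda u_iu_j-\lambda u_iu_j$ vanishes. If $\beta=0$ then $\alpha\neq 0$ forces $u=0$, and every term $u_iw_j-w_iu_j$ is then trivially zero.

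For the converse, assume all the expressions $u_iw_j-w_iu_j$ with $i\neq j$ vanish. If $u=0$ the pair is dependent, via $1\cdot u+0\cdot w=0$. Otherwise we pick an index $i$ with $u_i\neq 0$ and set $\lambda=w_i/u_i$. For each $j\neq i$ the hypothesis $u_iw_j=w_iu_j$ gives $w_j=\lambda u_j$ after dividing by $u_i$. For $j=i$ we have $w_i=\lambda u_i$ by construction. Hence $w=\lambda u$, and $u,w$ are linearly dependent.

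There is no real obstacle here. The only step needing care is the degenerate case $u=0$ (or $w=0$), which the case split handles.

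One remark matters for how the lemma is used afterwards. The conditions are polynomial equations in the entries, so the set of dependent pairs is a Zariski closed set $V$ cut out by the $2\times 2$ minors. Its complement is therefore Zariski open, and this is presumably how the lemma feeds into the construction of $U_\sim$.
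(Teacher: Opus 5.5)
Your proof is correct and complete. Both directions are argued properly, and the case split on $u=0$ handles every degenerate situation, including $n=1$, where the minor conditions are vacuous and your converse argument still gives $w=\lambda u$. The paper states this lemma without proof, treating it as the standard fact that two vectors are dependent exactly when all $2\times 2$ minors of the $n\times 2$ matrix with columns $u,w$ vanish; your direct argument is exactly the routine verification it leaves out, and your closing remark correctly identifies its role in showing $V^{dep}$ is Zariski closed.
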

\begin{defin}
    For $u\in \mathbb{A}^n, w\in \mathbb{A}^n$, let
    $$
    V^{dep}(u, w) = \bigcap_{i\neq j} V(u_i w_j - w_i u_j) \subset \mathbb{A}^{2n}
    $$
    denote the Zariski closed set of ordered pairs of linearly dependent vectors in $\mathbb{A}^n$. Likewise for $u = \begin{bmatrix}
        u_1 \\ \vdots \\ u_n
    \end{bmatrix}$ let
    $$
    V(u) = V(u_1, \ldots, u_n)
    $$
\end{defin}
\begin{lemma}
    Denote
    $$
    \theta = \left(\begin{bmatrix}
        v_1 \\ \vdots \\ v_n
    \end{bmatrix}, \begin{bmatrix}
        a_1^T \\ \vdots \\ a_n^T
    \end{bmatrix}, \begin{bmatrix}
        b_1 \\ \vdots \\ b_n
    \end{bmatrix}, C\right).
    $$
    Then 
    $$
    U_\sim = (V_1 \cup V_2 \cup V_3)^c,
    $$
    where
    $$
    V_1 = \bigcup_{i=1}^n V(v_i)
    $$
    $$
    V_2 = \bigcup_{i\neq j} V^{dep}((a_i, b_i), (a_j, b_j)).
    $$
    $$
    V_3 = \bigcup_{i}V(a_i)
    $$
\end{lemma}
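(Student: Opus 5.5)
The statement is slightly elliptical: it names $U_\sim=(V_1\cup V_2\cup V_3)^c$ as the Zariski open set, announced just before it, on which $\sim$ acts trivially in $H_n\backslash\Omega_{(m,n,1)}$. I would prove two things. First, $U_\sim$ is Zariski open. Second, for $\theta\in U_\sim$ none of the generating relations (1)--(3) of Definition \ref{def:minimalform} applies to any element of $H_n\cdot\theta$, so the class $\mathcal{C}_\theta$ is exactly the orbit $H_n\cdot\theta$.

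Openness is immediate. Each $V(v_i)$ and $V(a_i)$ is cut out by coordinate polynomials. By the preceding lemma, each $V^{dep}((a_i,b_i),(a_j,b_j))$ is the common zero set of the $2\times 2$ minors. A finite union of Zariski closed sets is closed, so $U_\sim$ is open. I would also check that $U_\sim$ is nonempty (for instance, take rows $(a_i,b_i)$ with distinct directions, $a_i\neq 0$, and $v_i\neq0$), so that it is dense.

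Next I would show $U_\sim$ is $H_n$-stable. If $h=lp$ with $l\in D_n^+$ and $p\in S_n$, then $h\cdot\theta$ has entries $v_{p^{-1}(i)}\lambda_i^{-1}$, rows $\lambda_i a_{p^{-1}(i)}$, and biases $\lambda_i b_{p^{-1}(i)}$. Nonvanishing of the $v_i$ and $a_i$ is preserved by this operation. Linear dependence of pairs of rows is preserved under positive rescaling and reindexing, so pairwise linear independence of the $(a_i,b_i)$ is preserved too.

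Then I would check each generator against a parameter in the orbit; it suffices to do this for $\theta$ itself. Relations (1) and (2) require two rows with $(a_1,b_1)=(a_2,b_2)$. This puts the pair in $V^{dep}$, contradicting $\theta\notin V_2$. Relation (3) requires some $a_i=0$, contradicting $\theta\notin V_3$. The $v_1=0$ hypothesis in (1) is also excluded by $V_1$. Hence no generating move can be applied starting from $\theta$. The step I expect to need care is that the equivalence is \emph{generated}, so one must also rule out reaching $\theta$ from some $\theta'\notin U_\sim$ via a move. I would handle this by noting that each generator preserves the row data $(A,b)$ on the surviving rows and leaves the pair condition symmetric. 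Both sides of (1) and (2) contain a zero $v$-entry or a repeated row, and both sides of (3) contain a zero row $a_1$. So if either side of a generating relation lies in $H_n\cdot U_\sim$, we get a contradiction. Consequently every chain of moves starting at $\theta$ stays in $H_n\cdot\theta$, and $\mathcal{C}_\theta=H_n\cdot\theta$.

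Finally, I would note that in $U_\sim$ the stabilizer is trivial by Proposition \ref{thm:stabtheoremmnk}. Here $n'=0$, and no pair of rows is positively proportional. The orbit map is therefore a diffeomorphism $H_n\to H_n\cdot\theta$ by Lemma \ref{lemma:LeeLemmaGSpace}. This is the form in which $U_\sim$ will feed into the subsequent construction with respect to $\equiv$.
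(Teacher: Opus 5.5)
Your proposal is correct and follows the paper's own argument. The paper's proof is just the one-line observation that the locus on which generator (i) of $\sim$ acts nontrivially lies in $V_i$ (a zero $v$-entry, linearly dependent rows, or a zero $a$-row respectively), and you make this more rigorous by also checking Zariski openness, $H_n$-stability of $U_\sim$, and that \emph{both} sides of every generating relation lie outside $U_\sim$, so no chain of moves can leave the orbit (your closing stabilizer remark is extra and not needed for the lemma).
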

\begin{proof}
    Let us denote by $\mathcal{S}(i)$ the set of parameters $\theta\in \Omega_{(m,n,1)}$ such that the equivalence (i) as in Definition \ref{def:minimalform} acts nontrivially on $\theta$ in $H_n\backslash \Omega_{(m,n,1)}$. By their definition, we see that $V_i$ contains $S(i)$.
\end{proof}
It follows immediately that the minimal form of any parameter in $U_\sim$ has no $0$-factors. With this in mind we prove the following statement:
\begin{proposition}\label{prop:mn1genericity}
    Let $\theta=(v, A, b, C)$ be an arbitrary parameter, and denote
    $$
    v = \begin{bmatrix}
        v_1 \\ \vdots\\ v_n
    \end{bmatrix}, A = \begin{bmatrix}
        a_1^T\\ \vdots\\ a_n^T
    \end{bmatrix}, b = \begin{bmatrix}
        b_1 \\ \vdots\\ b_n
    \end{bmatrix}.
    $$
    Let us assume the following conditions: $v_i\neq 0, (a_1, b_1), \ldots, (a_n, b_n)$ pairwise linearly independent, and 
    $$
    \sum_{i=1}^n \beta(i)v_ia_i\neq 0.
    $$
    for all functions $\beta: \{1, \ldots, n\}\rightarrow \{0,  \pm1\}, \beta\neq 0$. Then the only other parameters in $\Omega_{(m,n,1)}$ which realize $v^T\sigma(Ax+b)+C$ are of the form $(vh^{-1}, hA, hb, C)$ where $h\in H_n$.
\end{proposition}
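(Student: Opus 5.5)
The strategy is to compare an arbitrary $\theta'=(v',A',b',C')$ with $\rho_{(m,n,1)}(\theta')=\rho_{(m,n,1)}(\theta)$ to $\theta$ through the parameter $\theta\ominus\theta'\in\Omega_{(m,2n,1)}$, which realizes the zero function. I would then read off the structure of its $0$-factor reduction from Proposition \ref{thm:nontrivialzeroparam}, in the form used in Theorem \ref{thm:symmetryclassificationk=1}.

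First I record what the hypotheses give. Taking $\beta=\pm e_i$ yields $v_ia_i\neq 0$, hence $a_i\neq 0$ for every $i$. Together with $v_i\neq 0$ and pairwise linear independence of the $(a_i,b_i)$, none of the three generating equivalences of $\sim$ in Definition \ref{def:minimalform} applies to $\theta$. So $\theta$ has no $0$-factors, and its minimal form is obtained from $\theta$ by positive rescaling of rows and a permutation, i.e.\ $\theta_r\in H_n\cdot\theta$. By Corollary \ref{cor:minranklinind}, $\theta'$ has no more nonzero effective neurons than $\theta$. Passing to $\theta'_r$, it therefore has at most $n$ rows, which are pairwise linearly semi-independent with nonzero $a'_j$.

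Next I analyse $(\theta_r\ominus\theta'_r)_r$. Every row of $\theta'_r$ that is a positive multiple of some $(a_i,b_i)$ cancels exactly against that row, by Corollary \ref{cor:semidependenttermscancelling}. After these cancellations, what survives is minimal with no $0$-factors and realizes the zero function. By Proposition \ref{thm:nontrivialzeroparam} (with $T=0$), the survivors are either empty or come in pairs $(a,b),(-a,-b)$ with coefficients $+1,-1$ and vanishing sum of the $+$-members. Because the $(a_i,b_i)$ are pairwise linearly independent, no pair consists of two rows of $\theta$. Hence each surviving row of $\theta$ is paired with a distinct row of $\theta'_r$, and each $\theta$-row that cancelled also used up a distinct row of $\theta'_r$.

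The counting step is where I expect the main obstacle. Since $\theta'_r$ has at most $n$ rows and all $n$ rows of $\theta$ consume distinct rows of $\theta'_r$, there are no rows of $\theta'_r$ left to form pairs among themselves. The difficulty is making this bijection rigorous, keeping track of the positive scalings that convert coefficients $v_i$ into $\pm1$ in the reduction. The pair condition ``sum of the $+$-members is $0$'', rewritten in the original scaling, reads $\sum_{i\in P}\beta(i)v_ia_i=0$ for the set $P$ of paired $\theta$-rows, with $\beta(i)=\pm1$ recording the sign. The genericity hypothesis then forces $P=\emptyset$.

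Thus every row of $\theta'$ is a positive multiple $\lambda_i(a_i,b_i)$ of a row of $\theta$, matched bijectively by some permutation, with $v'_{p(i)}=v_i/\lambda_i$ by Corollary \ref{cor:semidependenttermscancelling}. This uses all $n$ neurons of $\theta'$, so $\theta'$ has no extra zero-factor rows. Evaluating the constant terms then gives $C'=C$. This says precisely that $\theta'=(vh^{-1},hA,hb,C)$ with $h=\operatorname{diag}(\lambda)\,p\in H_n$.
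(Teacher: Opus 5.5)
Your proposal is correct and takes essentially the paper's route: form $(\theta\ominus\theta')_r$, cancel the positively proportional rows via Corollary \ref{cor:semidependenttermscancelling}, apply Proposition \ref{thm:nontrivialzeroparam} to obtain $\pm$ pairs, and use the genericity hypothesis to rule those pairs out; your explicit injection of the rows of $\theta$ into the rows of $\theta'_r$ supplies the counting step the paper leaves implicit, namely that no pair can consist of two rows of $\theta'$. One small slip is that Corollary \ref{cor:minranklinind} gives the reverse inequality ($\theta'$ has \emph{at least} as many effective neurons as $\theta$), but the bound you actually use, that $\theta'_r$ has at most $n$ rows, is immediate from $\theta'\in\Omega_{(m,n,1)}$, so nothing breaks.
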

\begin{proof}
    We see by our hypothesis that $(v, A, b, C)$ has no 0-factors, and so it follows that
    $$
    \sum_{i=1}^n \beta(i) a_{i, min} \neq 0,
    $$
    for all functions $\beta: \{1, \ldots, n\}\rightarrow \{0,  \pm 1\}, \beta\neq 0$. So we may assume without loss of generality that $(v, A, b, C)$ is minimal with no 0-factors. Now if $\theta'=(v',A', b', C')$ is another minimal parameter such that $v'^T\sigma (A'x+b')+C'=v^T\sigma (Ax+b)+C$, we see by Corollary \ref{cor:minranklinind} that it must have no 0-factors. Moreover, by definition $\rho_{(m,2n,1)}(\theta\ominus\theta')=0$, which implies $\rho_{(m,n,1)}((\theta\ominus\theta')_r)=0$. By Corollary \ref{cor:semidependenttermscancelling}, we see that the rows of $\left(\begin{bmatrix}
        A \\ A'
    \end{bmatrix}_r, \begin{bmatrix}
        b \\ b'
    \end{bmatrix}_r\right)$ are pairwise linearly semi-independent. By Proposition \ref{thm:nontrivialzeroparam} implies that there exists some indices $\{i_1, \ldots, i_{n'}\}$ and a morphism of sets $\gamma : \{1, \ldots, r\}\rightarrow \{\pm 1\}$ such that 
    $$
    (\theta\ominus \theta')_r = \left(\begin{bmatrix}
        1 \\ \vdots \\ 1 \\ -1 \\ \vdots \\ -1
    \end{bmatrix}, \begin{bmatrix}
        \gamma(1)a_{i_1}^T \\ \vdots \\ \gamma(n')a_{i_{n'}}^T \\ -\gamma(1)a_{i_1}^T \\ \vdots \\ -\gamma(n')a_{i_{n'}}^T
    \end{bmatrix}, \begin{bmatrix}
        \gamma(1)b_{i_1} \\ \vdots \\ \gamma(n')b_{i_{n'}} \\ -\gamma(1)b_{i_1} \\ \vdots \\ -\gamma(n')b_{i_{n'}}
    \end{bmatrix}, C-C'\right) \textup{ or } (0,0,0,0).
    $$
    And so
    \begin{align*}
        \rho_{(m,n,1)}((\theta\ominus \theta')_r) &= \sum_{l=1}^{n'} \gamma(l)(\sigma(a_{i_l}^Tx+b_{i_l}) - \sigma(-a_{i_l}^Tx-b_{i_l}))\\
        &= \sum_{l=1}^{n'} \gamma(l)(a_{i_l}^Tx+b_{i_l})\\
        \intertext{This means there exists some $\beta':\{1, \ldots, n\}\rightarrow \{0, \pm 1\}$ such that}
       0 &= \sum_{i=1}^n \beta'(i)(a_{i}^Tx+b_{i})
    \end{align*}
    However by assumption
    $$
    \sum_{i=1}^n \beta(i) a_{i} \neq 0,
    $$
    for all $\beta:\{1, \ldots, n\}\rightarrow \{0, \pm 1\}, \beta\neq 0$ and so $\beta' =0$. This means that there cannot exist any such indices $\{i_1, \ldots, i_{n'}\}$, and so $(\theta\ominus\theta')_r=(0,0,0,0)$. This means by Lemma \ref{lem:0Hnrelation} that $\theta$ and $\theta'$ must lie in the same $H_n$-orbit.
\end{proof}
\begin{defin}
    For $\beta\in \textup{Hom}(\{1, \ldots, n\}, \{0, \pm 1\})$, define
    $$
    V^a(\beta) = V\left(\sum_{i=1}^n \beta(i)a_i\right)
    $$
\end{defin}
\begin{proof}[Proof of Theorem \ref{thm:ZariskiOpenSet}]
    Let $\mathcal{H}=\Hom(\{1, \ldots, n\}, \{0,\pm 1\})-\{0\}$ Define the following Zariski open set:
    $$
    U = U_4 \cap U_\sim
    $$
    where 
    $$
    U_4 = \left(\bigcup_{\beta\in\mathcal{H}}V(\beta)\right)^c.
    $$
    We see that all parameters in $U$ satisfy the conditions of Proposition \ref{prop:mn1genericity} and so for all $\theta\in U, \rho^{-1}(\theta)$ admits the group structure of $H_n$. Thus $U\subset \Omega_{(m,n,1)}^{(1)}$, as desired.
\end{proof}
\begin{proof}[Proof of Theorem \ref{thm:maintheorem}]
    By Theorem \ref{thm:ZariskiOpenSet}, it just remains to show that $H_n$ is diffeomorphic to $\rho^{-1}_{(m,n,k)}(\theta)$ if the action of $H_n$ is simply transitive. However, supposing $H_n$ acts simply transitively on $\rho^{-1}_{(m,n,k)}(\theta)$, we see that $\rho^{-1}_{(m,n,k)}(\theta)$ is a homogeneous $H_n$-space, and that $\Stab_{H_n}\theta\simeq 1$. So by Lemma \ref{lemma:LeeLemmaGSpace}, $\rho^{-1}_{(m,n,k)}(\theta)\simeq H_n$, as desired.
\end{proof}

\printbibliography
\section{Code to Compute Minimal Form}\label{section:minimalformcode}
\begin{lstlisting}[language=Python]
    from math import sqrt
    #Notably this sign function returns 0 for 0 instead of 1 as might be more standard. 
    def sign(x):
        if x> 0: return 1
        elif x < 0: return -1
        else: return 0

    def isLinDep(x, y):
        PseudoOutput = 0
        l = len(x)
        for i in range(l):
            for j in range(i,l):
                PseudoOutput += abs(x[i]*y[j]-x[j]*y[i])
        if PseudoOutput == 0: return True
        else: return False
        
    def isSemiLinDep(x, y):
        xsign = [sign(a) for a in x ]
        ysign = [sign(a) for a in y]
        if isLinDep(x,y) and xsign == ysign: return True
        else: return False
        
    def inner(x):
        y = 0
        for a in x:
            y += a*a
        return y

    #This takes 4 different inputs but returns one input as a list in the form [v_min, A_min, b_min, c_min]. Notably there is no test for whether the parameter is actually a valid parameter which bears caution.
    def minimalForm(v, A, b, c):
        c_final = c
        n = len(v)
        Az = [0]*len(A[0])
        partitionRange = set(range(n))
        #This loop takes care of equivalences (1) and (3) in in the definition of the minimal form
        for i in range(len(b)):
            if A[i] == Az:
                c_final += v[i]*max(0, b[i])
                partitionRange.remove(i)
            if v[i] == 0:
                partitionRange.remove(i)
        combinedAb = [A[i]+ [b[i]] for i in range(n)]
        nonZeroSemiDepIndices = []

        #This loop determines linearly semi-dependent rows in A
        for i in partitionRange:
            if partitionRange == {}: break
            if i not in partitionRange: continue
            l = [j for j in partitionRange if isSemiLinDep(combinedAb[i],combinedAb[j])]   
            nonZeroSemiDepIndices.append(l)
            partitionRange = partitionRange - set(l)
        r_pos, r_neg, r_zero = [], [], n - len(nonZeroSemiDepIndices)

        #This loop takes care of equivalence (2) in the definition of the minimal form
        for I in nonZeroSemiDepIndices:
            if len(I) > 1:
                w = inner(combinedAb[I[0]])
                s = 1+ sum([(v[I[i]]/v[I[0]])*sqrt(inner(combinedAb[I[i]])/w) for i in range(1, len(I))])
                if v[I[0]]*s> 0:
                    r_pos.append([a*abs(s) for a in combinedAb[I[0]]])
                if v[I[0]]*s< 0:
                    r_neg.append([a*abs(s) for a in combinedAb[I[0]]])
                if s == 0:
                    r_zero += 1
            else:
                if v[I[0]]> 0:
                    r_pos.append([a*abs(v[I[0]]) for a in combinedAb[I[0]]])
                if v[I[0]] < 0:
                    r_neg.append([a*abs(v[I[0]]) for a in combinedAb[I[0]]])
        A_pos, A_neg, b_pos, b_neg = [], [],[],[]
        
        #These loops piece together everything to build the minimal form of the parameter
        for r in sorted(r_pos, reverse=True):
            b_pos.append(r.pop())
            A_pos.append(r)
        for r in sorted(r_neg, reverse=True):
            b_neg.append(r.pop())
            A_neg.append(r)
        v_final = [1]*len(r_pos) + [-1]*len(r_neg) + [0]*r_zero
        A_final = A_pos + A_neg + [Az]*r_zero
        b_final = b_pos + b_neg + [0]*r_zero
        
        return [v_final, A_final, b_final, c_final]
\end{lstlisting}

\end{document}